\providecommand{\Require}{\REQUIRE}
\providecommand{\Ensure}{\ENSURE}
\providecommand{\State}{\STATE}
\providecommand{\Comment}[1]{\COMMENT{#1}}
\providecommand{\Function}[2]{\FUNCTION{#1(#2)}}
\providecommand{\EndFunction}{\ENDFUNCTION}
\providecommand{\Return}{\textbf{return} }
\providecommand{\For}[1]{\FOR{#1}}
\providecommand{\EndFor}{\ENDFOR}
\theoremstyle{plain}
\newtheorem{theorem}{Theorem}[section]
\newtheorem{proposition}[theorem]{Proposition}
\newtheorem{lemma}[theorem]{Lemma}
\newtheorem{corollary}[theorem]{Corollary}
\theoremstyle{definition}
\newtheorem{assumption}[theorem]{Assumption}
\newtheorem{condition}[theorem]{Condition}
\newcommand{\abs}[1]{\left\lvert#1\right\rvert} 
\newcommand{\E}{\mathbb{E}} 
\newcommand{\R}{\mathbb{R}}
\newcommand{\Gspace}{\mathbb{G}} 
\newcommand{\Ptrue}{\mathbb{P}^*} 
\newcommand{\ftrue}{f^*} 
\newcommand{\Gtilde}{\tilde{G}} 
\newcommand{\gnnparams}{\boldsymbol{\theta}}    
\newcommand{\gnnenc}{\Phi_{\gnnparams}}   
\newcommand{\dout}{d_{\mathrm{out}}}    
\newcommand{\mmddist}{d_{\!\scriptscriptstyle{M\!M\!D}}}  
\newcommand{\mmddistsq}{\mmddist^2}                       
\newcommand{\FamKernelEmb}{\mathcal{K}_{\mathrm{emb}}}    
\newcommand{\kernelEmb}{k^{\mathrm{emb}}_{\gamma}}  
\newcommand{\flag}{\mathrm{\textit{flag}}}
\newcommand{\SetGammaEmb}{\Gamma_{\mathrm{emb}}}          
\newcommand{\kdestimate}{\hat{f}} 
\newcommand{\kernelKDE}{K_{\!\scriptscriptstyle{K\!D\!E}}}
\newcommand{\SetHKDE}{H_{\!\scriptscriptstyle{K\!D\!E}}}  
\newcommand{\kdemixweightsparams}{\boldsymbol{\alpha}}  
\newcommand{\kdemixweight}{\pi}   
\newcommand{\kdecomp}{\phi}    
\newcommand{\dint}{d_{\mathrm{int}}}
\newcommand{\gammaTH}{\gamma_{\!\scriptscriptstyle{T\!H}}} 
\newcommand{\lossfunc}{\mathcal{L}}   
\newcommand{\Atilde}{\tilde{\mathbf{A}}} 
\newcommand{\Xtilde}{\tilde{\mathbf{X}}}
\newcommand{\hkde}{h_{k}} 
\newcommand{\hkdej}{h_{j}} 
\newcommand{\kernelKDEBase}{K_0} 
\newcommand{\LipKDE}{L_{\!\scriptscriptstyle{K\!D\!E}}} 
\newcommand{\ConstKDE}{C_{\!\scriptscriptstyle{K\!D\!E}}} 
\definecolor{gold}{rgb}{1.0, 0.87, 0.47}
\definecolor{silver}{rgb}{0.85, 0.85, 0.85}
\definecolor{bronze}{rgb}{0.9, 0.8, 0.7}
\definecolor{rankonebg}{HTML}{FFF4CE}   
\definecolor{ranktwobg}{HTML}{E9EFFA}   
\definecolor{rankthreebg}{HTML}{FDE7E2} 
\newcommand{\rankonebadge}{\textsuperscript{\ding{172}}}
\newcommand{\ranktwobadge}{\textsuperscript{\ding{173}}}
\newcommand{\rankthreebadge}{\textsuperscript{\ding{174}}}
\newcommand{\best}[1]{\cellcolor{rankonebg}{#1}\rankonebadge}
\newcommand{\second}[1]{\cellcolor{ranktwobg}{#1}\ranktwobadge}
\newcommand{\third}[1]{\cellcolor{rankthreebg}{#1}\rankthreebadge}
\newcommand{\capchip}[2]{\begingroup\setlength{\fboxsep}{1pt}\colorbox{#1}{\strut\textsf{#2}}\endgroup}
\newcommand{\capgold}{\capchip{rankonebg}{\tiny{light gold}}}
\newcommand{\capsilver}{\capchip{ranktwobg}{\tiny{light silver}}}
\newcommand{\capbronze}{\capchip{rankthreebg}{\tiny{light bronze}}}
\definecolor{codeblue}{rgb}{0.25, 0.5, 0.75}
\definecolor{backgray}{rgb}{0.95, 0.95, 0.95}
\lstdefinestyle{python}{
    language=Python,
    basicstyle=\ttfamily\footnotesize,
    numbers=right,                    
    numberstyle=\tiny\color{gray},    
    stepnumber=0,                     
    numbersep=0pt,                    
    backgroundcolor=\color{backgray}, 
    keywordstyle=\color{blue},        
    stringstyle=\color{red},          
    commentstyle=\color{codeblue},    
    breaklines=true,                  
    frame=single,                     
    captionpos=t,                      
    tabsize=2
}
\icmltitlerunning{Learnable Kernel Density Estimation for Graphs and Its Application to Graph-Level Anomaly Detection}
\begin{document}

\twocolumn[
  \icmltitle{Learnable Kernel Density Estimation for Graphs and Its Application to Graph-Level Anomaly Detection}

    \begin{icmlauthorlist}
    \icmlauthor{Xudong Wang}{1}
    \icmlauthor{Ziheng Sun}{1}
    \icmlauthor{Chris Ding}{1}
    \icmlauthor{Jicong Fan}{1}
    \end{icmlauthorlist}
    
    \icmlaffiliation{1}{School of Data Science, The Chinese University of Hong Kong, Shenzhen (CUHK-Shenzhen), China}

    \icmlcorrespondingauthor{Jicong Fan}{fanjicong@cuhk.edu.cn}
  \icmlkeywords{Machine Learning, ICML, Graph Neural Networks, GNN, Graph Anomaly Detection, GAD, Kernel Density Estimation, KDE}

  \vskip 0.3in
]
\printAffiliationsAndNotice{}  

\begin{abstract}
This work proposes a framework LGKDE that learns kernel density estimation for graphs. The key challenge in graph density estimation lies in effectively capturing both structural patterns and semantic variations while maintaining theoretical guarantees. Combining graph kernels and kernel density estimation (KDE) is a standard approach to graph density estimation, but has unsatisfactory performance due to the handcrafted and fixed features of kernels. Our method LGKDE leverages graph neural networks to represent each graph as a discrete distribution and utilizes maximum mean discrepancy to learn the graph metric for multi-scale KDE, where all parameters are learned by maximizing the density of graphs relative to the density of their well-designed perturbed counterparts. The perturbations are conducted on both node features and graph spectra, which helps better characterize the boundary of normal density regions. Theoretically, we establish consistency and convergence guarantees for LGKDE, including bounds on the mean integrated squared error, robustness, and generalization. We validate LGKDE by demonstrating its effectiveness in recovering the underlying density of synthetic graph distributions and applying it to graph anomaly detection across diverse benchmark datasets. Extensive empirical evaluation shows that LGKDE demonstrates superior performance compared to state-of-the-art baselines on most benchmark datasets.
\end{abstract}

\section{Introduction}
\vspace{-0.2cm}
Graphs serve as powerful representations for modeling complex relationships and interactions in numerous domains \citep{wu2020comprehensive, errica2020graph, nachman2020anomaly, muzio2020biological, rong2020deep, jin2021graph}. The prevalence of graph-structured data has led to significant advances in graph learning, particularly in tasks such as node classification \citep{kipf2017semi,hamilton2017inductive,wang2026adaptive}, link prediction, and graph classification \citep{xu2019powerful,wang2024graph,wang2025explainable} and clustering \citep{fan2025graph}. 

\vspace{-0.1cm}
In this paper, we tackle the fundamental challenge of \textbf{modeling the probability density function of graph-structured data}, which serves as a cornerstone for identifying anomalous patterns in graph collections \citep{zhao2021using,liu2023signet,cai2025selfdiscriminative}. The significance of this problem spans across numerous real-world applications: from detecting fraudulent communities in social networks \citep{akoglu2015graph, ding2019deep} to identifying rare molecular structures in drug discovery pipelines \citep{ma2021comprehensive,shen2024optimizing}. Beyond these domains, accurate density estimation of graphs has proven invaluable in biological research, particularly in uncovering novel protein structures that could reveal critical biological mechanisms \citep{lanciano2020explainable}. The ubiquity and complexity of these applications underscore the pressing need for effective graph density estimation methods that can capture both structural and semantic patterns while maintaining computational efficiency.

\vspace{-0.1cm}
Traditional approaches to graph density estimation primarily rely on graph kernels combined with kernel density estimation (KDE) \citep{vishwanathan2010graph}. These methods define similarity measures between graphs using various kernels, such as shortest-path kernels \citep{borgwardt2005shortest}, Weisfeiler-Lehman subtree (WL) kernels \citep{shervashidze2011weisfeiler}, and propagation kernel (PK) \citep{neumann2016propagation}. However, these approaches face several limitations:
i) handcrafted graph kernels may fail to capture complex structural patterns; ii) the computational complexity of kernel computation often scales poorly with graph size; and iii) the fixed bandwidth in traditional KDE may not adapt well to the varying scale of graph patterns. 

Figure~\ref{fig:mutag_demo} shows t-SNE~\citep{van2008visualizing} visualizations of kernel matrices computed using different methods on the MUTAG dataset. Traditional graph kernels like WL and PK struggle to effectively separate graphs from different classes while our LGKDE learned kernel achieves clear separation between classes while maintaining smooth transitions in the metric space. This comparison underscores the importance of learning adaptive kernel functions that can go beyond purely structural similarities to capture semantic patterns that distinguish functionally different graphs.

\begin{figure}[ht]
    \centering
    \includegraphics[width= \linewidth]{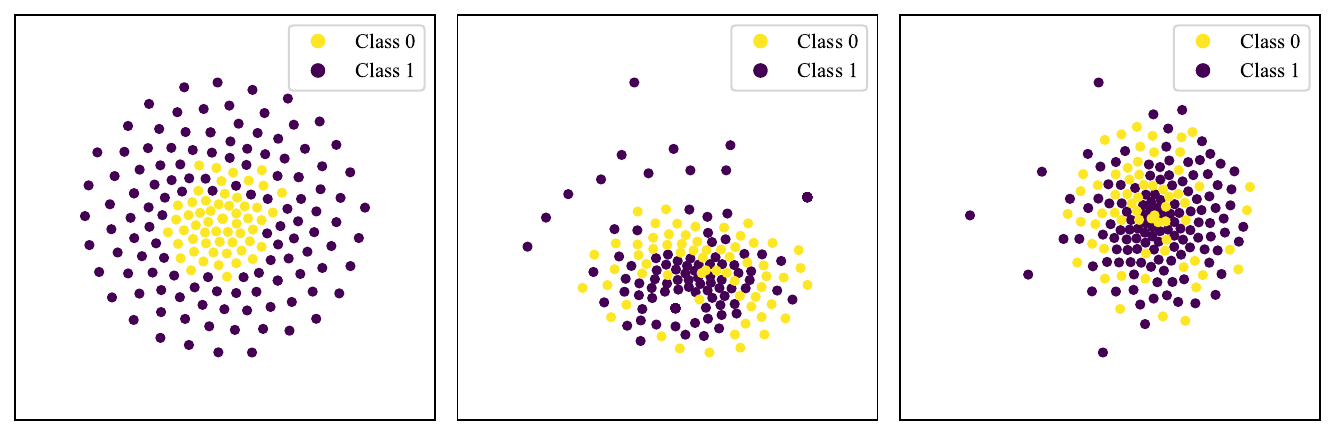}
    \caption{\footnotesize{t-SNE visualization of learned kernel matrix on MUTAG dataset. Left: LGKDE learned, Middle: WL Kernel, Right: Propagation Kernel}}
    \label{fig:mutag_demo}
    \vspace{-5pt}
\end{figure}

More recent methods leverage unsupervised representation learning followed by density estimation. Graph neural networks~(GNNs)~\citep{kipf2017semi,xu2019powerful} have been employed to learn graph embeddings, which are then used for density estimation or anomaly detection. Several end-to-end approaches have been proposed, such as graph variational autoencoders \citep{kipf2016variational}, one-class graph neural networks \citep{zhao2021using}, and contrastive learning-based methods \citep{qiu22raising, ma2022deep}. 

However, these methods also have limitations:\\
i) some one-class objectives impose restrictive geometric constraints on the normal region (e.g., a hypersphere); ii) the learned representations may not preserve important structural information; and iii) the lack of theoretical guarantees makes it difficult to understand their behaviors.
To address the challenges, we propose Learnable Kernel Density Estimation for Graphs~(LGKDE), a principled framework that bridges deep graph representation learning with adaptive kernel density estimation. LGKDE learns a multi-scale density estimator in a deep Maximum Mean Discrepancy~\citep{gretton2012kernel} (MMD) based metric space while simultaneously refining graph embeddings through density contrasting. Our main contributions are as follows:
\begin{itemize}[leftmargin=*,noitemsep,topsep=0pt,parsep=0pt]
\item We propose a novel deep learning method to estimate the densities of graphs. The key technical contribution includes a novel graph density formulation, a novel density contrasting objective, and a structure-aware graph perturbation strategy. Note that the study on density estimation for graphs is rare in the literature. 
\item We establish comprehensive theoretical guarantees for our density estimator, including consistency, convergence, robustness, and generalization, which are supported by extensive empirical validation on synthetic graph distribution recovery and superior performance in graph anomaly detection.
\end{itemize}

\vspace{-0.3cm}
\section{Related Work}
\vspace{-0.1cm}
\subsection{Density Estimation on Graphs}
\vspace{-0.2cm}
Graph density estimation presents unique challenges due to graph structures' discrete and combinatorial nature. Traditional approaches primarily rely on graph kernels \citep{vishwanathan2010graph}, which define similarity measures between graphs through various structural features. Representative examples include the random walk kernel \citep{kashima2003marginalized}, shortest-path kernel \citep{borgwardt2005shortest}, and Weisfeiler-Lehman subtree kernel \citep{shervashidze2011weisfeiler}. These kernels or the embeddings computed from the kernel matrices, combined with kernel density estimation (KDE), provide a principled way to model graph distributions. However, they face several limitations: i) the fixed kernel design may not capture complex structural patterns; ii) the computational complexity typically scales poorly with graph size; and iii) the bandwidth selection in KDE remains challenging for graph-structured data.

\vspace{-0.1cm}
Recent advances in deep learning have inspired new approaches to graph density estimation. Graph variational autoencoders \citep{kipf2016variational} attempt to learn a continuous latent space where density estimation becomes more tractable. Flow-based models \citep{liu2019graph} and energy-based models \citep{du2020energy} offer alternative frameworks for modeling graph distributions. However, these methods often make strong assumptions about the underlying distribution or struggle with the discrete nature of graphs. Moreover, the lack of theoretical guarantees makes it difficult to understand their behavior and limitations~(detailed in Appendix~\ref{app:related_deep_density}), particularly in the context of anomaly or outlier detection \citep{pang2021deep,jin2021anemone,liu2023signet,cai2024lg}.
\vspace{-0.2cm}
\vspace{-0.1cm}
\subsection{Graph Anomaly Detection}
\vspace{-0.2cm}
Graph anomaly detection has attracted significant attention due to its broad applications in network security, fraud detection, and molecular property prediction \citep{akoglu2015graph, ma2021comprehensive}. Early approaches primarily focus on node or edge-level anomalies within a single graph \citep{ding2019deep}, while graph-level anomaly detection presents distinct challenges in characterizing the normality of entire graph structures \citep{zhao2021using}. Recent studies have further expanded this field into graph-level out-of-distribution (OOD) detection, which aims to identify whether a test graph comes from a different distribution than the training data \citep{liu2023good,kim2024rethinking}.

\vspace{-0.1cm}
Recent developments in deep learning have led to several innovative approaches. One prominent direction adapts deep one-class classification frameworks to graphs. For instance, OCGIN \citep{zhao2021using} combines Graph Isomorphism Networks with deep SVDD to learn a hyperspherical decision boundary in the embedding space. OCGTL \citep{qiu22raising} further enhances this approach through neural transformation learning to address the performance flip issue. Another line of research leverages reconstruction-based methods, where graph variational autoencoders \citep{kipf2016variational} or adversarial architectures are employed to learn normal graph patterns.

\vspace{-0.1cm}
Knowledge distillation and contrastive learning have emerged as powerful tools for graph anomaly detection. GLocalKD \citep{ma2022deep} distills knowledge from both global and local perspectives to capture comprehensive normal patterns. Recent works like iGAD \citep{zhang2022dual} propose dual-discriminative approaches combining attribute and structural information, while CVTGAD \citep{li2023cvtgad} employs cross-view training for more robust detection. SIGNET \citep{liu2023signet} proposes a self-interpretable approach by introducing a multi-view subgraph information bottleneck for detecting anomalies. ARMET \citep{li2024crossdomain} studies cross-domain graph-level anomaly detection by transferring labeled source-domain graphs, leveraging supervision from anomaly labels and auxiliary-domain information. In the realm of graph OOD detection, methods such as GOOD-D \citep{liu2023good} and GraphDE \citep{li2022graphde} have been developed to handle distribution shifts in graph data, demonstrating the close connection between anomaly detection and OOD detection tasks. Despite the advances, existing methods face limitations: i) restrictive assumptions about the distribution or geometry of normal graphs (e.g., hyperspherical or Gaussian regions); ii) limited theoretical understanding of their behavior; iii) challenges in handling the heterogeneity and non-IID nature of graph data distributions \citep{kairouz2021advances, xie2021federated,cai2024lg,song2025uniform}.

\vspace{-0.1cm}
\textbf{Bridging Density Estimation and Anomaly Detection.} Anomaly detection serves as a principal evaluation and application approach in the density estimation literature~\citep{parzen1962estimation,beckman1983outlier,barnett1994outliers}. The intrinsic connection between anomaly detection and density estimation lies in the observation that normal patterns typically concentrate in high-density regions of the data distribution. This well-established principle \citep{breunig2000lof, scholkopf2001estimating, kim2012robust} suggests that effective density estimation naturally enables anomaly detection.  However, a critical gap persists at the intersection of density estimation and the learning on graph data: traditional graph density methods (e.g., graph kernels) lack the flexibility to model complex distributions, while modern deep GAD methods often bypass explicit density modeling, sacrificing theoretical guarantees for empirical performance. This highlights \textbf{a compelling need for a framework that unifies the expressive power of deep learning with the principled foundation of density estimation for graphs.}
\vspace{-0.3cm}
\section{Methodology}
\vspace{-0.1cm}
\subsection{Problem Definition}
\vspace{-0.2cm}
Consider a collection of normal graphs $\mathcal{G} = \{G_1, ..., G_N\}$, where each graph $G_i = (V_i, E_i, \mathbf{X}_i)$ consists of a node set $V_i$, an edge set $E_i$, and node features $\mathbf{X}_i \in \mathbb{R}^{|V_i| \times d}$. We denote the adjacency matrix of $G_i$ as $\mathbf{A}_i$. Our goal is to learn a density estimator $f: \mathbb{G} \rightarrow \mathbb{R}_+$ that maps graphs to non-negative density values, capturing the underlying distribution of $\mathcal{G}$. Note that $\mathbb{G}$ denotes the set of all graphs in the form of $G = (V, E, \mathbf{X})$, where the $\mathbf{X}$ are drawn from some continuous distribution. 

The key technical challenges stem from several unique characteristics of graph data:
\begin{itemize}[leftmargin=*,noitemsep,topsep=0pt,parsep=0pt]
\item \textbf{Distribution Complexity}: Graph-structured data exhibits complex patterns at multiple scales and can undergo various types of distributional shifts. These include structural variations (from local substructures to global topological properties), size shifts (varying numbers of nodes and edges), and feature shifts (changes in node attributes), requiring robust and adaptive modeling approaches.
\item \textbf{Isomorphism Invariance}: The density estimator must be invariant to node permutations while remaining sensitive to structural differences that indicate anomalies.
\item \textbf{Limited Supervision}: The unsupervised nature of the problem requires learning meaningful representations and density estimates without access to labeled anomalies. 
\end{itemize}
These challenges motivate our development of LGKDE, \textbf{a scalable kernel density estimation framework that effectively models the distributions of graphs through deep learning.}

\vspace{-0.3cm}
\subsection{Overview of LGKDE Framework}
\vspace{-0.2cm}
Since graphs are non-Euclidean data and often very complex, it is non-trivial to train a deep learning model to estimate the density of graphs when there is no available supervision information. If we directly maximize the density of all graphs, the model will collapse and all graphs will have the same embeddings in the latent space, leading to an identical density for all graphs.
To address the challenges, we propose to perturb the training graphs and train the model under the principle that the density of a perturbed graph is often lower than that of the original graph.
\begin{figure*}[t]
    \centering
    \vspace{-0.2cm}
    \includegraphics[width=0.95\linewidth]{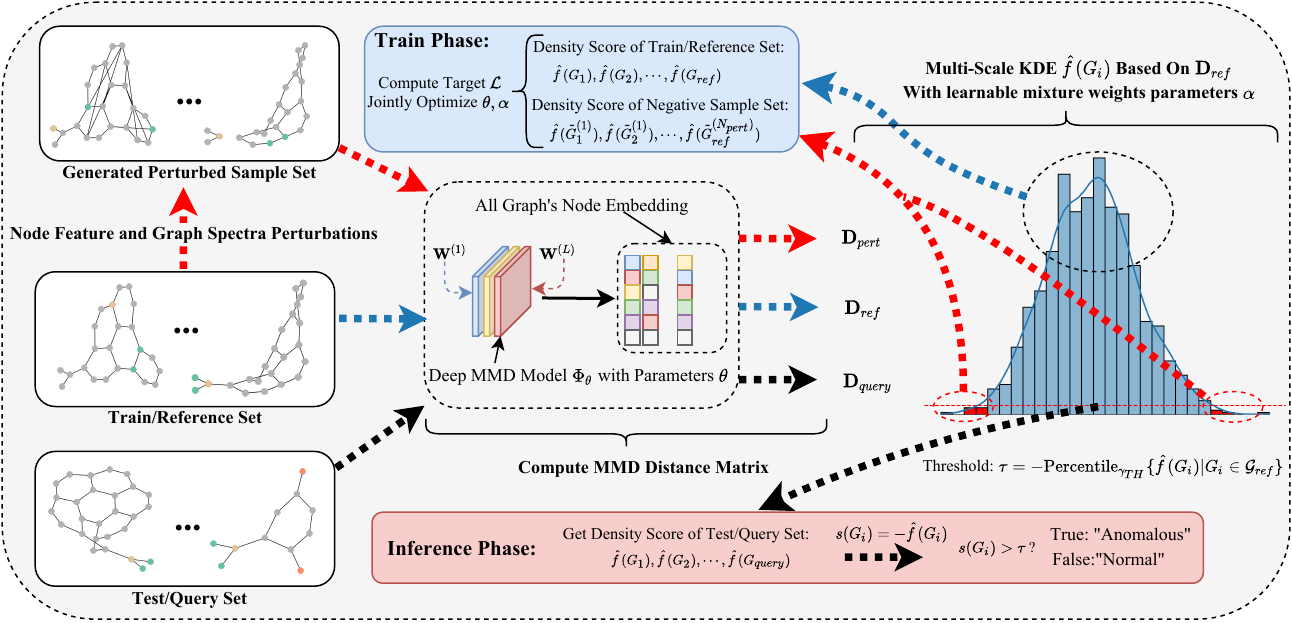}
    \caption{\footnotesize{Framework of our proposed LGKDE}}
    \label{fig:framework}
    \vspace{-0.2cm}
\end{figure*}

\vspace{-0.1cm}
Formally, our framework consists of three main components (Figure~\ref{fig:framework}): (1) a deep graph MMD model that learns meaningful distances between graphs; (2) a multi-scale kernel density estimator with learnable weights that adaptively captures density patterns at different scales; and (3) a structure-aware sample generation mechanism that enables contrastive learning to refine the learned representations. 

\vspace{-0.1cm}
The whole method is formulated as
{
\begin{equation}\label{eqref_problem_0}
    \max_{\gnnparams} ~\sum_{i=1}^N\sum_{j=1}^{N_{\text{pert}}}\frac{p_{\gnnparams}(G_i)-p_{\gnnparams}(\tilde{G}_{i}^{(j)})}{p_{\gnnparams}(G_i)}
\end{equation}}
where $p_{\gnnparams}(G)$ denotes the probability density of a graph $G$, $\gnnparams$ denotes the parameters to learn in our LGKDE model, and $\tilde{G}_{i}^{(j)}$ denotes the $j$th perturbed counterpart of $G_i$. More details about the motivation and rationale of \eqref{eqref_problem_0} will be provided in the following sections.

\vspace{-0.1cm}
\subsection{Learning Graph Density Estimation}

Based on the learned MMD metric space, we propose a learnable density estimation framework that combines structured graph perturbations with adaptive kernel density estimation. Our framework learns all parameters by maximizing the density of normal graphs relative to their perturbed counterparts, enabling effective capture of both structural and semantic patterns.
\vspace{-0.2cm}
\subsubsection{Structure-aware Sample Generation}
\vspace{-0.2cm}
According to \eqref{eqref_problem_0}, we need to devise a reliable strategy to perturb training graphs. Here, we introduce a structure-aware sample generation mechanism. The key idea is to create perturbed versions of normal graphs that preserve essential structural properties while introducing controlled variations. For each normal graph $G$, we generate perturbed samples through two types of perturbations:

\textbf{1) Node Feature Perturbation}: We randomly modify the features of a subset of nodes while preserving the graph structure:
{
\begin{equation}
    \mathbf{X}'_v = \begin{cases}
        \mathbf{X}_{perm(v)} & \text{if } v \in \mathcal{V}_{swap} \\
        \mathbf{X}_v & \text{otherwise}
    \end{cases}
\end{equation}}
where $perm(v)$ is a random permutation and $\mathcal{V}_{swap}$ contains $r_{swap}|V|$ randomly selected nodes.

\begin{figure}[ht]
\centering
\fbox{\includegraphics[width=\dimexpr\linewidth-2\fboxsep-2\fboxrule\relax]{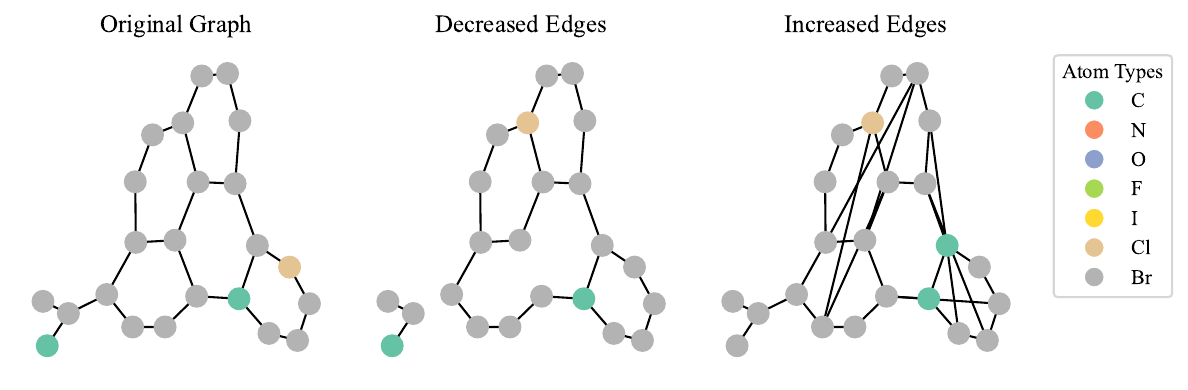}}
\caption{\footnotesize{Demonstration of the node feature and energy-based spectral perturbation on a MUTAG molecule. Left: original graph; Middle: edge removal through high-energy group division; Right: edge addition through low-energy group multiplication. See more case studies and validation in Appendix \ref{appendix:negativesample} and \ref{app:ablation_tau_sensitivity}.}}
\label{fig:negative_sample_mutag_demo}
\vspace{-5pt}
\end{figure}

\textbf{2) Energy-based Spectral Perturbation}: Given an adjacency matrix $\mathbf{A}$, we perform SVD decomposition $\mathbf{A} = \mathbf{U}\mathbf{\Sigma}\mathbf{V}^T$. Let $\{\sigma_i\}_{i=1}^n$ be the singular values in descending order. We define the cumulative energy ratio up to index $k$ as:
{
\begin{equation}\label{eq:energy}
E(k) = \frac{\sum_{i=1}^k \sigma_i^2}{\sum_{i=1}^n \sigma_i^2}
\end{equation}}

Using energy thresholds $\tau_1 = 0.5$ and $\tau_2 = 0.75$, we partition the singular values into three groups:
\begin{equation}
\begin{aligned}
\mathcal{S}_h &= \{\sigma_i : E(i) \leq \tau_1\}\\
\mathcal{S}_m &= \{\sigma_i : \tau_1 < E(i) \leq \tau_2\}\\
\mathcal{S}_l &= \{\sigma_i : E(i) > \tau_2\}
\end{aligned}
\end{equation}

So we can decide the fraction $p_{pert}$ of total singular values to be modified. We compute an adaptive ratio $r = \min(\mu_h/\mu_l, r_{\max})$, where $\mu_h$ and $\mu_l$ are the means of $\mathcal{S}_h$ and $\mathcal{S}_l$ respectively, and $r_{\max}=10$. For edge addition/removal (operation $\flag \in \{1,0\}$), we modify the singular values as:
{
\begin{equation}\label{eq:modifysingular}
\tilde{\sigma}_i = \begin{cases}
\sigma_i/r & \text{if } \sigma_i \in \mathcal{S}_h \text{ and } \flag = 0 \quad(\text{edges removal})\\
r\sigma_i & \text{if } \sigma_i \in \mathcal{S}_l \text{ and } \flag = 1 \quad(\text{edges addition})\\
\sigma_i & \text{otherwise}
\end{cases}
\end{equation}
}
The perturbed adjacency matrix is reconstructed as $\tilde{\mathbf{A}} = \mathbf{U}\tilde{\mathbf{\Sigma}}\mathbf{V}^T$(with the fraction $p_{pert}$ of total singular values is modified). As shown in Figure~\ref{fig:negative_sample_mutag_demo}, this generates structurally meaningful variations while preserving the graph's core topology. Algorithm \ref{alg:structure_aware_negatives} shows the pseudocode. Detailed analysis and study on the energy-based spectral perturbation are provided in the Appendix, including the case studies (Appendix~\ref{appendix:negativesample}), detailed sensitivity ablation on $\tau_1,\tau_2$ (Appendix~\ref{app:ablation_tau_sensitivity}), further validating its advantage in generating higher-quality contrastive counterparts compared with plain random structure permutation.

\subsubsection{Parameterized Distance between Graphs}\label{sec:sub_graph_mmd_metric_learning}
Inspired by~\citep{sun_mmd_2024}, we use a deep graph MMD model to compute meaningful distances between graphs. The key idea is to represent each graph as a distribution over its node embeddings and measure graph similarity through MMD \citep{gretton2012kernel}.

Specifically, given a graph $G_i = (V_i,E_i,\mathbf{X}_i)$ with adjacency matrix $\mathbf{A}_i$ and node feature matrix $\mathbf{X}_i$, we learn node embeddings via a $\mathrm{GNN}$ with parameters $\gnnparams$:
\begin{equation}
   \mathbf{Z}_{i} \;=\; \mathrm{GNN}_{\gnnparams} \bigl(\mathbf{A}_i,\mathbf{X}_i\bigr)
\end{equation}
where $\mathbf{Z}_{i}\in \mathbb{R}^{n_i \times \dout}$, $n_i = |V_i|$, and each row of $\mathbf{Z}_{i}$ is the $\dout$-dimension embedding of a node on $G_i$. Let $\mathbf{z}^{(i)}_{p}$ is the $p$-th row of $\mathbf{Z}_{i}$, the deep MMD distance between graphs \citep{sun_mmd_2024} is
\begin{equation}\label{eq:mmd_distance}
\resizebox{0.99\linewidth}{!}{$\displaystyle
\begin{aligned}
\mmddist(G_i,G_j) = \sup_{\kernelEmb\in \FamKernelEmb} \Big(\frac{1}{n_i^2}\sum_{p,q=1}^{n_i,n_i}& \kernelEmb(\mathbf{z}_p^{(i)}, \mathbf{z}_q^{(i)})~+ \\  \frac{1}{n_j^2}\sum_{p,q=1}^{n_j,n_j} \kernelEmb(\mathbf{z}_p^{(j)}, \mathbf{z}_q^{(j)}) 
- \frac{2}{n_in_j}\sum_{p=1}^{n_i}&\sum_{q=1}^{n_j} \kernelEmb(\mathbf{z}_p^{(i)}, \mathbf{z}_q^{(j)})\Big)^{1/2}
\end{aligned}
$}
\end{equation}

where $\FamKernelEmb$ is a family of characteristic kernels (e.g., Gaussian function) and each instance $\kernelEmb(\cdot, \cdot)$ uses a hyperparameter $\gamma\in\SetGammaEmb:=\{\gamma_1,\ldots,\gamma_{S}\}$. For a Gaussian family, $\kernelEmb(\mathbf{u}, \mathbf{v}) = \exp{(-\gamma_s\|\mathbf{u} - \mathbf{v}\|^2)}$. Intuitively, for each $\gamma$, we compare the distributions of node embeddings in $G_i$ and $G_j$. And the MMD distance takes the supremum over all kernels in $\FamKernelEmb$ to capture multi-scale structure.

This deep graph MMD model serves three crucial purposes: (1) learning structure-aware graph representations; (2) providing a theoretically grounded metric space for density estimation; and (3) enabling end-to-end learning through its fully differentiable architecture.

\subsubsection{Density Learning Framework}
We design an adaptive density estimation framework that jointly learns kernel parameters and density estimates through contrastive optimization. For a graph $G$ in our reference set $\mathcal{G} = \{G_1, \ldots, G_N\}$, we define its density using a multi-scale kernel density estimator:
\begin{equation}
\kdestimate(G) = \sum_{k=1}^{M} \kdemixweight_k(\kdemixweightsparams) \kdecomp_k(G), \quad \kdemixweight_k(\kdemixweightsparams) = \frac{\exp(\alpha_k)}{\sum_{l=1}^{M} \exp(\alpha_l)}
\end{equation}
where each component density $\kdecomp_k(G)$ represents a kernel density estimate at bandwidth $\hkde \in \SetHKDE$, and $\kdemixweight_k(\kdemixweightsparams)$ are learnable mixture weights parameterized by $\kdemixweightsparams$ through a softmax function.
Each component density is computed as:
\begin{equation}
\kdecomp_k(G) = \frac{1}{N} \sum_{i=1}^N \kernelKDE(\mmddist(G, G_i), \hkde), \quad \hkde \in \SetHKDE
\end{equation}
where $\kernelKDE(d, h) = \frac{1}{C_{\dint}h^{\dint}}\kernelKDEBase\left(\frac{d^2}{h^2}\right)$ is a kernel function with bandwidth $h$ and kernel profile $\kernelKDEBase$ and $\dint$ is the intrinsic dimension of the space induced by the input distance metric $d$. For a Gaussian kernel profile, $\kernelKDEBase(t) = e^{-t/2}$ and the normalization constant $C_{\dint} = (2\pi)^{\dint/2}$. Under the learned pairwise MMD distances between graphs, $\dint=1$, making $C_{\dint} = \sqrt{2\pi}$.

The multi-scale design with bandwidths $\SetHKDE = \{\hkde\}_{k=1}^{M}$ enables our model to capture patterns at different granularities, while the learnable weights automatically determine each scale's importance for the dataset. Building on our structure-aware sample generation mechanism, we train our model by maximizing the density ratio between normal graphs and their perturbed counterparts:
\begin{equation}\label{eq:objective}
\min_{\gnnparams, \kdemixweightsparams} \lossfunc := - \sum_{i=1}^N \sum_{j=1}^{N_{\mathrm{pert}}} \frac{\kdestimate(G_i) - \kdestimate(\tilde{G}_{i}^{(j)})}{\kdestimate(G_i)}
\end{equation}
where the objective encourages higher density values for normal graphs compared to their perturbed versions. The parameters $\gnnparams$ control the MMD metric learning while $\kdemixweightsparams$ determines the kernel mixing weights. Note that $\kdestimate$ is a realization of $p_{\boldsymbol{\theta}}$ in \eqref{eqref_problem_0}.
The rationale of \eqref{eq:objective} is that the density of a perturbed graph is often lower than that of the original graph and the model should be able to recognize this nature. Importantly, $\tilde{G}_i^{(j)}$ may not be anomalous and hence cannot be regarded as a negative sample and used in the manner of contrastive learning \citep{you2020graph, xu2021infogcl}. Our objective naturally quantifies the perturbation effect via density instead of assigning the hard anomaly label.  

During inference, we compute anomaly scores as $s(G) = -\kdestimate(G)$, with higher scores indicating higher likelihood of being anomalous. The detection threshold $\tau$ is estimated as the negative $\gammaTH$-th percentile of reference set densities:
{\setlength\abovedisplayskip{3pt}
\setlength\belowdisplayskip{3pt}
\begin{equation}
    \tau = -\text{Percentile}_{\gammaTH} \{\kdestimate(G_i) | G_i \in \mathcal{G}_{ref}\}
\end{equation}}
where $\gammaTH$ controls the expected anomaly rate (empirically, we use $\gammaTH = 0.1$). The complete algorithm is provided in Algorithm~\ref{alg:lgkde_whole} with implementation details in Appendix~\ref{appendix:framework_details}.
\vspace{-0.1cm}

\vspace{-0.2cm}
\subsection{Computational Complexity and Scalability}
\vspace{-0.2cm}
The time complexity of our LGKDE is detailed by Proposition~\ref{thm:complexity_revised} in the Appendix~\ref{app:proof_complexity_revised} and compared with the baselines. LGKDE process per graph's density estimation with $O(L(m d + n d^2) + NSn^2d)$. This complexity is further reduced to $O(L(m d + n d^2) + QSn^2d)$ by the technique introduced in Appendix~\ref{app:sec_fast}, where $Q\ll N$. As a result, LGKDE is scalable to large graph datasets.

\vspace{-0.2cm}
\section{Theoretical Analysis}
\label{sec:theoretical_analysis_revised}

\vspace{-0.2cm}
This section establishes comprehensive theoretical foundations for LGKDE, demonstrating its statistical consistency, convergence properties, robustness guarantees, and generalization. These theoretical results not only validate our algorithmic design choices but also provide insights into why the density-based loss with energy-based perturbations leads to stable and effective learning. Due to space constraints, we present the main results and key insights here, with all proofs, auxiliary lemmas, and extended discussions like generalization bound~(Theorem~\ref{thm:sample_complexity_revised}) provided in the Appendices~\ref{app:proofs_revised}. 

\begin{theorem}[Consistency of LGKDE] \label{thm:consistency_revised}
If the KDE bandwidths satisfy $\hkde \to 0$ and $N \hkde^{\dint} \to \infty$ as $N \to \infty$ for all $k=1,\dots,M$, then $\kdestimate \xrightarrow{p} \ftrue$ in $L_1$ norm.
\end{theorem}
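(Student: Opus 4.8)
The plan is to first reduce the mixture estimator to the single-bandwidth case and then bound the $L_1$ error of each component by a bias--variance decomposition. Since the softmax weights satisfy $\kdemixweight_k(\kdemixweightsparams)\ge 0$ and $\sum_{k=1}^M \kdemixweight_k(\kdemixweightsparams)=1$, the triangle inequality gives
\begin{equation*}
\norm{\kdestimate-\ftrue}_1=\norm{\sum_{k=1}^M \kdemixweight_k(\kdemixweightsparams)\,(\kdecomp_k-\ftrue)}_1\le \sum_{k=1}^M \kdemixweight_k(\kdemixweightsparams)\,\norm{\kdecomp_k-\ftrue}_1,
\end{equation*}
so it suffices to prove $\norm{\kdecomp_k-\ftrue}_1\xrightarrow{p}0$ for each fixed $k$; the convex combination of vanishing terms then vanishes. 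Treating $G_1,\dots,G_N$ as i.i.d.\ draws from $\Ptrue$ with density $\ftrue$ (relative to a fixed base measure on $\Gspace$ and with the $\mmddist$ metric held fixed), I would regard $\kdecomp_k$ as a kernel density estimator on a metric--measure space of intrinsic dimension $\dint$ and split
\begin{equation*}
\norm{\kdecomp_k-\ftrue}_1\le \norm{\E[\kdecomp_k]-\ftrue}_1+\norm{\kdecomp_k-\E[\kdecomp_k]}_1,
\end{equation*}
handling the deterministic bias and the stochastic fluctuation separately.

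For the bias, I would write $\E[\kdecomp_k](G)=\int_{\Gspace}\kernelKDE(\mmddist(G,G'),\hkde)\,\ftrue(G')\,dG'$ and argue that the family $\{\kernelKDE(\cdot,\hkde)\}$ is an approximate identity in $L_1$ as $\hkde\to 0$. The normalization $\kernelKDE(d,h)=\tfrac{1}{C_{\dint}h^{\dint}}\kernelKDEBase(d^2/h^2)$ makes the kernel integrate to unity over balls whose volume scales like $h^{\dint}$, and the rapidly decaying profile $\kernelKDEBase(t)=e^{-t/2}$ forces the kernel mass to concentrate on a shrinking neighborhood of $G$. Combining this with the $L_1$ regularity of $\ftrue$ (integrability and almost-everywhere continuity) via a Lebesgue-differentiation argument yields $\norm{\E[\kdecomp_k]-\ftrue}_1\to 0$ as $\hkde\to 0$, independently of $N$.

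For the stochastic term, I would exploit that $\kdecomp_k(G)$ is an average of $N$ i.i.d.\ bounded contributions, so its pointwise variance equals $\tfrac1N\mathrm{Var}\!\big(\kernelKDE(\mmddist(G,G_1),\hkde)\big)=O\!\big(1/(N\hkde^{\dint})\big)$, the extra $h^{-\dint}$ coming from the kernel's second moment. Passing from pointwise $L_2$ control to the $L_1$ norm by Cauchy--Schwarz gives $\E\,\norm{\kdecomp_k-\E[\kdecomp_k]}_1\le\int_{\Gspace}\sqrt{\mathrm{Var}(\kdecomp_k(G))}\,dG=O\!\big(1/\sqrt{N\hkde^{\dint}}\big)$, which vanishes exactly under $N\hkde^{\dint}\to\infty$; convergence in probability of the $L_1$ fluctuation then follows from Markov's inequality.

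The hard part will be the bias analysis in the abstract $\mmddist$-induced space rather than in $\R^{\dint}$: I must justify that $\kernelKDE$ acts as a genuine approximate identity against the base measure on $\Gspace$, which requires a volume-regularity (Ahlfors-type) condition ensuring metric balls of radius $h$ have measure comparable to $h^{\dint}$, together with enough regularity of $\ftrue$ to run Lebesgue differentiation in this permutation-invariant, non-Euclidean setting. The reduction $\dint=1$ under the learned pairwise MMD distances simplifies the volume bookkeeping, but the essential task remains transferring the classical Devroye--Györfi $L_1$-consistency argument to this metric--measure space; by comparison, once the $h^{-\dint}$ variance scaling is in place, the stochastic term is routine.
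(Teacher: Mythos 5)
Your proposal is sound and its variance half coincides with the paper's argument (i.i.d.\ pointwise variance of order $1/(N\hkde^{\dint})$, then passage to an integrated bound and Markov's inequality), but your bias analysis takes a genuinely different route. The paper fixes the learned parameters, works pointwise, and performs a second-order Taylor expansion of $\ftrue$ in the Riemannian manifold induced by $\mmddist$ (via the inverse exponential map and the Laplace--Beltrami operator), explicitly invoking its Assumption that $\ftrue$ has bounded second derivatives; this yields a quantitative pointwise bias of order $\hkde^2$, which is then reused verbatim to derive the MISE rate in the convergence-rate theorem. You instead run the classical Devroye--Gy\"orfi $L_1$ argument: treat $\{\kernelKDE(\cdot,\hkde)\}$ as an approximate identity and use Lebesgue differentiation, which needs only integrability and a.e.\ continuity of $\ftrue$ rather than $\beta=2$ smoothness. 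That is weaker in hypotheses but only qualitative, so it would not feed into the rate result. Your honest flagging of the Ahlfors-type volume-regularity requirement (metric balls of radius $h$ having base measure comparable to $h^{\dint}$) is well placed: the paper needs exactly the same property when it asserts $\int \kernelKDE(d^*(G,G'),h)\,d\mu(G')\to 1$ and when it claims $\int \kernelKDE^2\,d\mu \approx R(\kernelKDEBase)/h^{\dint}$, yet it never isolates or verifies this condition either, so neither proof is fully rigorous on this point --- yours merely makes the gap explicit. Finally, your up-front triangle-inequality reduction of the mixture to single-bandwidth components (using $\kdemixweight_k \ge 0$, $\sum_k \kdemixweight_k = 1$) is a cleaner bookkeeping device than the paper's choice to carry the weighted sum through both the bias and variance computations, and it makes transparent that the result holds for any convex weights, not just the optimized ones.
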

This result (Proof in Appendix~\ref{app:proof_consistency_revised}) establishes that LGKDE converges to the true underlying graph density function $\ftrue$ as the number of samples increases. We further quantify the rate of convergence:
\begin{theorem}[Convergence Rate] \label{thm:convergence_rate_revised}
Under the conditions of Theorem \ref{thm:consistency_revised}, the Mean Integrated Squared Error (MISE) of LGKDE with optimal bandwidths $h^* \sim N^{-1/(4+\dint)}$ achieves:
\begin{equation}
    \E \int_{\Gspace} (\kdestimate^*(G) - \ftrue(G))^2 d\Ptrue(G) = O(N^{-\frac{4}{4+\dint}})
\end{equation}
\end{theorem}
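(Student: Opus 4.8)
The plan is to bound the MISE via the classical bias--variance decomposition, but carried out in the learned MMD metric space rather than in Euclidean space. I would begin by conditioning on the trained parameters $\gnnparams$ and $\kdemixweightsparams$, which fixes the metric $\mmddist$ and the mixture weights $\kdemixweight_k(\kdemixweightsparams)$; the estimator $\kdestimate$ then reduces to a deterministic multi-scale KDE on the metric space $(\Gspace,\mmddist)$ driven by $N$ i.i.d. samples $G_i\sim\Ptrue$. The two standing assumptions needed are (i) a volume-growth / Ahlfors $\dint$-regularity condition on $\Ptrue$, so that metric balls of radius $h$ carry mass of order $h^{\dint}$, which is precisely what causes the intrinsic dimension $\dint$ to surface in the rate; and (ii) a smoothness condition on $\ftrue$ (second-order Hölder continuity with respect to $\mmddist$), matched by a second-order kernel profile $\kernelKDEBase$ for $\kernelKDE$ (vanishing first moment, finite second moment).

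Second, I would analyse a single scale $\kdecomp_k$ through the pointwise decomposition $\E\,(\kdecomp_k(G)-\ftrue(G))^2 = \mathrm{Bias}^2(\kdecomp_k(G)) + \mathrm{Var}(\kdecomp_k(G))$ and integrate over $\Gspace$ against $\Ptrue$. For the bias, a Taylor-type expansion of $\ftrue$ around $G$ in the metric, combined with the vanishing first moment and the $\dint$-regular volume element, yields a pointwise bias of order $\hkde^2$, hence integrated squared bias $O(\hkde^4)$. For the variance, since $\kdecomp_k$ is an average of $N$ i.i.d. bounded kernel evaluations each normalised by $C_{\dint}\hkde^{\dint}$, the standard bound $\mathrm{Var}\le \tfrac{1}{N}\E[\kernelKDE^2]$ together with $\int \kernelKDE^2 \asymp \hkde^{-\dint}$ gives integrated variance $O\!\big(1/(N\hkde^{\dint})\big)$. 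Summing, the single-scale MISE is $O\!\big(\hkde^4 + 1/(N\hkde^{\dint})\big)$.

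Third, I would lift this to the learnable mixture. Because the weights are a softmax and hence convex ($\sum_k \kdemixweight_k = 1$, $\kdemixweight_k\ge 0$), writing $\kdestimate-\ftrue=\sum_k \kdemixweight_k(\kdecomp_k-\ftrue)$ and applying Jensen's inequality to the convex map $v\mapsto v^2$ gives $\E\int(\kdestimate-\ftrue)^2\,d\Ptrue \le \sum_k \kdemixweight_k\,\mathrm{MISE}(\kdecomp_k)$. Choosing every bandwidth at the balancing scale $h^*\sim N^{-1/(4+\dint)}$ --- which equates the $h^4$ and $1/(N h^{\dint})$ terms via $h^{4+\dint}\sim 1/N$ --- makes each component MISE $O(N^{-4/(4+\dint)})$, and since the weights sum to one, the mixture inherits the same rate, as claimed.

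The main obstacle will be making the bias step rigorous in the non-Euclidean MMD geometry: the usual Taylor argument presupposes a smooth Euclidean volume element, whereas here both the local variation of $\ftrue$ and the local mass of $\Ptrue$ must be controlled purely through the metric $\mmddist$ and the $\dint$-regularity assumption. A secondary subtlety is that the same data are used to fit $\gnnparams$ and to form $\kdestimate$; I would sidestep this by conditioning on the learned metric and treating density estimation as a pure statistical problem on a fixed $(\Gspace,\mmddist)$, noting that a fully joint treatment would require coupling the metric-learning generalization bound (Theorem~\ref{thm:sample_complexity_revised}) with the KDE error above.
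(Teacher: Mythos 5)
Your proposal is correct and follows the same skeleton as the paper's proof: condition on the learned parameters $\gnnparams, \kdemixweightsparams$ so the problem reduces to classical KDE on the fixed metric space $(\Gspace, \mmddist)$, perform a bias--variance decomposition, obtain pointwise bias $O(\hkde^2)$ from a second-order Taylor expansion of $\ftrue$ in the metric, obtain integrated variance $O(1/(N\hkde^{\dint}))$ from the i.i.d.\ structure and $\int \kernelKDE^2 \asymp \hkde^{-\dint}$, and balance the two terms at $h^* \sim N^{-1/(4+\dint)}$. Where you genuinely differ is in how the multi-scale mixture is aggregated and in what is assumed explicitly. You write $\kdestimate - \ftrue = \sum_k \kdemixweight_k(\kdecomp_k - \ftrue)$ and apply Jensen's inequality to get $\mathrm{MISE}(\kdestimate) \le \sum_k \kdemixweight_k\,\mathrm{MISE}(\kdecomp_k)$, so that setting every bandwidth to the balancing scale immediately transfers the single-scale rate to the mixture as a clean inequality. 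The paper instead expands the square of the weighted sum directly: it forms an effective squared bandwidth $h_{avg}^2 = \sum_k \pi_k^* \hkde^2$ for the integrated squared bias, keeps the cross terms $\sum_{k,j}\pi_k^*\pi_j^*\,\E[K_{\hkde}K_{\hkdej}]$ in the integrated variance before arguing the diagonal terms dominate, and then heuristically collapses everything to "a single effective bandwidth $h$" before optimizing --- your Jensen route avoids both the cross-term bookkeeping and that heuristic collapse. Second, you posit Ahlfors $\dint$-regularity of $\Ptrue$ as an explicit hypothesis; the paper never states a volume-growth condition and instead lets it enter silently through the unproven approximations $\int K_h\,d\mu \to 1$ and $\int K_h^2\,d\mu \approx R(\kernelKDEBase)/h^{\dint}$, supplemented by an informal discussion of why $\dint = 1$ for a KDE driven by scalar MMD distances (yielding the advertised $O(N^{-0.8})$ specialization, which your write-up does not record). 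Finally, your closing caveat about the same sample being used to train the metric and to build the estimator names explicitly the issue the paper handles only implicitly by fixing "optimal" parameters $\gnnparams^*, \kdemixweightsparams^*$; a fully joint treatment indeed remains open in both versions.
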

This rate matches the minimax optimal rate for nonparametric density estimation in $\dint$ dimensions, when utilizing the pairwise MMD distance for KDE, \textit{MISE} bounded to $O(N^{-0.8})$ for $\dint = 1$, demonstrating the statistical efficiency. Detailed proof and discussion in Appendix \ref{app:proof_convergence_rate_revised}. 

Robustness ensures that LGKDE's density estimate $\kdestimate$ is stable against small changes in input graphs. Furthermore, our proposed novel energy-based perturbation strategy and the density learning target are theoretically justified by the following robustness analysis. Detailed in Appendix~\ref{app:proof_robustness}, we introduce our three main results as follows.

\begin{theorem}[Robustness of KDE to Metric Perturbations] \label{thm:robustness_metric_revised_simple}
Suppose $K_0$ is the Gaussian kernel and let $d_{12}(G_1,G_2) = \mmddist|_{\gnnparams}(G_1, G_2)$ and $c_h=\min_kh_k$. For any two graphs $G_1, G_2 \in \Gspace$, it holds that
\begin{equation}
    |\kdestimate(G_1) - \kdestimate(G_2)| \le \frac{1}{\sqrt{2e\pi}c_h^2}d_{12}(G_1,G_2)
\end{equation}
\end{theorem}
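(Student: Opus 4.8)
The plan is to prove a global Lipschitz bound by decomposing $\kdestimate$ into its scale components and controlling each Gaussian kernel term through the metric property of the MMD distance. First I would exploit that, for every fixed reference graph $G_i$, the MMD distance obeys the reverse triangle inequality $\abs{\mmddist(G_1,G_i)-\mmddist(G_2,G_i)} \le \mmddist(G_1,G_2) = d_{12}$. This holds because $\mmddist$ satisfies the triangle inequality: each single-kernel MMD with a characteristic kernel is a metric, and taking the supremum over the kernel family $\FamKernelEmb$ preserves the triangle inequality (a supremum of quantities each obeying $d_\gamma(x,z)\le d_\gamma(x,y)+d_\gamma(y,z)$ still obeys it after taking sup). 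This reduces the entire problem to understanding how a kernel value changes when its distance argument moves by at most $d_{12}$.

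Next I would quantify the sensitivity of a single Gaussian kernel term to its distance argument. Writing $g(d) = \kernelKDE(d,h) = \frac{1}{\sqrt{2\pi}\,h}\kernelKDEBase(d^2/h^2)$ with $\kernelKDEBase(t)=e^{-t/2}$ and $\dint=1$, I would differentiate to obtain $g'(d) = -\frac{d}{\sqrt{2\pi}\,h^3}e^{-d^2/(2h^2)}$ and then maximize $\abs{g'(d)}$ over $d\ge 0$. Substituting $u=d/h$ reduces this to maximizing $u\,e^{-u^2/2}$, whose unique maximizer is $u=1$, i.e. $d=h$, yielding $\max_{d\ge 0}\abs{g'(d)} = \frac{1}{\sqrt{2\pi e}\,h^2}$. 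By the mean value theorem this gives $\abs{g(d_1)-g(d_2)} \le \frac{1}{\sqrt{2\pi e}\,h^2}\abs{d_1-d_2}$, so each kernel term is $\frac{1}{\sqrt{2\pi e}\,h^2}$-Lipschitz in its distance input.

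Finally I would assemble the bound. Applying the triangle inequality across the $N$ reference graphs and the reverse triangle inequality above shows each component satisfies $\abs{\kdecomp_k(G_1)-\kdecomp_k(G_2)} \le \frac{1}{\sqrt{2\pi e}\,\hkde^2}\,d_{12}$, since the $1/N$ averaging leaves the Lipschitz constant unchanged. Summing over the mixture with weights $\kdemixweight_k(\kdemixweightsparams)$, using $\hkde \ge c_h$ so that $1/\hkde^2 \le 1/c_h^2$, and using that the softmax weights sum to one, yields the claimed constant $\frac{1}{\sqrt{2e\pi}\,c_h^2}$. The only genuinely delicate step is the exact Lipschitz computation for the Gaussian profile: obtaining the sharp constant $\frac{1}{\sqrt{2\pi e}}$ rather than a loose bound requires locating the maximizer $d=h$ precisely and evaluating $u\,e^{-u^2/2}$ there; everything else follows routinely from the triangle inequality structure of $\mmddist$ and the convexity of the mixture weights.
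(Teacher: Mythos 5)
Your proposal is correct and follows essentially the same route as the paper's proof: the reverse triangle inequality for the learned MMD metric reduces everything to the sensitivity of the Gaussian kernel in its distance argument, a sharp Lipschitz constant $\frac{1}{\sqrt{2\pi e}\,h^2}$ is extracted (your direct maximization of $\lvert g'(d)\rvert$ at $d=h$ is the same calculus as the paper's use of the $\sqrt{2/e}$-Lipschitzness of $e^{-x^2}$ after the substitution $x = d/(\sqrt{2}h)$), and the final bound follows from $h_k \ge c_h$ together with $\sum_k \pi_k = 1$. Your explicit justification that the supremum over the kernel family preserves the triangle inequality is a detail the paper leaves implicit, but the argument and the resulting constant are identical.
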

Theorem~\ref{thm:robustness_metric_revised_simple} shows that when two graphs are close in terms of the learned MMD metric, their densities are similar, provided that the least bandwidth of the Gaussian kernel is not too small. Therefore, in the experiments, we don't use a very small $h_k$ and take a set of five logarithmically spaced bandwidths $\SetHKDE = \{\hkde\}_{k=1}^{M} = \{10^{-2}, 10^{-1}, 10^0, 10^1, 10^2\}$.

\begin{theorem}[Robustness of MMD to Graph Perturbations] \label{prop:robustness_mmd_graph_revised_new}
Suppose $G=(\mathbf{A},\mathbf{X})$ is perturbed to $\Gtilde=(\Atilde, \Xtilde)$, where $\tilde{\mathbf{X}}-\mathbf{X}=\Delta_{\mathbf{X}}$ and $\tilde{\mathbf{A}} - \mathbf{A}=\Delta_{\mathbf{A}}$. Let $\kappa=\min(1^\top\Delta_{\mathbf{A}})$, $\alpha$ be the maximum node degree, and $n$ be the number of nodes. Suppose the GNN has $L$ layers, the weight matrix of each layer is $W_{l}$, $l\in[L]$, and all activation functions are $1$-Lipschitz continuous. Then
\begin{equation}
\resizebox{0.99\linewidth}{!}{$\displaystyle
    \mmddist|_{\gnnparams}(G, \Gtilde) \le \left(4\bar{\gamma}^2 + 2\epsilon^4\bar{\gamma}^4\right) \left(2\Delta_{G}^4+n\Delta_{G}^2 + \frac{\epsilon}{\sqrt{n}}\Delta_{G}\right)$}
\end{equation}
where $\epsilon=2(1+\alpha)^{-L}(1 + \|\mathbf{A}\|_2)^L\|\mathbf{X}\|_2$, $\bar{\gamma}=\max_{\gamma\in{\Gamma_{emb}}}\gamma$, and
\[
\resizebox{0.99\linewidth}{!}{$\displaystyle
\begin{aligned}
& \Delta_G
= \left(\frac{1}{1+\alpha+\kappa}\right)^{\!L}
   \prod_{l=1}^{L}\|\mathbf{W}_l\|_2
   \Bigl(
   (\|\mathbf{A}\|_2+\|\Delta_{\mathbf{A}}\|_2)^{\!L}\|\Delta_{\mathbf{X}}\|_F
\\
&\quad
   + \|\mathbf{X}\|_F
     \sum_{l=0}^{L-1}\|\mathbf{A}\|_2^{\,l}
     \left(
       \sqrt{(\|\mathbf{A}\|_2+\|\Delta_{\mathbf{A}}\|_2)^2-\|\mathbf{A}\|_2^{2}}
     \right)^{\!L-l}
   \Bigr).
\end{aligned}
$}
\]
\end{theorem}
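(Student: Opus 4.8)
The plan is to split the inequality into two independent estimates that are then composed: a forward-stability bound for the GNN that controls (a) the scale of the node embeddings, captured by $\epsilon$, and (b) the size of the embedding perturbation $\|\tilde{\mathbf Z}-\mathbf Z\|_F$, captured by $\Delta_G$; and a kernel-smoothness bound that turns these two quantities into a bound on the (squared) MMD. Since both the feature swap and the spectral edit preserve the node count, $G$ and $\Gtilde$ share the same $n$, so throughout the MMD reduces to the single-$n$ form and I only need to track the per-node displacements $\boldsymbol\delta_p=\tilde{\mathbf z}_p-\mathbf z_p$, with $\|\tilde{\mathbf Z}-\mathbf Z\|_F^2=\sum_p\|\boldsymbol\delta_p\|^2$.

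For the forward-stability step I would write each layer as $\mathbf H^{(l+1)}=\sigma(\hat{\mathbf A}\,\mathbf H^{(l)}\,\mathbf W_l)$, with $\hat{\mathbf A}$ the self-looped degree-normalized propagation operator, and prove two recursions by induction on $l$, using $1$-Lipschitzness of $\sigma$ with $\sigma(\mathbf 0)=\mathbf 0$. A pure norm recursion contributes a factor of order $(1+\alpha)^{-1}(1+\|\mathbf A\|_2)$ at each of the $L$ layers; applying the triangle inequality to pass from the single-node magnitude to a pairwise one explains the factor $2$ and yields the bound $\|\mathbf z_p-\mathbf z_q\|\le\epsilon$. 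A perturbation recursion for $\tilde{\mathbf H}^{(l+1)}-\mathbf H^{(l+1)}$ splits the layer increment into a part propagated from the previous error $\tilde{\mathbf H}^{(l)}-\mathbf H^{(l)}$ and a part created afresh by the perturbation of the propagation operator; unrolling this recursion gives $\|\tilde{\mathbf Z}-\mathbf Z\|_F\le\Delta_G$. The subtle point is that editing $\mathbf A$ perturbs both the aggregation numerator and the degree denominator simultaneously: the worst-case change of the normalization forces the factor $(1+\alpha+\kappa)^{-L}$ with $\kappa=\min(\mathbf 1^\top\Delta_{\mathbf A})$, while isolating the ``excess'' spectral mass injected at depth $l$ and letting it grow through the remaining layers produces the per-layer summand $\|\mathbf A\|_2^{\,l}(\sqrt{(\|\mathbf A\|_2+\|\Delta_{\mathbf A}\|_2)^2-\|\mathbf A\|_2^{2}})^{L-l}$.

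For the kernel step I would start from the mean-embedding identity $\mmddistsq(G,\Gtilde)=\frac1{n^2}\sum_{p,q}[\kernelEmb(\mathbf z_p,\mathbf z_q)+\kernelEmb(\tilde{\mathbf z}_p,\tilde{\mathbf z}_q)-2\kernelEmb(\mathbf z_p,\tilde{\mathbf z}_q)]$. Writing the Gaussian profile as $e^{-\gamma t}$ and Taylor-expanding to second order gives $|e^{-\gamma a}-e^{-\gamma b}|\le\gamma|a-b|+\tfrac{\gamma^2}{2}(a-b)^2$; substituting the squared-distance increment $\|\mathbf z_p-\mathbf z_q\|^2-\|\tilde{\mathbf z}_p-\tilde{\mathbf z}_q\|^2=-2\langle\mathbf z_p-\mathbf z_q,\boldsymbol\delta_p-\boldsymbol\delta_q\rangle-\|\boldsymbol\delta_p-\boldsymbol\delta_q\|^2$ and using $\|\mathbf z_p-\mathbf z_q\|\le\epsilon$ shows each summand is governed by the local scales $\gamma\epsilon\|\boldsymbol\delta\|$ and $\gamma\|\boldsymbol\delta\|^2$. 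The first-order part carries $\bar\gamma$ while the second-order (squared) part carries $\gamma^2\epsilon^2$, and the bilinear structure of $\mmddistsq$ in the feature differences — bounded by products of two single-difference estimates — is what squares the scale $\gamma\epsilon^2$ into the $\epsilon^4\bar\gamma^4$ contribution. Summing over the $n^2$ pairs and applying Cauchy--Schwarz to pass from $\sum_p\|\boldsymbol\delta_p\|$ and $\sum_p\|\boldsymbol\delta_p\|^2$ to $\|\tilde{\mathbf Z}-\mathbf Z\|_F\le\Delta_G$ collects the three monomials $\Delta_G^4$, $n\Delta_G^2$, and $\tfrac{\epsilon}{\sqrt n}\Delta_G$; finally taking the supremum over $\gamma\in\Gamma_{\mathrm{emb}}$ replaces every $\gamma$ by $\bar\gamma$ and assembles the coefficient $4\bar\gamma^2+2\epsilon^4\bar\gamma^4$.

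I expect the forward-stability recursion to be the main obstacle: reproducing the exact $\Delta_G$ requires cleanly separating the feature-driven and structure-driven contributions at every depth, correctly accounting for how the perturbed operator's excess norm compounds with the unperturbed growth over the remaining $L-l$ layers, and tracking the worst-case degree renormalization through $\kappa$. The kernel step is conceptually routine given standard Gaussian smoothness estimates, but it still demands careful bookkeeping of the $1/n$ and $1/\sqrt n$ factors when converting pairwise sums into Frobenius norms, so that the $n\Delta_G^2$ and $\epsilon/\sqrt n$ scalings — and the precise constants — emerge correctly rather than with spurious powers of $n$.
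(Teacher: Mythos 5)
Your proposal and the paper's proof take entirely different routes, and the comparison matters here: the paper does none of the analytic work you outline. Its proof is a two-line specialization of an already-published robustness bound for the deep MMD graph kernel (Proposition~\ref{robust}, quoted from \citep{sun_mmd_2024}): one graph is left unperturbed ($\tilde{G}_1=G_1=G$) while the other absorbs the full perturbation, so $\Delta_{G_1}=0$ and the baseline term $\hat{d}^2_{\mathcal{K}}(G,G)=0$ vanishes; the stated inequality then follows by relabeling constants ($\rho=1$, $h\mapsto 1/\bar{\gamma}$, $\beta_A\mapsto\|\mathbf{A}\|_2$, $\eta\mapsto\|\mathbf{X}\|_F$). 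Your plan is, in effect, to re-prove that imported lemma from scratch. The two-stage skeleton you chose (layer-wise forward stability of the GNN, then Gaussian smoothness of the MMD V-statistic) is the right shape for such a re-derivation, but as written the plan does not close, for two concrete reasons.

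First, the intermediate bounds you assert cannot be extracted from the hypotheses the theorem states. A norm recursion through $\mathbf{H}^{(l)}=\sigma(\hat{\mathbf{A}}\mathbf{H}^{(l-1)}\mathbf{W}_l)$ necessarily carries the factor $\prod_{l}\|\mathbf{W}_l\|_2$, yet the stated $\epsilon=2(1+\alpha)^{-L}(1+\|\mathbf{A}\|_2)^L\|\mathbf{X}\|_2$ contains no weight norms at all; likewise the $\|\mathbf{X}\|_F$ inside $\Delta_G$ stands in for a bound $\eta$ on the \emph{embedding} norm $\|\mathbf{Z}\|_F$ in the source proposition, and the degree-normalization argument needs $\alpha$ to be the \emph{minimum} degree (the theorem restates it as the maximum). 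These constants are only reachable under the source's extra hypotheses (bounded $\|\mathbf{A}\|_2$, $\|\mathbf{X}\|_2$, $\|\mathbf{Z}\|_F$), which you would have to import; "1-Lipschitz activations" alone will not produce them. Second, your kernel step, executed the natural way, yields a bound of a different form from the claimed one: Lipschitzness of $t\mapsto e^{-\gamma t}$ together with $\bigl|\|\mathbf{z}_p-\mathbf{z}_q\|^2-\|\mathbf{z}_p-\tilde{\mathbf{z}}_q\|^2\bigr|\le(2\epsilon+\|\boldsymbol{\delta}_q\|)\|\boldsymbol{\delta}_q\|$ and Cauchy--Schwarz gives something on the order of $\frac{\gamma\epsilon}{\sqrt{n}}\Delta_G+\frac{\gamma}{n}\Delta_G^2$, with no mechanism that generates the $2\Delta_G^4$ term, the $n\Delta_G^2$ term (which grows linearly in $n$), or the $\epsilon^4\bar{\gamma}^4$ coefficient. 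Your explanation that the bilinear structure "squares the scale" is a gesture, not a derivation; reverse-engineering exactly those monomials is precisely the part of the source proof that must be reproduced. As it stands, the honest options are to carry out the source's estimate chain in full (with its added hypotheses) or to cite it, as the paper does.
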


\begin{figure*}[t]
    \centering
    \setlength{\abovecaptionskip}{0pt}
    \setlength{\belowcaptionskip}{0pt}

    \begin{subfigure}[t]{0.31\linewidth}
        \centering
        \includegraphics[width=\linewidth]{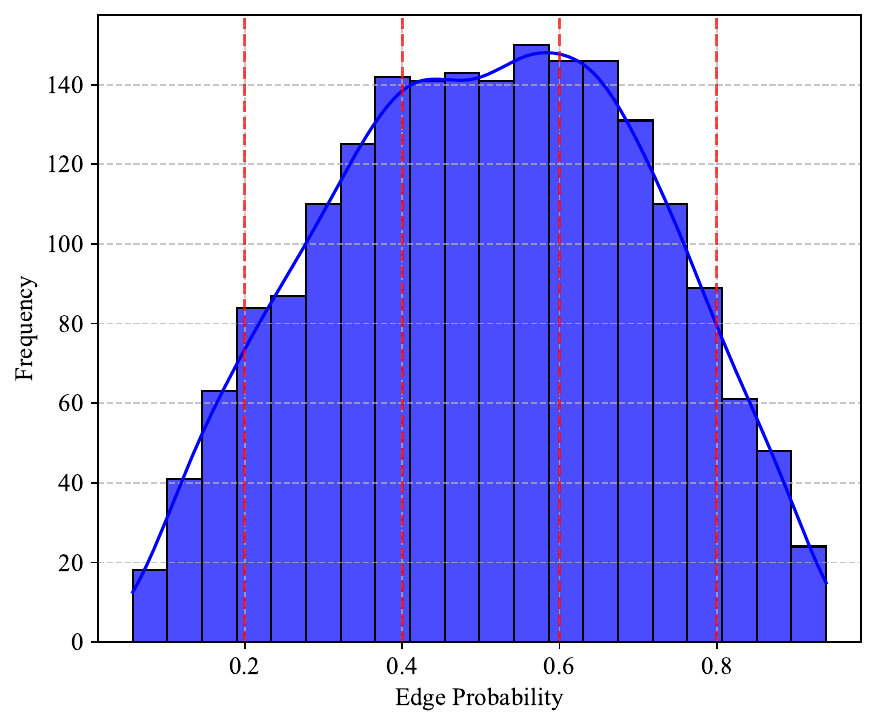}
        \label{fig:ERexp-groundtruth}
    \end{subfigure}\hfill
    \begin{subfigure}[t]{0.31\linewidth}
        \centering
        \includegraphics[width=\linewidth]{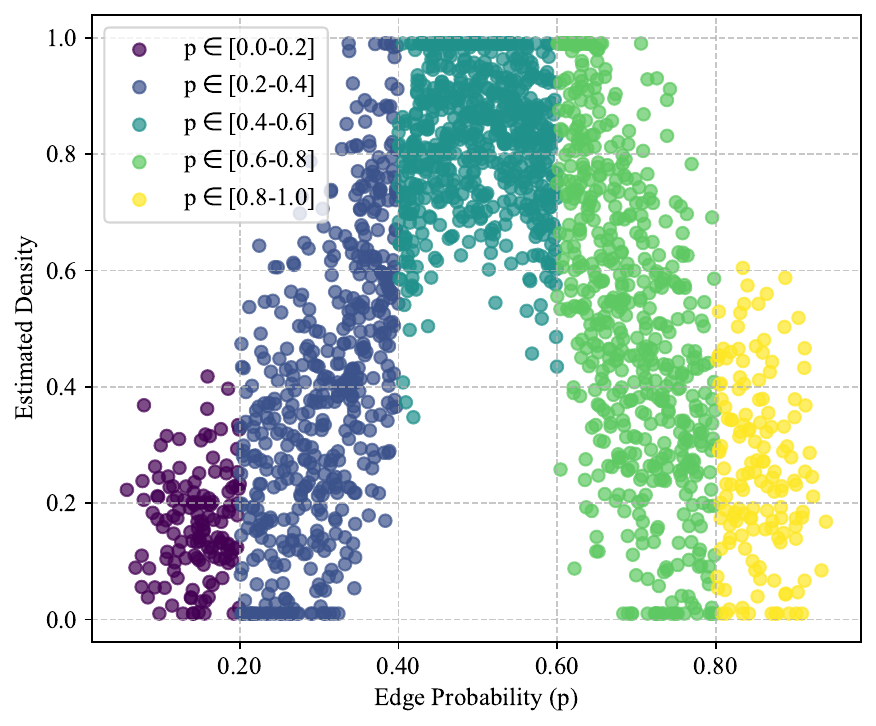}
        \label{fig:ERexp-lgkdeest}
    \end{subfigure}\hfill
    \begin{subfigure}[t]{0.31\linewidth}
        \centering
        \includegraphics[width=\linewidth]{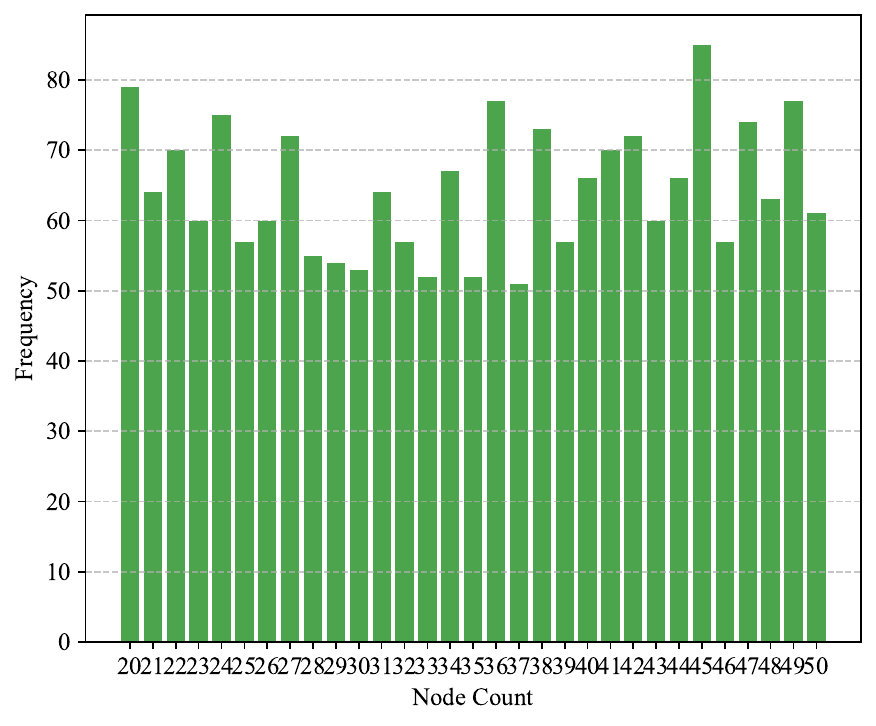}
        \label{fig:ERexp-nodedist}
    \end{subfigure}

    \caption{\footnotesize Density estimation results on synthetic Erdős--Rényi graphs. Left: Ground truth Beta(2,2) distribution for edge probability $p$. Middle: Learned density estimate from LGKDE versus edge probability $p$, showing the expected peak around $p=0.5$. Right: Distribution of node counts (uniform as generated).}
    \label{fig:ERexp-mainpaper}
\end{figure*}

Theorem~\ref{prop:robustness_mmd_graph_revised_new} quantifies the stability of the learned metric $\mmddist$ itself under graph perturbations. It is more robust if $\|\mathbf{W}_l\|_2$, $\|\mathbf{A}_l\|_2$, and $L$ are smaller.  Combining Theorem~\ref{prop:robustness_mmd_graph_revised_new} and Theorem~\ref{thm:robustness_metric_revised_simple}, we have the following corollary.
\begin{corollary}\label{cor:robustness_kde_graph_revised}
With the same conditions in Theorem \ref{thm:robustness_metric_revised_simple} and Theorem \ref{prop:robustness_mmd_graph_revised_new}, it holds that
\begin{equation}
\begin{aligned}
    |\kdestimate(G) - \kdestimate(\tilde{G})|
    &\le \frac{1}{\sqrt{2e\pi}c_h^2}
    \left(4\bar{\gamma}^2 + 2\epsilon^4\bar{\gamma}^4\right) \\
    &\quad \times \left(2\Delta_{G}^4+n\Delta_{G}^2
    + \frac{\epsilon}{\sqrt{n}}\Delta_{G}\right).
\end{aligned}
\end{equation}  
\end{corollary}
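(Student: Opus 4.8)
The plan is to prove the corollary by directly composing the two preceding results, since together they already form a chain linking the KDE output difference to the raw graph perturbation. First I would instantiate Theorem~\ref{thm:robustness_metric_revised_simple} with the specific pair $G_1 = G$ and $G_2 = \tilde{G}$. Because the corollary inherits the Gaussian-kernel assumption on $\kernelKDEBase$ and the bandwidth constant $c_h = \min_k h_k$ from that theorem, its Lipschitz-in-metric bound applies verbatim, giving
\begin{equation}
|\kdestimate(G) - \kdestimate(\tilde{G})| \le \frac{1}{\sqrt{2e\pi}\,c_h^2}\,\mmddist|_{\gnnparams}(G, \tilde{G}).
\end{equation}
This step requires no new estimation; it merely identifies $d_{12}(G,\tilde{G})$ with $\mmddist|_{\gnnparams}(G,\tilde{G})$ as used in the statement of that theorem.

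The second step would substitute the explicit perturbation bound of Theorem~\ref{prop:robustness_mmd_graph_revised_new} into the right-hand side above. Since the corollary assumes the full set of hypotheses of that theorem---the $L$-layer GNN with weight matrices $\mathbf{W}_l$, $1$-Lipschitz activations, maximum degree $\alpha$, $n$ nodes, and the perturbation $\Xtilde-\mathbf{X}=\Delta_{\mathbf{X}}$, $\Atilde-\mathbf{A}=\Delta_{\mathbf{A}}$ encoded in $\epsilon$, $\bar{\gamma}$, and $\Delta_G$---the bound
\begin{equation}
\mmddist|_{\gnnparams}(G, \tilde{G}) \le \left(4\bar{\gamma}^2 + 2\epsilon^4\bar{\gamma}^4\right)\left(2\Delta_{G}^4 + n\Delta_{G}^2 + \frac{\epsilon}{\sqrt{n}}\Delta_{G}\right)
\end{equation}
holds directly. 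Since the multiplicative constant $1/(\sqrt{2e\pi}\,c_h^2)$ is nonnegative, multiplying this second inequality through by it and chaining with the first display yields exactly the claimed inequality.

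The only genuine point to verify---and what I would treat as the (mild) obstacle---is the mutual consistency of the two hypothesis sets. Theorem~\ref{thm:robustness_metric_revised_simple} is a statement purely about the KDE map viewed as a function of the metric $\mmddist|_{\gnnparams}$, agnostic to how $G$ and $\tilde{G}$ arise, whereas Theorem~\ref{prop:robustness_mmd_graph_revised_new} fixes a specific generative perturbation model. I would check that both are evaluated under the same GNN parameters $\gnnparams$, so that the metric $\mmddist|_{\gnnparams}$ appearing in the two bounds is literally the same object, and under the same Gaussian embedding-kernel family $\FamKernelEmb$ indexed by $\SetGammaEmb$ that defines $\bar{\gamma}$. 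Once this compatibility is confirmed no constants conflict and the composition is clean; in particular, no further optimization over bandwidths or embedding kernels is needed, because Theorem~\ref{thm:robustness_metric_revised_simple} has already absorbed the multi-scale structure into the single constant $c_h=\min_k h_k$.
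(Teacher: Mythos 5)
Your proposal is correct and matches the paper's own proof, which likewise obtains the corollary by chaining Theorem~\ref{thm:robustness_metric_revised_simple} (applied with $G_1=G$, $G_2=\tilde{G}$) with the MMD perturbation bound of Theorem~\ref{prop:robustness_mmd_graph_revised_new} under the shared parameters $\gnnparams$, $c_h$, $\bar{\gamma}$, $\epsilon$, and $\Delta_G$. Your extra compatibility check of the two hypothesis sets is a sensible precaution but does not alter the argument.
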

According to the corollary, suppose both $G$ and $\tilde{G}$ are normal graphs; the difference between their densities will be smaller. This guarantees a low false positive rate in anomalous graph detection. It can also be used to understand the role of the proposed structure-aware sample generation strategy. According to Eq~\eqref{eq:modifysingular}, the graph $\tilde{G}$ given by our strategy satisfies $\|\Delta_{\mathbf{A}}\|_2=\max_i\max({|\sigma_i/r-\sigma_i|,|\sigma_i-r\sigma_i|})$. Therefore, by controlling $r$, the generated graph $\tilde{G}$ will have a similar density as the original graph $G$. It means that solving \eqref{eq:objective} can ensure a strong discrimination ability for our model, which is crucial for anomaly detection.

We also give the generalization guarantee in the main text to clarify how LGKDE behaves on unseen graphs.
\begin{theorem}[Generalization Bound]\label{thm:sample_complexity_revised}
Let $\alpha=\sqrt{\sum_{i=1}^N\|\mathbf{X}_i\|_F^2}\|\mathbf{W}_1\|_{2,1}\prod_{l=2}^L\|\mathbf{W}_{l}\|_2$, and $\hat{\Delta}_{\mathcal{G}}=\frac{1}{N}\sum_{i=1}^N|\hat{f}(G_i)-{f}^\ast(G_i)|$. Suppose all activation functions are $1$-Lipschitz continuous. Over the randomness of the sample $\mathcal{G}$, with probability at least $1-\delta$,
\begin{equation}
\begin{aligned}
    & \mathbb{E}[|\hat{f}(G)-{f}^\ast(G)|]\leq \hat{\Delta}_{\mathcal{G}} \\ & + \frac{8\sqrt{en\pi}c_h^2+24\bar{\gamma}\alpha\sqrt{\ln(2d^2)}}{{N}\sqrt{en\pi}c_h^2}\ln\left({N}\right)+\sqrt{\frac{\log(1/\delta)}{2N}}
\end{aligned}
\end{equation}
\end{theorem}
Theorem~\ref{thm:sample_complexity_revised} (Proof in Appendix~\ref{app:proof_sample_complexity_revised}) shows that the expected density estimation error decreases with larger $N$ and smaller weight norms. Since $\alpha$ scales only linearly with $\sqrt{n}$, this implies that LGKDE can generalize well to unseen graphs and is not sensitive to the disparity in graph node sizes.

\vspace{-0.1cm}
\section{Experiments}
\label{sec:experiments}
\vspace{-0.2cm}
We conduct comprehensive experiments to rigorously evaluate the proposed LGKDE framework. We begin by directly assessing LGKDE's capability in recovering underlying graph distributions on synthetic data where the ground truth is known. Subsequently, acknowledging the intricate nature of real-world graph distributions (e.g., in molecular structures and social networks), which poses significant challenges for direct density modeling, we leverage graph anomaly detection as a primary application to validate LGKDE's effectiveness against state-of-the-art methods.

\subsection{Density Estimation on Synthetic Graphs}
\label{sec:validate_density_estimation_synthetic_main}
We first conducted targeted experiments on synthetic Erdős–Rényi (ER) graphs, then extended our evaluation to diverse graph types like Barabási-Albert and Watts-Strogatz~(See Appendix~\ref{app:additional_synthetic}). In these controlled settings, the true generative parameters of the graphs were known. 

We generated ER graphs where node counts $n$ were sampled uniformly from $[20, 50]$ and edge probabilities $p$ were drawn from a Beta(2,2) distribution, which is unimodal and peaked at $p=0.5$. LGKDE was tasked with estimating the density of these graphs.

\begin{table*}[ht]
\centering
\caption{\footnotesize Comparison AUROCs(\%$\uparrow$) in graph-level anomaly detection. “Avg. AUROC” and “Avg. Rank” are computed across all datasets. Top-3 performance are indicated by \capgold/\capsilver/\capbronze\ cell shading with superscripts \ding{172}/\ding{173}/\ding{174}.
}
\label{tab:main_results}
\resizebox{1\textwidth}{!}{
\setlength{\tabcolsep}{1.2pt}
\begin{tabular}{@{}l|cccccccccccc|cc@{}}
\toprule
Method & MUTAG & PROTEINS & DD & ENZYMES & DHFR & BZR & COX2 & AIDS & IMDB-B & NCI1 & COLLAB & REDDIT-B & \makecell[c]{Avg.\\AUROC} & \makecell[c]{Avg.\\Rank} \\
\midrule
\multicolumn{15}{l}{\textit{Graph Kernel + Detector}} \\
\midrule
PK-SVM & $46.06_{\pm 0.47}$ & $49.43_{\pm 0.69}$ & $47.69_{\pm 0.24}$ & $52.45_{\pm 0.29}$ & $48.31_{\pm 0.47}$ & $46.67_{\pm 0.52}$ & $52.15_{\pm 0.16}$ & $50.93_{\pm 0.19}$ & $51.75_{\pm 0.30}$ & $51.39_{\pm 0.19}$ & $49.72_{\pm 0.60}$ & $48.36_{\pm 0.67}$ & $49.58$ & $12.42$ \\
PK-iF  & $47.98_{\pm 0.41}$ & $61.24_{\pm 0.34}$ & $75.29_{\pm 0.46}$ & $49.82_{\pm 0.67}$ & $52.79_{\pm 0.35}$ & $59.08_{\pm 0.29}$ & $52.48_{\pm 0.38}$ & $52.01_{\pm 0.53}$ & $52.83_{\pm 0.51}$ & $50.22_{\pm 0.12}$ & $51.38_{\pm 0.20}$ & $46.19_{\pm 0.21}$ & $54.28$ & $11.25$ \\
WL-SVM & $62.18_{\pm 0.29}$ & $53.85_{\pm 0.26}$ & $47.98_{\pm 0.32}$ & $53.75_{\pm 0.34}$ & $50.30_{\pm 0.31}$ & $51.16_{\pm 0.36}$ & $53.34_{\pm 0.27}$ & $52.56_{\pm 0.41}$ & $52.98_{\pm 0.69}$ & $54.18_{\pm 0.67}$ & $54.62_{\pm 1.28}$ & $49.50_{\pm 0.54}$ & $53.03$ & $10.67$ \\
WL-iF  & $65.71_{\pm 0.38}$ & $65.75_{\pm 0.35}$ & $70.49_{\pm 0.28}$ & $51.03_{\pm 0.42}$ & $51.64_{\pm 0.22}$ & $51.71_{\pm 0.45}$ & $49.56_{\pm 0.11}$ & $61.42_{\pm 0.50}$ & $51.79_{\pm 0.32}$ & $50.41_{\pm 0.31}$ & $51.41_{\pm 0.39}$ & $49.84_{\pm 0.11}$ & $55.90$ & $11.00$ \\
\midrule
\multicolumn{15}{l}{\textit{GNN-based Deep Learning Methods}} \\
\midrule
OCGIN   & $79.55_{\pm 0.22}$ & $76.46_{\pm 0.13}$ & $79.08_{\pm 0.19}$ & $62.44_{\pm 0.38}$ & $61.09_{\pm 0.27}$ & $69.13_{\pm 0.13}$ & $57.81_{\pm 0.50}$ & $96.89_{\pm 0.20}$ & $61.47_{\pm 0.18}$ & $69.46_{\pm 0.36}$ & $60.58_{\pm 0.27}$ & $82.10_{\pm 0.37}$ & $71.34$ & $7.25$ \\
GLocalKD & $86.25_{\pm 0.57}$ & \third{$77.29_{\pm 0.41}$} & \best{$80.76_{\pm 0.50}$} & $61.75_{\pm 0.10}$ & $61.79_{\pm 0.54}$ & $68.55_{\pm 0.15}$ & $58.93_{\pm 0.47}$ & $96.93_{\pm 0.34}$ & $53.31_{\pm 0.53}$ & $65.29_{\pm 0.21}$ & $51.85_{\pm 0.18}$ & $80.32_{\pm 0.10}$ & $70.25$ & $6.92$ \\
OCGTL    & $88.02_{\pm 0.43}$ & $72.89_{\pm 0.57}$ & $77.76_{\pm 0.48}$ & $63.59_{\pm 0.11}$ & $59.82_{\pm 0.44}$ & $51.89_{\pm 0.46}$ & $59.81_{\pm 0.30}$ & \best{$99.36_{\pm 0.67}$} & $65.27_{\pm 0.24}$ & \second{$75.75_{\pm 0.47}$} & $48.13_{\pm 0.41}$ & \second{$88.03_{\pm 0.22}$} & $70.86$ & $6.33$ \\
SIGNET   & \second{$88.84_{\pm 0.15}$} & $75.86_{\pm 0.30}$ & $74.53_{\pm 0.11}$ & $63.12_{\pm 0.52}$ & \second{$72.87_{\pm 0.28}$} & \second{$80.79_{\pm 0.38}$} & \best{$72.35_{\pm 0.58}$} & $97.60_{\pm 0.28}$ & \best{$70.12_{\pm 0.61}$} & $74.32_{\pm 0.34}$ & \best{$72.45_{\pm 0.11}$} & $85.24_{\pm 0.45}$ & \second{$77.34$} & \third{$4.17$} \\
GLADC    & $83.07_{\pm 0.29}$ & \second{$77.43_{\pm 0.19}$} & $76.54_{\pm 0.25}$ & $63.44_{\pm 0.30}$ & $61.25_{\pm 0.19}$ & $68.23_{\pm 0.31}$ & $64.13_{\pm 0.23}$ & $98.02_{\pm 0.23}$ & $65.94_{\pm 0.26}$ & $68.32_{\pm 0.22}$ & $54.32_{\pm 0.37}$ & $78.87_{\pm 0.56}$ & $71.63$ & $6.83$ \\
CVTGAD   & $86.64_{\pm 0.32}$ & $76.49_{\pm 0.29}$ & $78.84_{\pm 0.40}$ & \third{$68.56_{\pm 0.43}$} & $63.23_{\pm 0.38}$ & $77.69_{\pm 0.28}$ & $64.36_{\pm 0.16}$ & \second{$99.21_{\pm 0.27}$} & \second{$69.82_{\pm 0.13}$} & $69.13_{\pm 0.58}$ & \second{$71.01_{\pm 0.58}$} & \third{$87.43_{\pm 0.60}$} & $76.03$ & \third{$4.17$} \\
MUSE     & $85.92_{\pm 0.28}$ & $76.87_{\pm 0.32}$ & \third{$79.23_{\pm 0.35}$} & $67.82_{\pm 0.38}$ & \third{$71.45_{\pm 0.33}$} & $78.34_{\pm 0.30}$ & \third{$65.87_{\pm 0.29}$} & $98.95_{\pm 0.31}$ & $67.84_{\pm 0.27}$ & \third{$74.45_{\pm 0.26}$} & $67.48_{\pm 0.36}$ & $85.32_{\pm 0.33}$ & \third{$76.63$} & \second{$4.08$} \\
UniFORM  & \third{$88.45_{\pm 0.24}$} & $77.15_{\pm 0.27}$ & $78.56_{\pm 0.31}$ & \second{$69.34_{\pm 0.41}$} & $69.78_{\pm 0.36}$ & \third{$79.85_{\pm 0.32}$} & $65.12_{\pm 0.31}$ & $98.52_{\pm 0.22}$ & $68.42_{\pm 0.29}$ & $72.93_{\pm 0.31}$ & $66.23_{\pm 0.38}$ & $84.67_{\pm 0.35}$ & $76.58$ & $4.25$ \\
\midrule
\textbf{LGKDE}    & \best{$91.63_{\pm 0.31}$} & \best{$78.97_{\pm 0.26}$} & \second{$79.84_{\pm 0.41}$} & \best{$71.04_{\pm 0.45}$} & \best{$82.58_{\pm 0.33}$} & \best{$81.11_{\pm 0.32}$} & \second{$66.69_{\pm 0.39}$} & \third{$99.06_{\pm 0.25}$} & \third{$68.77_{\pm 0.29}$} & \best{$76.67_{\pm 0.30}$} & \third{$67.94_{\pm 0.41}$} & \best{$88.12_{\pm 0.39}$} & \best{$79.37$} & \best{$1.67$} \\
\bottomrule
\end{tabular}}
\end{table*}

As Fig.~\ref{fig:ERexp-mainpaper} shows, LGKDE successfully recovered the underlying distribution of edge probabilities, assigning significantly higher densities to graphs with $p$ around 0.5 (with Avg. ${ p\in[0.4-0.6]= 0.82}$, details in Table~\ref{tab:appendix-er-density} of Appendix~\ref{app:synthetic_exp}), irrespective of variations in node count.

Due to space limitations, additional experiments on Barabási-Albert, Watts-Strogatz, and Stochastic Block Models graphs are provided in Appendix~\ref{app:additional_synthetic}. These foundational validations provide strong evidence that LGKDE can capture the underlying density on synthetic graph distributions and support its application to the following more complex, real-world graph data anomaly detection tasks.

\subsection{Graph Anomaly Detection on Real-World Graphs}
\label{sec:graph_anomaly_detection_main}
\subsubsection{Experimental Settings}
\label{sec:exp_settings_gad}

\paragraph{Datasets:} We use twelve widely adopted graph benchmark datasets from the TU Database~\citep{morris2020tudataset}. Following the class-based anomaly detection setup from established practices like~\citep{liu2023good,qiao2024deep,wang2024unifying}, minority classes are treated as anomalous, and the majority as normal. Detailed statistics are in Appendix~\ref{tab:datasets}.
\vspace{-0.3cm}
\paragraph{Baselines: } 
LGKDE is compared against \textbf{traditional methods} (\textbf{Graph Kernel + Detector}, e.g., WL~\citep{shervashidze2011weisfeiler} with iForest~\citep{liu2008isolation}) and various \textbf{GNN-based deep learning approaches}: OCGIN~\citep{zhao2021using}, GLocalKD~\citep{ma2022deep}, OCGTL~\citep{qiu22raising}, SIGNET~\citep{liu2023signet}, GLADC~\citep{luo2022deepgladc}, CVTGAD~\citep{li2023cvtgad}, MUSE~\citep{kim2024rethinking} and UniFORM~\citep{song2025uniform}. Details of baselines are in Appendix~\ref{appendix:implementation}. 

\vspace{-0.2cm}
\paragraph{Implementation Details:} LGKDE\footnote{Codes: \url{https://github.com/MathAdventurer/LGKDE}} is implemented in PyTorch, using a GCN backbone and the Adam optimizer with a learning rate of $0.001$. For multi-scale KDE, we use $M=5$ bandwidths. For fair comparison, all GNN-based baselines use a similar GNN architecture where applicable, adhering to settings from unified benchmarks~\citep{wang2024unifying}. Further details on LGKDE's specific hyperparameter settings and those for baselines are in Appendix~\ref{appendix:gkde_implementation} and Appendix~\ref{appendix:implementation}. All results are averaged over five trials. 
\vspace{-0.2cm}
\paragraph{Evaluation Metrics:} Performance is assessed using Area Under the ROC Curve (AUROC), Area Under the Precision-Recall Curve (AUPRC), and False Positive Rate at 95\% True Positive Rate (FPR95). Higher($\uparrow$) AUROC/AUPRC and lower($\downarrow$) FPR95 indicate better performance. Metric definitions are in Appendix~\ref{sec:eval_metrics_app}.

\vspace{-0.2cm}
\subsubsection{Main Results and Analysis}
\label{sec:main_results_gad_v2}
\vspace{-0.1cm}
Table~\ref{tab:main_results} presents the AUROC. AUPRC and FPR95 results are in Appendix~\ref{app:fpr95_results} (Tables~ \ref{tab:auprc_results} and~\ref{tab:fpr95_results}).
\vspace{-0.3cm}
\paragraph{Overall Performance.} LGKDE consistently demonstrates superior performance, achieving the highest average AUROC (79.37\%) and AUPRC (84.01\%) across 12 datasets. These results outperform strong baselines like MUSE and SIGNET, particularly on datasets such as DHFR (AUROC: 82.58\% vs. SIGNET's 72.87\% and MUSE's 71.45\%). LGKDE also obtains the best average FPR95 (59.92\%, Table~\ref{tab:fpr95_results}). This underscores LGKDE's efficacy in modeling complex normal graph distributions to detect anomalies, naturally aligning our theoretical analysis on consistency (Theorem~\ref{thm:consistency_revised}) and convergence (Theorem~\ref{thm:convergence_rate_revised}).
Appendix~\ref{app:additional_rebuttal_datasets} further reports four additional datasets inspired by reviewer suggestions, where LGKDE remains the best-performing method on average.

\vspace{-0.2cm}
\paragraph{Method-specific Analysis.} Our extensive results reveal that traditional graph kernel methods generally underperform, reflecting the limitations of fixed, handcrafted kernels. Deep learning methods show improvements, though approaches like OCGIN and CVTGAD 
are brittle under long-tailed priors and weak cluster separability in complex benchmarks.
LGKDE's end-to-end learning of both the metric space and density estimator serves as a principled view to solve GAD tasks, which also offers a distinct advantage over two-stage methods (with extra comparison on InfoGraph~\citep{sun2019infograph} and GAE~\citep{kipf2016variational} followed by a separate KDE), as evidenced by its significantly better performance on MUTAG (e.g., 91.63\% AUROC vs. 81.94\% for GAE+KDE; details in Table~\ref{tab:appendix-two-stage}). This efficacy stems from our principled design, where the joint optimization target (Eq.~\eqref{eq:objective}) pairs novel energy-based perturbations with a density-based loss to learn a task-attuned metric space. LGKDE's stability is theoretically grounded by our robustness analysis (Corollary~\ref{cor:robustness_kde_graph_revised}), which underpins the consistent high performance across diverse datasets.

\begin{figure*}[ht]
    \centering
    \vspace{-0.2cm}
    \begin{subfigure}[t]{0.32\textwidth}
        \centering
        \includegraphics[width=\linewidth]{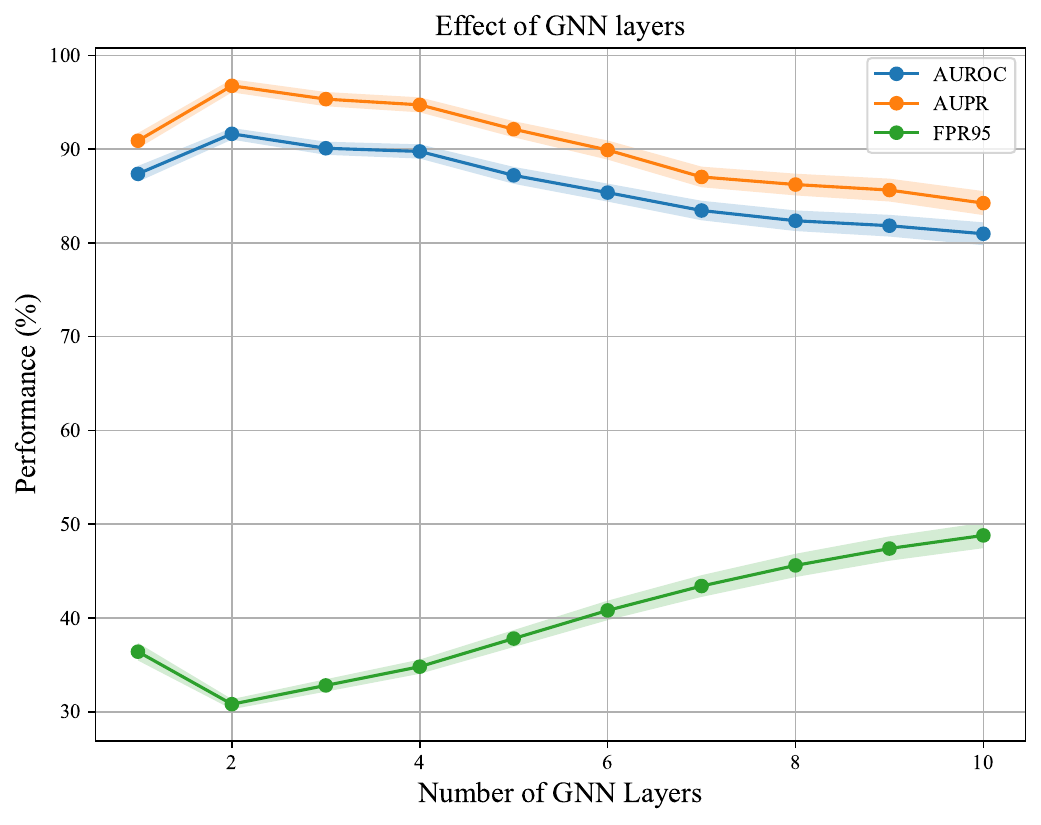}
    \end{subfigure}\hfill
    \begin{subfigure}[t]{0.32\textwidth}
        \centering
        \includegraphics[width=\linewidth]{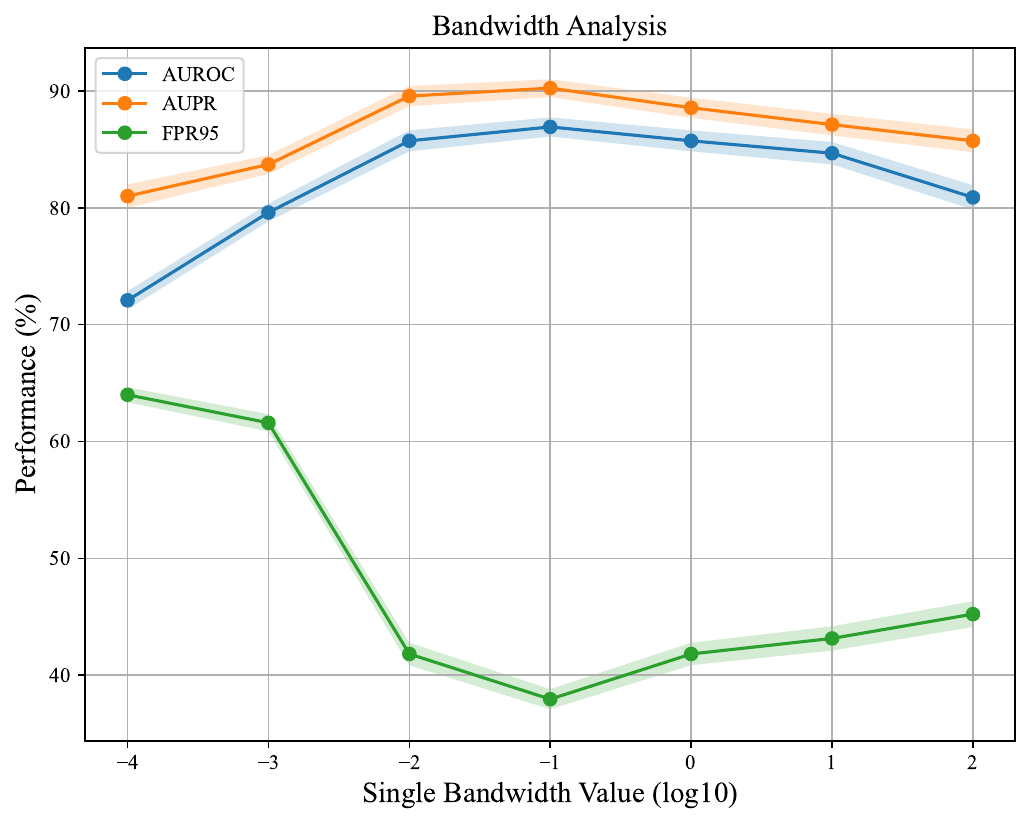}
    \end{subfigure}\hfill
    \begin{subfigure}[t]{0.32\textwidth}
        \centering
        \includegraphics[width=\linewidth]{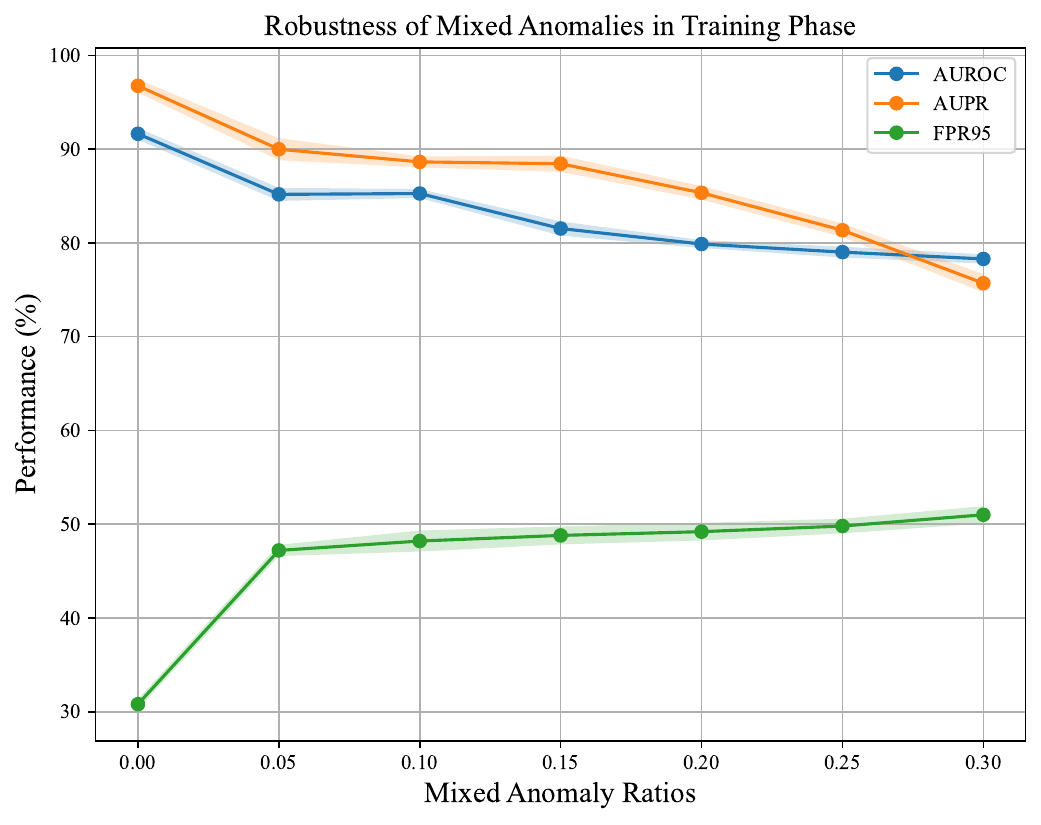}
    \end{subfigure}
    \caption{\footnotesize Ablation studies on GNN depth, bandwidth sensitivity, and robustness to training contamination of MUTAG Datasets.}
    \label{fig:ablation}
    \vspace{-0.3cm}
\end{figure*}

\vspace{-0.3cm}
\paragraph{Dataset-specific Insights and Learned Representations.} LGKDE excels on molecular datasets (e.g., MUTAG, PROTEINS), indicating its proficiency in capturing intricate chemical structures. This is supported by t-SNE visualizations (Figure~\ref{fig:all_methods_tsne_compare}) on MUTAG, which show LGKDE learning a more discriminative embedding space with clearer class separation compared to other methods. Qualitative analysis of graphs from MUTAG corresponding to "average", highest, and lowest learned densities (Appendix~\ref{app:qualitative_analysis}, Figure~\ref{fig:appendix-mutag-avg-extreme}) further reveals that LGKDE captures structurally meaningful patterns: high-density graphs often exhibit complex ring structures typical of the normal class, while low-density graphs display simpler or atypical formations. On social network datasets (e.g., IMDB-B and COLLAB), LGKDE remains competitive and performs best on REDDIT-B despite its substantial disparity in graph size $n$ (average $429.63$ nodes with standard deviation 554.06), demonstrating its adaptability and aligning with our analysis on generalization bound (Theorem~\ref{thm:sample_complexity_revised}).

\subsection{Ablation Studies}
\label{sec:ablation_main}
\begin{table}[h]
\centering
\vspace{-0.1cm}
\caption{\footnotesize Ablation study on key components of LGKDE. Results are reported as mean(\%) $\pm$ std(\%) over five runs.}
\label{tab:ablation}
\vspace{0.05cm}
\resizebox{\linewidth}{!}{
\begin{tabular}{l|l|ccc}
\toprule
\multicolumn{2}{c|}{Variant} & AUROC & AUPRC & FPR95 \\
\midrule
\multirow{5}{*}{w/o Multi-scale KDE}
& $h = 10^{-2}$ & $85.73_{\pm0.45}$ & $89.56_{\pm0.43}$ & $41.82_{\pm0.48}$ \\
& $h = 10^{-1}$ & $86.92_{\pm0.41}$ & $90.24_{\pm0.38}$ & $37.95_{\pm0.43}$ \\
& $h = 10^{0}$ & $85.73_{\pm0.45}$ & $88.56_{\pm0.43}$ & $41.82_{\pm0.48}$ \\
& $h = 10^{1}$ & $84.67_{\pm0.48}$ & $87.12_{\pm0.46}$ & $43.15_{\pm0.52}$ \\
& $h = 10^{2}$ & $80.89_{\pm0.52}$ & $85.73_{\pm0.49}$ & $45.23_{\pm0.55}$ \\
\midrule
\multirow{2}{*}{w/o MMD Distance}
& Graph Readout (avg) & $83.89_{\pm0.49}$ & $86.92_{\pm0.47}$ & $44.73_{\pm0.53}$ \\
& Graph Readout (sum) & $84.73_{\pm0.48}$ & $87.56_{\pm0.46}$ & $43.82_{\pm0.52}$ \\
\midrule
\multicolumn{2}{c|}{w/o Learnable Weights} & $88.92_{\pm0.43}$ & $89.64_{\pm0.41}$ & $36.97_{\pm0.45}$ \\
\midrule
\multicolumn{2}{c|}{Full model of LGKDE} & \bm{$91.63_{\pm0.31}$} & \bm{$96.75_{\pm0.35}$} & \bm{$30.80_{\pm0.27}$} \\
\bottomrule
\end{tabular}}
\vspace{-0.2cm}
\end{table}

Comprehensive ablation studies (detailed in Appendix~\ref{app:ablation} due to space limit) validate the key components and design choices of LGKDE. Our key results in Table~\ref{tab:ablation} demonstrate that: 
(1) The multi-scale KDE design is crucial, outperforming single-bandwidth variants. (2) The learned MMD distance is superior to simpler graph readout functions for capturing relevant graph similarities. (3) Learnable bandwidth mixture weights adaptively improve performance over fixed weights. (4) Figure~\ref{fig:ablation} (left) shows that a moderate GNN depth (2--3 layers) is optimal, avoiding over-smoothing.

We further systematically compare and analyze our core structure-aware sample generation strategy in Appendix~\ref{app:ablation}. Table~\ref{tab:perturbation_ablation} reveals that our proposed energy-based spectral perturbations contribute more significantly than common random perturbations by providing well-designed structure-perturbed counterparts for density-based contrastive learning, and their combination achieves synergistic gains. Sensitivity analysis results on the energy-based thresholds $(\tau_1, \tau_2)$ are given in Table~\ref{tab:tau_sensitivity_auroc}, confirming our design choice and aligning with our analysis in Theorem~\ref{prop:robustness_mmd_graph_revised_new}. Extra parameter-controlled experiments (Table~\ref{tab:param_control_unified}) demonstrate that LGKDE with a moderate GNN capacity significantly outperforms unlearnable or single-scale KDE with massive GNN backbone parameters, validating that the gains stem from adaptive density estimation. These findings confirm the efficacy of our design choices.

The robustness of LGKDE is demonstrated by its resilience to training data contamination in Figure~\ref{fig:ablation} (right) and the extra stable density gap analysis in Appendix~\ref{app:density_gap_exp}. We also conducted controlled distribution shift experiments in Appendix~\ref{app:distribution_shift_experiments}, showing LGKDE maintains superior performance when test distributions deviate from training. Specifically, these results align with Corollary~\ref{cor:robustness_kde_graph_revised} and Theorem~\ref{thm:sample_complexity_revised} for robustness on unseen graphs, providing strong empirical validation for our theoretical framework.

\section{Conclusion}
We proposed LGKDE, a novel framework for learnable kernel density estimation on graphs, which jointly optimizes graph representations and multi-scale kernel parameters via maximizing the density of normal graphs relative to their well-designed perturbed counterparts. It also naturally provides a principled density estimation perspective for graph-level anomaly detection. Theoretical analysis and extensive experiments validate LGKDE's effectiveness and superior performance in graph density estimation and anomaly detection.

\section*{Acknowledgements}
The work was partially supported by the National Natural Science Foundation of China under Grant No.62376236, the General Program of the Natural Science Foundation of Guangdong Province under Grant No.2024A1515011771, and the Shenzhen Stability Science Program 2023.
The authors appreciate the engagement of the ICML 2026 reviewers and ACs. 
\section*{Impact Statement}

This work presents LGKDE, a methodological contribution to graph kernel density estimation and graph-level anomaly detection. It does not involve human subjects, sensitive personal data, or ethically sensitive applications. The experiments use public graph benchmark datasets with no known personal or sensitive information. We do not foresee negative societal impacts under standard research use, and the work adheres to the ICML Code of Ethics.
\nocite{langley00}

\bibliography{reference}

@article{pang2021deep,
  title={Deep Learning for Anomaly Detection: A Review},
  author={Pang, Guansong and Shen, Chunhua and Cao, Longbing and van den Hengel, Anton},
  journal={ACM Computing Surveys},
  volume={54},
  number={2},
  pages={38:1--38:38},
  year={2021},
  publisher={ACM},
  doi={10.1145/3439950}
}

@inproceedings{
cai2025selfdiscriminative,
title={Self-Discriminative Modeling for Anomalous Graph Detection},
author={Jinyu Cai and Yunhe Zhang and Jicong Fan},
booktitle={Forty-second International Conference on Machine Learning},
year={2025},
url={https://openreview.net/forum?id=l9DJGAtoAj}
}

@inproceedings{
wang2024graph,
title={Graph Classification via Reference Distribution Learning: Theory and Practice},
author={Zixiao Wang and Jicong Fan},
booktitle={The Thirty-eighth Annual Conference on Neural Information Processing Systems},
year={2024},
url={https://openreview.net/forum?id=1zVinhehks}
}

@inproceedings{
fan2025graph,
title={Graph Minimum Factor Distance and Its Application to Large-Scale Graph Data Clustering},
author={Jicong Fan},
booktitle={Forty-second International Conference on Machine Learning},
year={2025},
url={https://openreview.net/forum?id=hyPWP38j5k}
}

@inproceedings{cai2024lg,
  title={LG-FGAD: An Effective Federated Graph Anomaly Detection Framework},
  author={Cai, Jinyu and Zhang, Yunhe and Fan, Jicong and Ng, See-Kiong},
  booktitle={Proceedings of the International Joint Conference on Artificial Intelligence},
  pages={3760--3769},
  year={2024}
}

@inproceedings{langley00,
 author    = {P. Langley},
 title     = {Crafting Papers on Machine Learning},
 year      = {2000},
 pages     = {1207--1216},
 editor    = {Pat Langley},
 booktitle     = {Proceedings of the 17th International Conference
              on Machine Learning (ICML 2000)},
 address   = {Stanford, CA},
 publisher = {Morgan Kaufmann}
}

@article{bartlett2017spectrally,
  title={Spectrally-normalized margin bounds for neural networks},
  author={Bartlett, Peter L and Foster, Dylan J and Telgarsky, Matus J},
  journal={Advances in neural information processing systems},
  volume={30},
  year={2017}
}

@article{xu2021infogcl,
  title={Infogcl: Information-aware graph contrastive learning},
  author={Xu, Dongkuan and Cheng, Wei and Luo, Dongsheng and Chen, Haifeng and Zhang, Xiang},
  journal={Advances in Neural Information Processing Systems},
  volume={34},
  pages={30414--30425},
  year={2021}
}

@article{you2020graph,
  title={Graph contrastive learning with augmentations},
  author={You, Yuning and Chen, Tianlong and Sui, Yongduo and Chen, Ting and Wang, Zhangyang and Shen, Yang},
  journal={Advances in neural information processing systems},
  volume={33},
  pages={5812--5823},
  year={2020}
}

@article{gretton2012kernel,
  title={A Kernel Two-Sample Test},
  author={Gretton, Arthur and Borgwardt, Karsten M. and Rasch, Malte J. and Sch{\"o}lkopf, Bernhard and Smola, Alexander},
  journal={Journal of Machine Learning Research},
  volume={13},
  number={25},
  pages={723--773},
  year={2012}
}

@inproceedings{sun_mmd_2024,
  title={MMD Graph Kernel: Effective Metric Learning for Graphs via Maximum Mean Discrepancy},
  author={Sun, Yan and Fan, Jicong},
  booktitle={The Twelfth International Conference on Learning Representations},
  year={2024}
}

@article{wu2020comprehensive,
  title={A comprehensive survey on graph neural networks},
  author={Wu, Zonghan and Pan, Shirui and Chen, Fengwen and Long, Guodong and Zhang, Chengqi and Yu, Philip S.},
  journal={IEEE Transactions on Neural Networks and Learning Systems},
  volume={32},
  number={1},
  pages={4--24},
  year={2021},
  doi={10.1109/TNNLS.2020.2978386}
}

@inproceedings{hamilton2017inductive,
  title={Inductive representation learning on large graphs},
  author={Hamilton, Will and Ying, Zhitao and Leskovec, Jure},
  booktitle={Advances in neural information processing systems},
  volume={30},
  year={2017}
}

@inproceedings{kipf2017semi,
  title={Semi-supervised classification with graph convolutional networks},
  author={Kipf, Thomas N and Welling, Max},
  booktitle={International Conference on Learning Representations},
  year={2017}
}

@inproceedings{liu2023good,
  title={GOOD-D: On Unsupervised Graph Out-Of-Distribution Detection},
  author={Liu, Yixin and Ding, Kaize and Liu, Huan and Pan, Shirui},
  booktitle={Proceedings of the 16th ACM International Conference on Web Search and Data Mining},
  pages={339--347},
  year={2023}
}

@article{akoglu2015graph,
  title={Graph based anomaly detection and description: a survey},
  author={Akoglu, Leman and Tong, Hanghang and Koutra, Danai},
  journal={Data mining and knowledge discovery},
  volume={29},
  number={3},
  pages={626--688},
  year={2015}
}

@inproceedings{ding2019deep,
  title={Deep anomaly detection on attributed networks},
  author={Ding, Kaize and Li, Jundong and Bhanushali, Rohit and Liu, Huan},
  booktitle={Proceedings of the 2019 SIAM International Conference on Data Mining},
  pages={594--602},
  year={2019}
}

@article{ma2021comprehensive,
  title={A comprehensive survey on graph anomaly detection with deep learning},
  author={Ma, Xiaoxiao and Wu, Jia and Xue, Shan and Yang, Jian and Zhou, Chuan and Sheng, Quan Z and Xiong, Hui and Akoglu, Leman},
  journal={IEEE Transactions on Knowledge and Data Engineering},
  volume={35},
  number={12},
  pages={12012--12038},
  year={2023},
  doi={10.1109/TKDE.2021.3118815}
}

@inproceedings{lanciano2020explainable,
  title={Explainable classification of brain networks via contrast subgraphs},
  author={Lanciano, Tommaso and Bonchi, Francesco and Gionis, Aristides},
  booktitle={Proceedings of the 26th ACM SIGKDD International Conference on Knowledge Discovery \& Data Mining},
  pages={3308--3318},
  year={2020}
}

@article{vishwanathan2010graph,
  title={Graph Kernels},
  author={Vishwanathan, S. V. N. and Schraudolph, Nicol N. and Kondor, Risi and Borgwardt, Karsten M.},
  journal={Journal of Machine Learning Research},
  volume={11},
  number={40},
  pages={1201--1242},
  year={2010}
}

@inproceedings{borgwardt2005shortest,
  title={Shortest-path kernels on graphs},
  author={Borgwardt, Karsten M and Kriegel, Hans-Peter},
  booktitle={Fifth IEEE international conference on data mining},
  pages={74--81},
  year={2005}
}

@article{shervashidze2011weisfeiler,
  title={Weisfeiler-lehman graph kernels},
  author={Shervashidze, Nino and Schweitzer, Pascal and Van Leeuwen, Erik Jan and Mehlhorn, Kurt and Borgwardt, Karsten M},
  journal={Journal of Machine Learning Research},
  volume={12},
  number={Sep},
  pages={2539--2561},
  year={2011}
}

@article{neumann2016propagation,
  title={Propagation kernels: efficient graph kernels from propagated information},
  author={Neumann, Marion and Garnett, Roman and Bauckhage, Christian and Kersting, Kristian},
  journal={Machine Learning},
  volume={102},
  pages={209--245},
  year={2016}
}

@inproceedings{xu2019powerful,
  title={How powerful are graph neural networks?},
  author={Xu, Keyulu and Hu, Weihua and Leskovec, Jure and Jegelka, Stefanie},
  booktitle={International Conference on Learning Representations},
  year={2019}
}

@article{kipf2016variational,
  title={Variational graph auto-encoders},
  author={Kipf, Thomas N and Welling, Max},
  journal={arXiv preprint arXiv:1611.07308},
  year={2016}
}

@article{zhao2021using,
  title={On Using Classification Datasets to Evaluate Graph Outlier Detection: Peculiar Observations and New Insights},
  author={Zhao, Lingxiao and Akoglu, Leman},
  journal={Big Data},
  volume={11},
  number={3},
  pages={151--180},
  year={2023},
  doi={10.1089/big.2021.0069}
}

@inproceedings{qiu22raising,
  title={Raising the Bar in Graph-level Anomaly Detection},
  author={Qiu, Chen and Kloft, Marius and Mandt, Stephan and Rudolph, Maja},
  booktitle={Proceedings of the Thirty-First International Joint Conference on Artificial Intelligence},
  pages={2196--2203},
  year={2022}
}

@inproceedings{ma2022deep,
  title={Deep graph-level anomaly detection by glocal knowledge distillation},
  author={Ma, Rongrong and Pang, Guansong and Chen, Ling and van den Hengel, Anton},
  booktitle={Proceedings of the Fifteenth ACM International Conference on Web Search and Data Mining},
  pages={704--714},
  year={2022}
}

@article{wang2020learning,
  title={Learning Low-Dimensional Latent Graph Structures: A Density Estimation Approach},
  author={Li Wang and Ren-Cang Li},
  journal={IEEE Transactions on Neural Networks and Learning Systems},
  volume={31},
  pages={1098--1112},
  year={2020},
  doi={10.1109/TNNLS.2019.2917696}
}

@article{nachman2020anomaly,
  title={Anomaly detection with density estimation},
  author={B. Nachman and D. Shih},
  journal={Physical Review D},
  volume={101},
  pages={075042},
  year={2020},
  doi={10.1103/PhysRevD.101.075042}
}

@article{muzio2020biological,
  title={Biological network analysis with deep learning},
  author={Muzio, Giulia and O'Bray, Leslie and Borgwardt, Karsten},
  journal={Briefings in Bioinformatics},
  volume={22},
  number={2},
  pages={1515--1530},
  year={2021},
  doi={10.1093/bib/bbaa257}
}

@inproceedings{rong2020deep,
  title={Deep Graph Learning: Foundations, Advances and Applications},
  author={Rong, Yu and Xu, Tingyang and Huang, Junzhou and Huang, Wenbing and Cheng, Hong and Ma, Yao and Wang, Yiqi and Derr, Tyler and Wu, Lingfei and Ma, Tengfei},
  booktitle={Proceedings of the 26th ACM SIGKDD International Conference on Knowledge Discovery \& Data Mining},
  pages={3555--3556},
  year={2020},
  doi={10.1145/3394486.3406474}
}

@article{jin2021graph,
  title={Graph Representation Learning: Foundations, Methods, Applications and Systems},
  author={Wei Jin and Yao Ma and Yiqi Wang and Xiaorui Liu and Jiliang Tang and Yukuo Cen and Jie Tang and Chuan Shi and Yanfang Ye and Jiawei Zhang and Philip S. Yu},
  journal={Proceedings of the 27th ACM SIGKDD Conference on Knowledge Discovery \& Data Mining},
  year={2021},
  doi={10.1145/3447548.3470824}
}

@article{sun2019graph,
  title={Graph convolutional networks for computational drug development and discovery},
  author={Mengying Sun and Sendong Zhao and Coryandar Gilvary and Olivier Elemento and Jiayu Zhou and Fei Wang},
  journal={Briefings in Bioinformatics},
  volume={21},
  number={3},
  pages={919--935},
  year={2020},
  doi={10.1093/bib/bbz042}
}

@article{wang2022graph,
  title={Graph Neural Networks for Molecules},
  author={Yuyang Wang and Zijie Li and Amir Farimani},
  journal={ArXiv},
  volume={abs/2209.05582},
  year={2022},
  doi={10.48550/arXiv.2209.05582}
}

@inproceedings{errica2020graph,
  title={Graph Mixture Density Networks},
  author={Errica, Federico and Bacciu, Davide and Micheli, Alessio},
  booktitle={Proceedings of the 38th International Conference on Machine Learning},
  series={Proceedings of Machine Learning Research},
  volume={139},
  pages={3025--3035},
  publisher={PMLR},
  year={2021}
}

@book{hamilton2020graph,
  title={Graph representation learning},
  author={Hamilton, William L},
  year={2020},
  publisher={Morgan \& Claypool Publishers}
}

@inproceedings{perozzi2014deepwalk,
  title={Deepwalk: Online learning of social representations},
  author={Perozzi, Bryan and Al-Rfou, Rami and Skiena, Steven},
  booktitle={Proceedings of the 20th ACM SIGKDD international conference on Knowledge discovery and data mining},
  pages={701--710},
  year={2014}
}

@inproceedings{grover2016node2vec,
  title={node2vec: Scalable feature learning for networks},
  author={Grover, Aditya and Leskovec, Jure},
  booktitle={Proceedings of the 22nd ACM SIGKDD international conference on Knowledge discovery and data mining},
  pages={855--864},
  year={2016}
}

@inproceedings{gilmer2017neural,
  title={Neural message passing for quantum chemistry},
  author={Gilmer, Justin and Schoenholz, Samuel S and Riley, Patrick F and Vinyals, Oriol and Dahl, George E},
  booktitle={International conference on machine learning},
  pages={1263--1272},
  year={2017}
}

@article{li2022graphde,
  title={Graphde: A generative framework for debiased learning and out-of-distribution detection on graphs},
  author={Li, Zenan and Wu, Qitian and Nie, Fan and Yan, Junchi},
  journal={Advances in Neural Information Processing Systems},
  volume={35},
  pages={30277--30290},
  year={2022}
}

@inproceedings{li2023cvtgad,
  title={CVTGAD: Simplified Transformer with Cross-View Attention for Unsupervised Graph-Level Anomaly Detection},
  author={Li, Jindong and Xing, Qianli and Wang, Qi and Chang, Yi},
  booktitle={Joint European Conference on Machine Learning and Knowledge Discovery in Databases},
  pages={185--200},
  year={2023},
  organization={Springer}
}

@article{li2024crossdomain,
  title={Cross-Domain Graph Level Anomaly Detection},
  author={Li, Zhong and Liang, Sheng and Shi, Jiayang and van Leeuwen, Matthijs},
  journal={IEEE Transactions on Knowledge and Data Engineering},
  volume={36},
  number={12},
  pages={7839--7850},
  year={2024},
  doi={10.1109/TKDE.2024.3462442}
}

@article{zhang2022dual,
  title={Dual-discriminative graph neural network for imbalanced graph-level anomaly detection},
  author={Zhang, Ge and Yang, Zhenyu and Wu, Jia and Yang, Jian and Xue, Shan and Peng, Hao and Su, Jianlin and Zhou, Chuan and Sheng, Quan Z. and Akoglu, Leman and Aggarwal, Charu C.},
  journal={Advances in Neural Information Processing Systems},
  volume={35},
  pages={24144--24157},
  year={2022}
}

@article{kairouz2021advances,
  title={Advances and open problems in federated learning},
  author={Kairouz, Peter and McMahan, H Brendan and Avent, Brendan and Bellet, Aur{\'e}lien and Bennis, Mehdi and Bhagoji, Arjun Nitin and Bonawitz, Kallista and Charles, Zachary and Cormode, Graham and Cummings, Rachel and others},
  journal={Foundations and Trends{\textregistered} in Machine Learning},
  volume={14},
  number={1--2},
  pages={1--210},
  year={2021},
  publisher={Now Publishers, Inc.}
}

@article{xie2021federated,
  title={Federated graph classification over non-iid graphs},
  author={Xie, Han and Ma, Jing and Xiong, Li and Yang, Carl},
  journal={Advances in Neural Information Processing Systems},
  volume={34},
  pages={18839--18852},
  year={2021}
}

@inproceedings{velickovic2019deep,
  title={Deep Graph Infomax},
  author={Velickovic, Petar and Fedus, William and Hamilton, William L and Li{\`o}, Pietro and Bengio, Yoshua and Hjelm, R Devon},
  booktitle={International Conference on Learning Representations},
  year={2019}
}

@inproceedings{sun2019infograph,
  title={InfoGraph: Unsupervised and Semi-supervised Graph-Level Representation Learning via Mutual Information Maximization},
  author={Sun, Fan-Yun and Hoffmann, Jordan and Verma, Vikas and Tang, Jian},
  booktitle={International Conference on Learning Representations},
  year={2020}
}

@inproceedings{liu2019graph,
  title={Graph normalizing flows},
  author={Liu, Jenny and Kumar, Aviral and Ba, Jimmy and Kiros, Jamie and Swersky, Kevin},
  booktitle={Advances in neural information processing systems},
  pages={13578--13588},
  year={2019}
}

@inproceedings{du2020energy,
  title={Energy-Based Models for Atomic-Resolution Protein Conformations},
  author={Du, Yilun and Meier, Joshua and Ma, Jerry and Fergus, Rob and Rives, Alexander},
  booktitle={International Conference on Learning Representations},
  year={2020},
  url={https://openreview.net/forum?id=S1e_9xrFvS}
}

@inproceedings{liu2008isolation,
  title={Isolation forest},
  author={Liu, Fei Tony and Ting, Kai Ming and Zhou, Zhi-Hua},
  booktitle={2008 eighth ieee international conference on data mining},
  pages={413--422},
  year={2008},
  organization={IEEE}
}

@article{morris2020tudataset,
  title={Tudataset: A collection of benchmark datasets for learning with graphs},
  author={Morris, Christopher and Kriege, Nils M and Bause, Franka and Kersting, Kristian and Mutzel, Petra and Neumann, Marion},
  journal={arXiv preprint arXiv:2007.08663},
  year={2020}
}

@article{liu2023signet,
  title={Towards self-interpretable graph-level anomaly detection},
  author={Liu, Yixin and Ding, Kaize and Lu, Qinghua and Li, Fuyi and Zhang, Leo Yu and Pan, Shirui},
  journal={Advances in Neural Information Processing Systems},
  volume={36},
  year={2023}
}

@inproceedings{amer2013enhancing,
  title={Enhancing one-class support vector machines for unsupervised anomaly detection},
  author={Amer, Mennatallah and Goldstein, Markus and Abdennadher, Slim},
  booktitle={Proceedings of the ACM SIGKDD workshop on outlier detection and description},
  pages={8--15},
  year={2013}
}

@article{fey2019fast,
  title={Fast graph representation learning with PyTorch Geometric},
  author={Fey, Matthias and Lenssen, Jan Eric},
  journal={arXiv preprint arXiv:1903.02428},
  year={2019}
}

@article{wang2019deep,
  title={Deep graph library: A graph-centric, highly-performant package for graph neural networks},
  author={Wang, Minjie and Zheng, Da and Ye, Zihao and Gan, Quan and Li, Mufei and Song, Xiang and Zhou, Jinjing and Ma, Chao and Yu, Lingfan and Gai, Yu and others},
  journal={arXiv preprint arXiv:1909.01315},
  year={2019}
}

@article{wang2024unifying,
  title={Unifying unsupervised graph-level anomaly detection and out-of-distribution detection: A benchmark},
  author={Wang, Yili and Liu, Yixin and Shen, Xu and Li, Chenyu and Ding, Kaize and Miao, Rui and Wang, Ying and Pan, Shirui and Wang, Xin},
  journal={arXiv preprint arXiv:2406.15523},
  year={2024}
}

@article{luo2022deepgladc,
  title={Deep graph level anomaly detection with contrastive learning},
  author={Luo, Xuexiong and Wu, Jia and Yang, Jian and Xue, Shan and Peng, Hao and Zhou, Chuan and Chen, Hongyang and Li, Zhao and Sheng, Quan Z},
  journal={Scientific Reports},
  volume={12},
  number={1},
  pages={19867},
  year={2022},
  publisher={Nature Publishing Group UK London}
}

@inproceedings{kashima2003marginalized,
  title={Marginalized kernels between labeled graphs},
  author={Kashima, Hisashi and Tsuda, Koji and Inokuchi, Akihiro},
  booktitle={Proceedings of the 20th international conference on machine learning (ICML-03)},
  pages={321--328},
  year={2003}
}

@book{atkinson2005theoretical,
  title={Theoretical Numerical Analysis: A Functional Analysis Framework},
  author={Atkinson, Kendall and Han, Weimin},
  series={Texts in Applied Mathematics},
  volume={39},
  edition={2nd},
  year={2005},
  publisher={Springer New York},
  doi={10.1007/978-0-387-28769-0}
}

@InProceedings{pmlr-v235-sun24i,
  title = 	 {Learning Graph Representation via Graph Entropy Maximization},
  author =       {Sun, Ziheng and Wang, Xudong and Ding, Chris and Fan, Jicong},
  booktitle = 	 {Proceedings of the 41st International Conference on Machine Learning},
  pages = 	 {47133--47158},
  year = 	 {2024},
  editor = 	 {Salakhutdinov, Ruslan and Kolter, Zico and Heller, Katherine and Weller, Adrian and Oliver, Nuria and Scarlett, Jonathan and Berkenkamp, Felix},
  volume = 	 {235},
  series = 	 {Proceedings of Machine Learning Research},
  month = 	 {21--27 Jul},
  publisher =    {PMLR},
}

@article{van2008visualizing,
  title={Visualizing data using t-SNE},
  author={van der Maaten, Laurens and Hinton, Geoffrey},
  journal={Journal of Machine Learning Research},
  volume={9},
  number={11},
  pages={2579--2605},
  year={2008}
}

@article{siglidis2020grakelgraphkernellibrary,
  title={GraKeL: A Graph Kernel Library in Python},
  author={Siglidis, Giannis and Nikolentzos, Giannis and Limnios, Stratis and Giatsidis, Christos and Skianis, Konstantinos and Vazirgiannis, Michalis},
  journal={Journal of Machine Learning Research},
  volume={21},
  number={54},
  pages={1--5},
  year={2020}
}

@book{wasserman2006all,
  title={All of nonparametric statistics},
  author={Wasserman, Larry},
  year={2006},
  publisher={Springer Science \& Business Media}
}

@book{dudley2014uniform,
  title={Uniform Central Limit Theorems},
  author={Dudley, R. M.},
  series={Cambridge Studies in Advanced Mathematics},
  volume={142},
  edition={2nd},
  year={2014},
  publisher={Cambridge University Press},
  doi={10.1017/CBO9781139014830},
  isbn={9781139014830}
}

@article{qiao2024deep,
  title={Deep graph anomaly detection: A survey and new perspectives},
  author={Qiao, Hezhe and Tong, Hanghang and An, Bo and King, Irwin and Aggarwal, Charu and Pang, Guansong},
  journal={arXiv preprint arXiv:2409.09957},
  year={2024}
}

@inproceedings{jin2021anemone,
  title={Anemone: Graph anomaly detection with multi-scale contrastive learning},
  author={Jin, Ming and Liu, Yixin and Zheng, Yu and Chi, Lianhua and Li, Yuan-Fang and Pan, Shirui},
  booktitle={Proceedings of the 30th ACM international conference on information \& knowledge management},
  pages={3122--3126},
  year={2021}
}

@inproceedings{shen2024optimizing,
  title={Optimizing ood detection in molecular graphs: A novel approach with diffusion models},
  author={Shen, Xu and Wang, Yili and Zhou, Kaixiong and Pan, Shirui and Wang, Xin},
  booktitle={Proceedings of the 30th ACM SIGKDD Conference on Knowledge Discovery and Data Mining},
  pages={2640--2650},
  year={2024}
}

@inproceedings{breunig2000lof,
  title={{LOF}: Identifying Density-Based Local Outliers},
  author={Breunig, Markus M and Kriegel, Hans-Peter and Ng, Raymond T and Sander, J{\"o}rg},
  booktitle={Proceedings of the 2000 ACM SIGMOD international conference on Management of data},
  pages={93--104},
  year={2000},
  doi={10.1145/342009.335388}
}

@article{scholkopf2001estimating,
  title={Estimating the support of a high-dimensional distribution},
  author={Sch{\"o}lkopf, Bernhard and Platt, John C and Shawe-Taylor, John and Smola, Alex J and Williamson, Robert C},
  journal={Neural computation},
  volume={13},
  number={7},
  pages={1443--1471},
  year={2001},
  publisher={MIT Press}
}

@article{kim2012robust,
  title={Robust Kernel Density Estimation},
  author={Kim, JooSeuk and Scott, Clayton D.},
  journal={Journal of Machine Learning Research},
  volume={13},
  number={82},
  pages={2529--2565},
  year={2012}
}

@book{barnett1994outliers,
  title={Outliers in statistical data},
  author={Barnett, Vic and Lewis, Toby},
  edition={3rd},
  year={1994},
  publisher={Wiley}
}

@article{parzen1962estimation,
  title={On estimation of a probability density function and mode},
  author={Parzen, Emanuel},
  journal={The annals of mathematical statistics},
  volume={33},
  number={3},
  pages={1065--1076},
  year={1962},
  publisher={JSTOR}
}

@article{beckman1983outlier,
  title={Outlier{\ldots}{\ldots}{\ldots}s},
  author={Beckman, R. J. and Cook, R. D.},
  journal={Technometrics},
  volume={25},
  number={2},
  pages={119--149},
  year={1983},
  month=may,
  publisher={Taylor \& Francis},
  doi={10.1080/00401706.1983.10487840}
}

@inproceedings{hagberg2008exploring,
  title={Exploring network structure, dynamics, and function using NetworkX},
  author={Hagberg, Aric A. and Schult, Daniel A. and Swart, Pieter J.},
  booktitle={Proceedings of the 7th Python in Science Conference},
  pages={11--15},
  year={2008},
  address={Pasadena, CA USA}
}

@article{barabasi1999emergence,
  title={Emergence of Scaling in Random Networks},
  author={Barab{\'a}si, Albert-L{\'a}szl{\'o} and Albert, R{\'e}ka},
  journal={Science},
  volume={286},
  number={5439},
  pages={509--512},
  year={1999},
  publisher={American Association for the Advancement of Science},
  doi={10.1126/science.286.5439.509}
}

@article{watts1998collective,
  title={Collective dynamics of {``small-world''} networks},
  author={Watts, Duncan J and Strogatz, Steven H},
  journal={Nature},
  volume={393},
  number={6684},
  pages={440--442},
  year={1998},
  publisher={Nature Publishing Group},
  doi={10.1038/30918}
}

@article{holland1983stochastic,
  title={Stochastic blockmodels: First steps},
  author={Holland, Paul W and Laskey, Kathryn Blackmond and Leinhardt, Samuel},
  journal={Social networks},
  volume={5},
  number={2},
  pages={109--137},
  year={1983},
  publisher={Elsevier}
}

@book{tsybakov2009nonparametric,
  title = {Introduction to Nonparametric Estimation},
    author={Tsybakov, Alexandre B},
  year = {2009},
  publisher = {Springer},
  address = {New York, NY},
  doi = {10.1007/b13794},
  isbn = {978-0-387-79051-0},
}

@book{cochran1977sampling,
  title={Sampling techniques},
  author={Cochran, William Gemmell},
  year={1977},
  publisher={john wiley \& sons}
}

@article{kahn1953methods,
  title={Methods of reducing sample size in Monte Carlo computations},
  author={Kahn, Herman and Marshall, Andy W},
  journal={Journal of the Operations Research Society of America},
  volume={1},
  number={5},
  pages={263--278},
  year={1953},
  publisher={INFORMS}
}

@inproceedings{wang2020relational,
    title = "Relational Graph Attention Network for Aspect-based Sentiment Analysis",
    author = "Wang, Kai  and Shen, Weizhou  and Yang, Yunyi  and Quan, Xiaojun  and Wang, Rui",
    editor = "Jurafsky, Dan  and Chai, Joyce  and Schluter, Natalie  and Tetreault, Joel",
    booktitle = "Proceedings of the 58th Annual Meeting of the Association for Computational Linguistics",
    month = jul,
    year = "2020",
    publisher = "Association for Computational Linguistics",
    doi = "10.18653/v1/2020.acl-main.295",
    pages = "3229--3238"
}

@article{rossi2020temporal,
  title={Temporal graph networks for deep learning on dynamic graphs},
  author={Rossi, Emanuele and Chamberlain, Ben and Frasca, Fabrizio and Eynard, Davide and Monti, Federico and Bronstein, Michael},
  journal={arXiv preprint arXiv:2006.10637},
  year={2020}
}

@inproceedings{wang2026adaptive,
  title={Adaptive Riemannian Graph Neural Networks},
  author={Wang, Xudong and Ding, Chris and Li, Tongxin and Fan, Jicong},
  booktitle={Proceedings of the AAAI Conference on Artificial Intelligence},
  volume={40},
  pages={26606--26614},
  year={2026}
}

@inproceedings{wang2025explainable,
title= {Explainable Graph Representation Learning via Graph Pattern Analysis},
author= {Wang, Xudong and Sun, Ziheng and Ding, Chris and Fan, Jicong},
booktitle = {Proceedings of the Thirty-Fourth International Joint Conference on
Artificial Intelligence, {IJCAI-25}},
publisher = {International Joint Conferences on Artificial Intelligence Organization},
editor= {James Kwok},
pages= {3426--3434},
year= {2025},
month= {8},
note= {Main Track},
doi= {10.24963/ijcai.2025/381}
}

@inproceedings{chami2019hyperbolic,
  title={Hyperbolic graph convolutional neural networks},
  author={Chami, Ines and Ying, Rex and R{\'e}, Christopher and Leskovec, Jure},
  booktitle={Advances in Neural Information Processing Systems},
  volume={32},
  year={2019}
}

@article{ying2019gnnexplainer,
  title={Gnnexplainer: Generating explanations for graph neural networks},
  author={Ying, Zhitao and Bourgeois, Dylan and You, Jiaxuan and Zitnik, Marinka and Leskovec, Jure},
  journal={Advances in neural information processing systems},
  volume={32},
  year={2019}
}

@inproceedings{wu2023rethinking,
  title={Rethinking explaining graph neural networks via non-parametric subgraph matching},
  author={Wu, Fang and Li, Siyuan and Jin, Xurui and Jiang, Yinghui and Radev, Dragomir and Niu, Zhangming and Li, Stan Z},
  booktitle={International conference on machine learning},
  pages={37511--37523},
  year={2023},
  organization={PMLR}
}

@article{grover2025spectro,
  title={Spectro-Riemannian graph neural networks},
  author={Grover, Karish and Yu, Haiyang and Song, Xiang and Zhu, Qi and Xie, Han and Ioannidis, Vassilis N and Faloutsos, Christos},
  journal={arXiv preprint arXiv:2502.00401},
  year={2025}
}

@inproceedings{guo2025graphmore,
  title={GraphMoRE: Mitigating Topological Heterogeneity via Mixture of Riemannian Experts},
  author={Guo, Zihao and Sun, Qingyun and Yuan, Haonan and Fu, Xingcheng and Zhou, Min and Gao, Yisen and Li, Jianxin},
  booktitle={Proceedings of the AAAI Conference on Artificial Intelligence},
  volume={39},
  pages={11754--11762},
  year={2025}
}

@article{williams2000using,
  title={Using the Nystr{\"o}m method to speed up kernel machines},
  author={Williams, Christopher and Seeger, Matthias},
  journal={Advances in neural information processing systems},
  volume={13},
  year={2000}
}

@article{spielman2011spectral,
  title={Spectral sparsification of graphs},
  author={Spielman, Daniel A and Teng, Shang-Hua},
  journal={SIAM Journal on Computing},
  volume={40},
  number={4},
  pages={981--1025},
  year={2011},
  publisher={SIAM}
}

@article{kim2024rethinking,
  title={Rethinking reconstruction-based graph-level anomaly detection: limitations and a simple remedy},
  author={Kim, Sunwoo and Lee, Soo Yong and Bu, Fanchen and Kang, Shinhwan and Kim, Kyungho and Yoo, Jaemin and Shin, Kijung},
  journal={Advances in Neural Information Processing Systems},
  volume={37},
  pages={95931--95962},
  year={2024}
}

@inproceedings{song2025uniform,
  title={Uniform: Towards unified framework for anomaly detection on graphs},
  author={Song, Chuancheng and Lin, Xixun and Shen, Hanyang and Shang, Yanmin and Cao, Yanan},
  booktitle={Proceedings of the AAAI Conference on Artificial Intelligence},
  volume={39},
  pages={12559--12567},
  year={2025}
}

@misc{liu2021graphebm,
      title={GraphEBM: Molecular Graph Generation with Energy-Based Models}, 
      author={Meng Liu and Keqiang Yan and Bora Oztekin and Shuiwang Ji},
      year={2021},
      eprint={2102.00546},
      archivePrefix={arXiv},
      primaryClass={cs.LG},
      url={https://arxiv.org/abs/2102.00546}, 
}

@inproceedings{levina2004maximum,
 author = {Levina, Elizaveta and Bickel, Peter},
 booktitle = {Advances in Neural Information Processing Systems},
 editor = {L. Saul and Y. Weiss and L. Bottou},
 publisher = {MIT Press},
 title = {Maximum Likelihood Estimation of Intrinsic Dimension},
 volume = {17},
 year = {2004}
}

@inproceedings{hein2005intrinsic,
  title={Intrinsic dimensionality estimation of submanifolds in {$R^d$}},
  author={Hein, Matthias and Audibert, Jean-Yves},
  booktitle={Proceedings of the 22nd international conference on Machine learning},
  pages={289--296},
  year={2005}
}
\bibliographystyle{icml2026}

\newpage
\appendix
\onecolumn
\section{Symbol and Notation}\label{app:preliminaries_revised_notation}
\vspace{-0.2cm}
This appendix provides a comprehensive reference (Table~\ref{tab:notation}) for the principal notation used throughout the paper. While we strive to maintain consistent notation, specialized symbols that appear exclusively in proofs or specific derivations are defined in their respective contexts. We adhere to the following conventions:
\begin{itemize}[leftmargin=*,noitemsep,topsep=0pt,parsep=0pt]
    \item bold uppercase letters (e.g., $\mathbf{A}$, $\mathbf{X}$) denote matrices; bold lowercase letters (e.g., $\mathbf{z}$) represent vectors;
    \item calligraphic letters (e.g., $\mathcal{G}$, $\mathcal{H}$) indicate sets or spaces; uppercase letters (e.g., $G$, $N$) typically represent graphs, counts, or constants; lowercase letters (e.g., $h$, $d$) typically represent scalar values or functions; and Greek letters (e.g., $\gamma$, $\pi$) denote parameters or functional elements.
    \item Superscripts and subscripts are used to index specific instances (e.g., $G_i$ for the $i$-th graph) or to indicate special properties or variants of a symbol.
\end{itemize}
\vspace{-0.2cm}
{\footnotesize
\begin{longtable}{p{0.3\textwidth}p{0.65\textwidth}}
\caption{Summary of Notation and Symbols}\label{tab:notation} \\
\toprule
\textbf{Symbol(s)} & \textbf{Description} \\
\midrule
\endfirsthead
\caption[]{Summary of Notations (Continued)} \\
\toprule
\textbf{Symbol(s)} & \textbf{Description} \\
\midrule
\endhead
\bottomrule
\endlastfoot
$\mathbb{G}$ & Space of attributed graphs. \\
$G_i = (V_i, E_i, \mathbf{X}_i)$ & Graph $i$ with nodes $V_i$, edges $E_i$, and node features $\mathbf{X}_i$. \\
$\mathbf{A}_i, \mathbf{X}_i, n_i$ & Adjacency matrix, feature matrix ($d_{in}$-dim), and node count for $G_i$. \\
$\tilde{G}$ & Perturbed version of a graph $G$. \\
$\hat{\mathbf{A}}$ & Normalized adjacency matrix: $\mathbf{D}^{-1/2}(\mathbf{A}+\mathbf{I})\mathbf{D}^{-1/2}$. \\
$\mathcal{S}_h, \mathcal{S}_m, \mathcal{S}_l; E(k); \tau_1, \tau_2$ & Singular value sets, cumulative energy, and thresholds for spectral perturbation. \\
$r_{swap}$, $p_{pert}$, $\flag$ & Ratio of nodes for feature swapping; the fraction of singular values be modified; edge operation \(\flag \in \{0, 1\}\) (0: removal, 1: addition) \\
\midrule
$\mathbb{P}^*, f^*$ & True underlying probability measure and density function of graphs. \\
$\kdestimate$ & Estimated KDE density function for graphs. \\
$\mu, P_{\mathbf{Z}}$ & Base measure on $\mathbb{G}$; Empirical distribution of node embeddings. \\
\midrule
$\boldsymbol{\theta}, \Phi_{\boldsymbol{\theta}}$ & GNN parameters ($\{\mathbf{W}^{(l)}\}_{l=1}^L$) and the GNN encoder. \\
$\mathbf{Z}_{i}, \mathbf{z}_p^{(i)}$ & Node embedding matrix for $G_i$ ($\in \mathbb{R}^{n_i \times d_{out}}$), and embedding of $p$-th node in $G_i$. \\
$d_{hid} \text{ and } d_{out}, \mathbf{W}^{(l)}, L$ & GNN hidden and output dimension, $l$-th layer weights, number of GNN layers. \\
\midrule
$d_{\text{MMD}}$ & MMD distance between graphs. \\
$\mathcal{K}_{\text{emb}}, k_{\text{emb}}, \gamma, \Gamma_{\text{emb}}$ & Family of MMD kernels ($k_{\text{emb}}$, typically Gaussian) acting on node embeddings, its bandwidth $\gamma$, and set of $S$ hyperparameters $\Gamma_{\text{emb}} = \{\gamma_s\}_{s=1}^{S}$. \\
$\mathcal{H}, \mu_{\delta_\mathbf{z}}, \mu_{P_\mathbf{Z}}$, $\mathcal{M}$ & RKHS for $k_{\text{emb}}$; Kernel mean embedding of a Dirac delta and of distribution $P_\mathbf{Z}$; Riemannian manifold induced by the MMD metric. \\
\midrule
$\kernelKDE, K_0, h, \SetHKDE$ & KDE kernel function (profile $K_0$, typically Gaussian) operating on $d_{\text{MMD}}$, its bandwidth $h$, and set of $M$ bandwidths $\SetHKDE = \{h_j\}_{j=1}^{M}$. \\
$\boldsymbol{\alpha}, \mathcal{A}, \pi_j, \phi_j$ & KDE mixture weight parameters ($\{\alpha_j\}$ in space $\mathcal{A}$), $j$-th mixture weight $\pi_j(\boldsymbol{\alpha})$, and $j$-th KDE component density $\phi_j$. \\
$d_{int}, C_{d_{int}}$ & Intrinsic dimension (set to 1 for KDE on pairwise distances), KDE normalization constant. \\
\midrule
$\mathcal{L}, \gammaTH$ & Training loss function; Anomaly rate threshold parameter. \\
$L_{\mu}(k), L_{\mu}^*$ & Lipschitz constant of kernel mean map for $k \in \mathcal{K}_{\text{emb}}$, and its supremum. \\
$L_K(h), \LipKDE, \ConstKDE$ & Lipschitz constant of $\kernelKDE(d,h)$ w.r.t $d$; Overall Lipschitz and 2nd-order coefficient for $\hat{f}_{\text{KDE}}$. \\
$\beta, \epsilon, \Delta_{perturb}$ & Smoothness of $f^*$; Accuracy in sample complexity; MMD bound from perturbation. \\
$B_{\Delta X}, B_{\Delta A}, B_{\Delta \hat{A}}, B_W, B_X$ & Norm bounds for perturbations and GNN/feature matrices. \\
\midrule
$\|\cdot\|_F$ & Frobenius norm for matrices, defined as $\|\mathbf{A}\|_F = \sqrt{\sum_{i,j} |a_{ij}|^2}$. \\
$\|\cdot\|_2$ & Euclidean norm for vectors; for matrices, denotes the spectral norm (largest singular value). \\
$\|\cdot\|_{\infty}$ & $L_\infty$ norm: for functions, $\|f\|_\infty = \sup_x |f(x)|$; for vectors, $\|\mathbf{x}\|_\infty = \max_i |x_i|$. \\
$\|\cdot\|_{\mathcal{H}}$ & RKHS norm in the Hilbert space $\mathcal{H}$ induced by kernel $k$. \\
\end{longtable}}

\section{More Experiment Results}
\subsection{Statistics of Datasets}
See Table \ref{tab:datasets}.
\begin{table}[h]
\centering
\caption{Statistics of the benchmark datasets.}
\label{tab:datasets}
\resizebox{0.75\textwidth}{!}{
\setlength{\tabcolsep}{1.2mm}
\renewcommand{\arraystretch}{0.97}
\begin{tabular}{lccccc}
\toprule
Dataset & \#Graphs & Avg. Nodes & Avg. Edges & \#Classes & Anomaly Ratio \\
\midrule
MUTAG & 188 & 17.93 & 19.79 & 2 & 33.5\% \\
PROTEINS & 1113 & 39.06 & 72.82 & 2 & 40.5\% \\
DD & 1178 & 284.32 & 715.66 & 2 & 41.7\% \\
ENZYMES & 600 & 32.63 & 62.14 & 6 & 16.7\% \\
DHFR & 756 & 42.43 & 44.54 & 2 & 39.0\% \\
BZR & 405 & 35.75 & 38.36 & 2 & 35.3\% \\
COX2 & 467 & 41.22 & 43.45 & 2 & 37.8\% \\
AIDS & 2000 & 15.69 & 16.20 & 2 & 31.2\% \\
IMDB-BINARY & 1000 & 19.77 & 96.53 & 2 & 50.0\% \\
NCI1 & 4110 & 29.87 & 32.30 & 2 & 35.1\% \\
COLLAB & 5000 & 74.49 & 2457.78 & 3 & 21.6\% \\
REDDIT-BINARY & 2000 & 429.63 & 497.75 & 2 & 50.0\% \\
\bottomrule
\multicolumn{6}{l}{\footnotesize{\textit{*IMDB-BINARY and REDDIT-BINARY are henceforth abbreviated as IMDB-B and REDDIT-B, respectively.}}}
\end{tabular}}
\end{table}

\subsection{Visualization and Comparison on MUTAG datasets}
See Figure~\ref{fig:all_methods_tsne_compare} for t-SNE visualizations comparing different representative methods of our benchmarks.\\
1) \textbf{Separation Quality}: LGKDE achieves the clearest separation between normal (Class 0) and anomalous (Class 1) graphs, displaying a concentric circular structure with normal samples forming a well-defined core surrounded by anomalous samples. This distinctive pattern aligns with its superior performance (AUROC: 91.63\%, AUPRC: 96.75\%).\\
2) \textbf{Deep Learning Methods}:
- SIGNET shows partial separation but with more scattered distribution (AUROC: 88.84\%, second best).
- OCGTL and UniFORM exhibit some clustering but with significant overlap between classes (AUROC: 88.02\% and 88.45\%).
- OCGIN shows the least structured distribution among deep methods (AUROC: 79.55\%).\\
3) \textbf{Traditional Kernels}:
- WL and PK kernels both show substantial overlap between classes, explaining their relatively poor performance (WL-SVM AUROC: 62.18\%, PK-SVM AUROC: 46.06\%).
- Their t-SNE visualizations lack clear structural patterns, suggesting limited ability to capture meaningful graph similarities.\\
4) \textbf{Distribution Structure}: LGKDE's concentric arrangement suggests it successfully learns a meaningful metric space where normal graphs are tightly clustered while anomalous graphs are naturally pushed toward the periphery. This structure reflects the principle of density estimation where normal patterns should be concentrated in high-density regions.
The visualization results strongly correlate with quantitative metrics across all methods, with clearer visual separation corresponding to better performance in AUROC, AUPRC, and FPR95. This demonstrates the effectiveness of our learned kernel approach.

\begin{figure}[h]
    \centering
    \includegraphics[width=\linewidth]{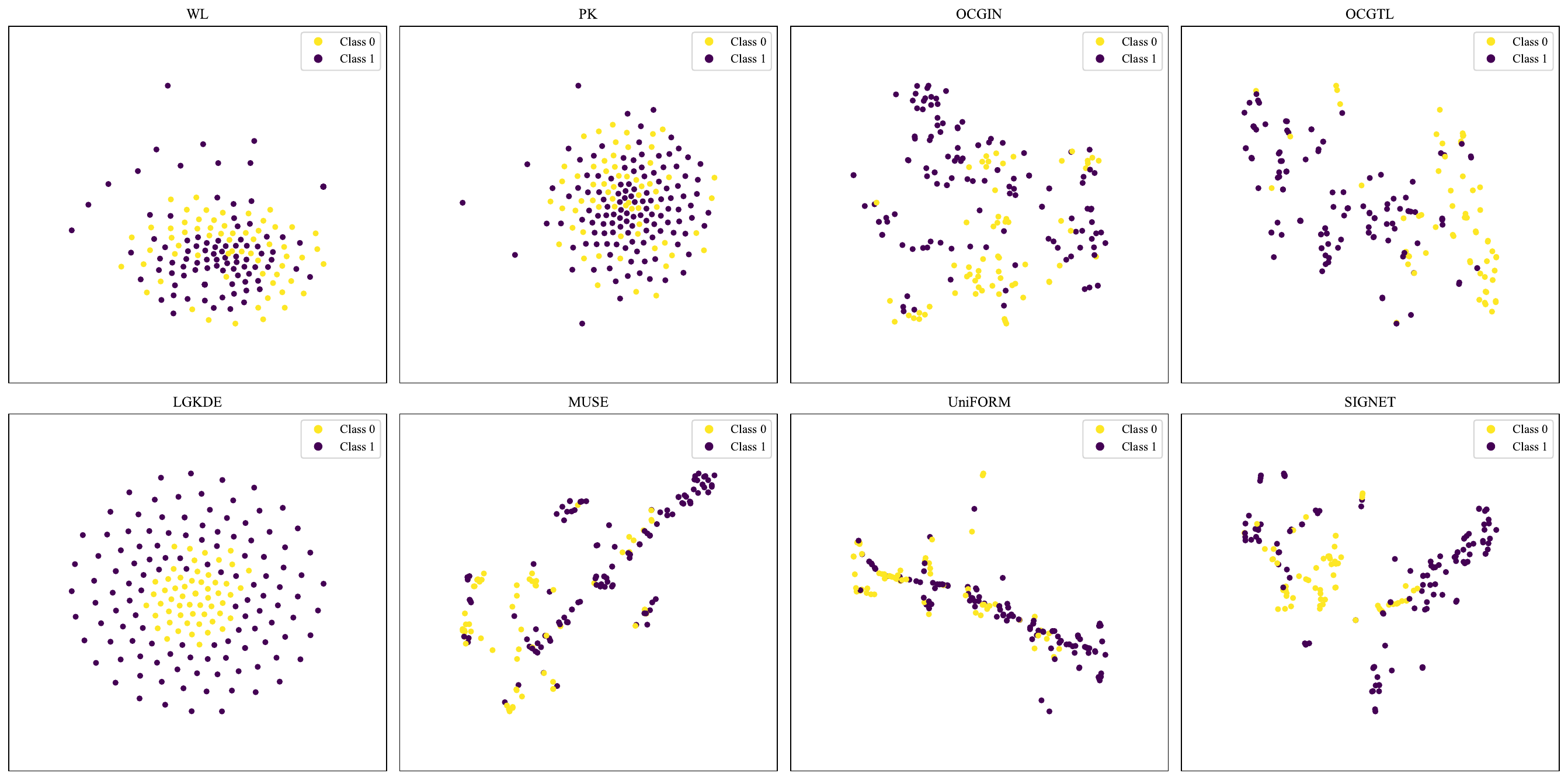}
    \caption{t-SNE visualization (perplexity=30) of learned kernel
matrix on the MUTAG dataset. For LGKDE, WL, and PK kernels, we use the learned kernel matrix to perform SVD to get $\mathbf{U}\mathbf{\Sigma}\mathbf{V^T}$ and use the $\mathbf{\Sigma}\mathbf{V^T}$ as the t-SNE inputs. For MUSE, UniFORM, SIGNET, OCGTL and OCGIN, we use their learned graph-level readout embeddings as the t-SNE inputs.}
    \label{fig:all_methods_tsne_compare}
\end{figure}

\subsection{Comparison in terms of AUPRC and FPR95 Results}\label{app:fpr95_results}

See Table~\ref{tab:auprc_results} and Table~\ref{tab:fpr95_results}. 

Our proposed LGKDE demonstrates compelling performance advantages in both precision-recall capabilities and false alarm control. For AUPRC, LGKDE achieves the highest average score of 84.01\%, substantially outperforming the second-best method CVTGAD (73.77\%). The performance improvement is particularly pronounced on datasets like ENZYMES (52.79\% vs. next best 43.65\%), MUTAG (96.75\% vs. 82.67\%), and NCI1 (92.22\% vs. 67.45\%). On datasets where LGKDE is not the top performer, it still maintains competitive performance with minimal gaps (e.g., BZR: 88.79\% vs. best 91.98\%; COLLAB: 68.51\% vs. best 70.59\%). In terms of controlling false positives at high recall rates, LGKDE achieves the lowest average FPR95 of 59.92\%, outperforming runner-up approaches CVTGAD (65.11\%) and MUSE (66.90\%). This represents a significant reduction in false alarms while maintaining high detection rates. LGKDE secures the best FPR95 with notable improvements on DHFR (68.95\% vs. next best 73.38\%). 

The consistent superior performance across both metrics is further validated by LGKDE achieving the best average ranks of 1.75 for both AUPRC and FPR95, demonstrating its robust detection capabilities across diverse graph structures and domains.
\begin{table*}[ht]
\centering
\caption{\footnotesize Comparison in terms of AUPRC(\%$\uparrow$) in graph-level anomaly detection. “Avg. AUPRC” and “Avg. Rank” are computed across all datasets. Top-3 performance are indicated by \capgold/\capsilver/\capbronze\ cell shading with superscripts \ding{172}/\ding{173}/\ding{174}.}
\label{tab:auprc_results}
\resizebox{1\textwidth}{!}{
\setlength{\tabcolsep}{1.2pt}
\begin{tabular}{@{}l|cccccccccccc|cc@{}}
\toprule
Method & MUTAG & PROTEINS & DD & ENZYMES & DHFR & BZR & COX2 & AIDS & IMDB-B & NCI1 & COLLAB & REDDIT-B & \makecell[c]{Avg.\\AUPRC} & \makecell[c]{Avg.\\Rank} \\
\midrule
\multicolumn{15}{l}{\textit{Graph Kernel + Detector}} \\
\midrule
PK-SVM  & $47.80_{\pm 0.25}$ & $44.07_{\pm 0.20}$ & $45.94_{\pm 0.25}$ & $40.83_{\pm 0.22}$ & $45.28_{\pm 0.20}$ & $47.04_{\pm 0.20}$ & $41.15_{\pm 0.18}$ & $42.46_{\pm 0.23}$ & $41.58_{\pm 0.23}$ & $41.97_{\pm 0.22}$ & $43.76_{\pm 0.25}$ & $45.22_{\pm 0.21}$ & $43.93$ & $10.58$ \\
PK-iF   & $45.89_{\pm 0.21}$ & $31.38_{\pm 0.19}$ & $16.29_{\pm 0.11}$ & \second{$43.65_{\pm 0.19}$} & $40.46_{\pm 0.18}$ & $33.70_{\pm 0.15}$ & $40.80_{\pm 0.19}$ & $41.30_{\pm 0.22}$ & $40.42_{\pm 0.21}$ & $43.22_{\pm 0.19}$ & $41.98_{\pm 0.20}$ & $47.55_{\pm 0.24}$ & $38.89$ & $11.42$ \\
WL-SVM  & $50.94_{\pm 0.25}$ & $43.39_{\pm 0.23}$ & $41.89_{\pm 0.19}$ & $21.56_{\pm 0.18}$ & $30.43_{\pm 0.15}$ & $74.22_{\pm 0.32}$ & $75.41_{\pm 0.30}$ & $10.97_{\pm 0.11}$ & $49.04_{\pm 0.19}$ & $63.42_{\pm 0.28}$ & $43.12_{\pm 0.19}$ & $61.20_{\pm 0.30}$ & $47.13$ & $11.25$ \\
WL-iF   & $55.15_{\pm 0.22}$ & $48.02_{\pm 0.27}$ & $46.61_{\pm 0.18}$ & $24.14_{\pm 0.21}$ & $34.42_{\pm 0.19}$ & $76.27_{\pm 0.28}$ & $81.08_{\pm 0.32}$ & $29.69_{\pm 0.15}$ & $59.29_{\pm 0.32}$ & $51.60_{\pm 0.23}$ & $52.15_{\pm 0.35}$ & $74.60_{\pm 0.34}$ & $52.75$ & $9.42$ \\
\midrule
\multicolumn{15}{l}{\textit{GNN-based Deep Learning Methods}} \\
\midrule
OCGIN    & $74.07_{\pm 0.24}$ & $79.78_{\pm 0.37}$ & $84.16_{\pm 0.33}$ & $32.67_{\pm 0.30}$ & $50.84_{\pm 0.29}$ & $88.32_{\pm 0.33}$ & $82.95_{\pm 0.35}$ & $93.42_{\pm 0.37}$ & $60.34_{\pm 0.18}$ & $54.19_{\pm 0.22}$ & $56.97_{\pm 0.32}$ & $85.63_{\pm 0.36}$ & $70.28$ & $5.83$ \\
GLocalKD & $75.45_{\pm 0.28}$ & $76.98_{\pm 0.30}$ & \second{$87.45_{\pm 0.39}$} & $29.32_{\pm 0.24}$ & $47.32_{\pm 0.25}$ & $84.17_{\pm 0.28}$ & $76.55_{\pm 0.29}$ & \third{$95.28_{\pm 0.35}$} & $54.49_{\pm 0.20}$ & $40.15_{\pm 0.16}$ & $47.89_{\pm 0.25}$ & $81.47_{\pm 0.28}$ & $66.38$ & $7.42$ \\
OCGTL    & $79.70_{\pm 0.33}$ & $70.85_{\pm 0.28}$ & $73.50_{\pm 0.21}$ & $24.09_{\pm 0.18}$ & $38.92_{\pm 0.21}$ & $79.82_{\pm 0.25}$ & $82.45_{\pm 0.32}$ & \best{$97.89_{\pm 0.40}$} & $58.45_{\pm 0.21}$ & \second{$67.45_{\pm 0.32}$} & $46.11_{\pm 0.18}$ & \second{$88.37_{\pm 0.33}$} & $67.30$ & $6.75$ \\
SIGNET    & $77.41_{\pm 0.25}$ & $75.21_{\pm 0.20}$ & $75.43_{\pm 0.25}$ & $22.17_{\pm 0.12}$ & $58.64_{\pm 0.35}$ & \best{$91.98_{\pm 0.40}$} & \best{$86.35_{\pm 0.39}$} & $63.58_{\pm 0.20}$ & \third{$67.12_{\pm 0.38}$} & \third{$66.45_{\pm 0.31}$} & \second{$69.24_{\pm 0.40}$} & $85.46_{\pm 0.31}$ & $69.92$ & $5.25$ \\
GLADC     & $72.12_{\pm 0.38}$ & $81.93_{\pm 0.40}$ & $74.14_{\pm 0.28}$ & $23.88_{\pm 0.21}$ & $44.65_{\pm 0.23}$ & $83.02_{\pm 0.27}$ & $82.47_{\pm 0.30}$ & $89.43_{\pm 0.31}$ & $61.72_{\pm 0.26}$ & $48.83_{\pm 0.25}$ & $58.47_{\pm 0.22}$ & $79.81_{\pm 0.23}$ & $66.71$ & $7.50$ \\
CVTGAD    & \third{$81.29_{\pm 0.30}$} & \second{$83.92_{\pm 0.35}$} & $72.87_{\pm 0.30}$ & $30.18_{\pm 0.20}$ & $52.43_{\pm 0.30}$ & \second{$90.12_{\pm 0.35}$} & \second{$85.23_{\pm 0.37}$} & \second{$96.35_{\pm 0.38}$} & \second{$68.89_{\pm 0.35}$} & $65.12_{\pm 0.25}$ & \best{$70.59_{\pm 0.35}$} & \third{$88.23_{\pm 0.27}$} & \second{$73.77$} & \second{$3.58$} \\
MUSE      & \second{$82.67_{\pm 0.27}$} & \third{$82.34_{\pm 0.31}$} & \third{$85.23_{\pm 0.34}$} & $31.89_{\pm 0.24}$ & \second{$61.34_{\pm 0.28}$} & $87.23_{\pm 0.29}$ & $65.87_{\pm 0.29}$ & $94.23_{\pm 0.18}$ & $58.67_{\pm 0.28}$ & $65.45_{\pm 0.32}$ & $63.89_{\pm 0.34}$ & $87.89_{\pm 0.29}$ & \third{$71.93$} & \third{$4.50$} \\
UniFORM   & $80.45_{\pm 0.29}$ & $81.56_{\pm 0.33}$ & $83.78_{\pm 0.32}$ & \third{$34.56_{\pm 0.22}$} & \third{$58.92_{\pm 0.31}$} & $87.65_{\pm 0.33}$ & $61.45_{\pm 0.29}$ & $94.12_{\pm 0.20}$ & $56.34_{\pm 0.31}$ & $64.78_{\pm 0.35}$ & $62.34_{\pm 0.37}$ & $86.45_{\pm 0.32}$ & $71.03$ & $5.67$ \\
\midrule
\textbf{LGKDE}     & \best{$96.75_{\pm 0.35}$} & \best{$85.08_{\pm 0.22}$} & \best{$89.31_{\pm 0.29}$} & \best{$52.79_{\pm 0.33}$} & \best{$78.64_{\pm 0.27}$} & \third{$88.79_{\pm 0.35}$} & \third{$84.98_{\pm 0.28}$} & $95.27_{\pm 0.13}$ & \best{$87.17_{\pm 0.34}$} & \best{$92.22_{\pm 0.58}$} & \third{$68.51_{\pm 0.23}$} & \best{$88.59_{\pm 0.25}$} & \best{$84.01$} & \best{$1.75$} \\
\bottomrule
\end{tabular}}
\end{table*}
\begin{table*}[ht]
\centering
\caption{\footnotesize Comparison in terms of FPR95(\%$\downarrow$) in graph-level anomaly detection. “Avg. FPR95” and “Avg. Rank” are computed across all datasets. Top-3 performance are indicated by \capgold/\capsilver/\capbronze\ cell shading with superscripts \ding{172}/\ding{173}/\ding{174}.}
\label{tab:fpr95_results}
\resizebox{1\textwidth}{!}{
\setlength{\tabcolsep}{1.2pt}
\begin{tabular}{@{}l|cccccccccccc|cc@{}}
\toprule
Method & MUTAG & PROTEINS & DD & ENZYMES & DHFR & BZR & COX2 & AIDS & IMDB-B & NCI1 & COLLAB & REDDIT-B & \makecell[c]{Avg.\\FPR95} & \makecell[c]{Avg.\\Rank} \\
\midrule
\multicolumn{15}{l}{\textit{Graph Kernel + Detector}} \\
\midrule
PK-SVM  & $88.40_{\pm 0.15}$ & $99.16_{\pm 0.23}$ & $97.58_{\pm 0.20}$ & $75.08_{\pm 0.19}$ & $97.14_{\pm 0.19}$ & $92.18_{\pm 0.20}$ & $97.83_{\pm 0.19}$ & $98.40_{\pm 0.20}$ & $98.61_{\pm 0.19}$ & $78.07_{\pm 0.25}$ & $99.07_{\pm 0.20}$ & $99.36_{\pm 0.23}$ & $93.41$ & $10.75$ \\
PK-iF   & $86.00_{\pm 0.24}$ & $96.28_{\pm 0.21}$ & $96.92_{\pm 0.19}$ & $87.57_{\pm 0.20}$ & $93.30_{\pm 0.18}$ & $94.10_{\pm 0.19}$ & $88.61_{\pm 0.16}$ & $58.22_{\pm 0.12}$ & $88.32_{\pm 0.16}$ & $96.41_{\pm 0.21}$ & $95.82_{\pm 0.18}$ & $81.97_{\pm 0.16}$ & $88.63$ & $10.00$ \\
WL-SVM  & $78.80_{\pm 0.29}$ & $99.78_{\pm 0.25}$ & $98.98_{\pm 0.22}$ & $76.40_{\pm 0.18}$ & $98.26_{\pm 0.21}$ & $92.94_{\pm 0.21}$ & $97.05_{\pm 0.18}$ & $99.78_{\pm 0.22}$ & $99.60_{\pm 0.22}$ & $78.51_{\pm 0.22}$ & $99.79_{\pm 0.23}$ & $98.00_{\pm 0.20}$ & $93.16$ & $11.17$ \\
WL-iF   & $86.00_{\pm 0.16}$ & $96.89_{\pm 0.22}$ & $97.74_{\pm 0.21}$ & $86.80_{\pm 0.21}$ & $93.70_{\pm 0.19}$ & $93.01_{\pm 0.20}$ & $89.29_{\pm 0.17}$ & $58.75_{\pm 0.14}$ & $89.80_{\pm 0.17}$ & $96.69_{\pm 0.23}$ & $96.50_{\pm 0.20}$ & $82.90_{\pm 0.17}$ & $89.01$ & $10.67$ \\
\midrule
\multicolumn{15}{l}{\textit{GNN-based Deep Learning Methods}} \\
\midrule
OCGIN    & $45.60_{\pm 0.21}$ & \best{$67.82_{\pm 0.32}$} & \best{$63.19_{\pm 0.32}$} & $78.93_{\pm 0.19}$ & $85.21_{\pm 0.20}$ & $72.89_{\pm 0.19}$ & $90.07_{\pm 0.19}$ & $13.57_{\pm 0.08}$ & $87.89_{\pm 0.20}$ & $98.63_{\pm 0.23}$ & $89.64_{\pm 0.19}$ & $53.55_{\pm 0.35}$ & $70.58$ & $6.92$ \\
GLocalKD & $48.00_{\pm 0.20}$ & $76.20_{\pm 0.22}$ & $69.52_{\pm 0.19}$ & $77.42_{\pm 0.17}$ & $78.84_{\pm 0.18}$ & $73.92_{\pm 0.20}$ & $91.04_{\pm 0.20}$ & $14.30_{\pm 0.09}$ & $97.83_{\pm 0.23}$ & $98.09_{\pm 0.22}$ & $91.56_{\pm 0.20}$ & \third{$46.59_{\pm 0.30}$} & $71.94$ & $7.67$ \\
OCGTL    & $39.20_{\pm 0.39}$ & $94.88_{\pm 0.24}$ & $92.22_{\pm 0.21}$ & $84.97_{\pm 0.21}$ & $94.03_{\pm 0.21}$ & $96.65_{\pm 0.23}$ & $85.12_{\pm 0.26}$ & \best{$1.40_{\pm 0.01}$} & $83.21_{\pm 0.18}$ & \best{$61.25_{\pm 0.30}$} & $87.22_{\pm 0.21}$ & $50.38_{\pm 0.19}$ & $72.54$ & $6.83$ \\
SIGNET    & \second{$32.00_{\pm 0.24}$} & $84.51_{\pm 0.20}$ & $75.36_{\pm 0.18}$ & $88.10_{\pm 0.23}$ & \second{$73.38_{\pm 0.19}$} & $71.54_{\pm 0.29}$ & \second{$79.14_{\pm 0.30}$} & $24.98_{\pm 0.10}$ & \second{$75.50_{\pm 0.25}$} & $80.98_{\pm 0.19}$ & \third{$81.56_{\pm 0.26}$} & $52.11_{\pm 0.21}$ & $68.26$ & $5.33$ \\
GLADC     & $39.20_{\pm 0.03}$ & \third{$71.07_{\pm 0.18}$} & \third{$68.20_{\pm 0.17}$} & $69.35_{\pm 0.32}$ & $82.76_{\pm 0.20}$ & $76.61_{\pm 0.21}$ & $85.67_{\pm 0.23}$ & $6.79_{\pm 0.05}$ & \third{$77.10_{\pm 0.24}$} & $96.81_{\pm 0.21}$ & \second{$81.07_{\pm 0.27}$} & $50.92_{\pm 0.18}$ & $67.13$ & $5.25$ \\
CVTGAD    & $38.80_{\pm 0.35}$ & $71.16_{\pm 0.30}$ & $72.43_{\pm 0.20}$ & \third{$65.37_{\pm 0.18}$} & $81.73_{\pm 0.25}$ & $73.31_{\pm 0.20}$ & $85.62_{\pm 0.25}$ & \second{$4.22_{\pm 0.03}$} & \best{$75.34_{\pm 0.27}$} & $85.12_{\pm 0.20}$ & $82.35_{\pm 0.25}$ & \second{$45.87_{\pm 0.17}$} & \second{$65.11$} & \second{$4.42$} \\
MUSE      & \third{$34.40_{\pm 0.28}$} & $72.45_{\pm 0.29}$ & $70.78_{\pm 0.28}$ & $68.23_{\pm 0.32}$ & \third{$75.67_{\pm 0.30}$} & \third{$69.78_{\pm 0.28}$} & \third{$84.23_{\pm 0.25}$} & $27.34_{\pm 0.16}$ & $78.50_{\pm 0.23}$ & \third{$77.23_{\pm 0.24}$} & $85.89_{\pm 0.30}$ & $58.34_{\pm 0.28}$ & \third{$66.90$} & \third{$4.75$} \\
UniFORM   & $35.20_{\pm 0.31}$ & $73.12_{\pm 0.31}$ & $69.95_{\pm 0.30}$ & \second{$63.45_{\pm 0.34}$} & $78.34_{\pm 0.32}$ & \second{$68.23_{\pm 0.29}$} & $85.89_{\pm 0.28}$ & $28.45_{\pm 0.18}$ & $79.23_{\pm 0.25}$ & $80.67_{\pm 0.28}$ & $82.45_{\pm 0.33}$ & $62.78_{\pm 0.31}$ & $67.31$ & $5.41$ \\
\midrule
\textbf{LGKDE}     & \best{$30.80_{\pm 0.27}$} & \second{$68.35_{\pm 0.30}$} & \second{$67.26_{\pm 0.25}$} & \best{$60.00_{\pm 0.24}$} & \best{$68.95_{\pm 0.23}$} & \best{$66.25_{\pm 0.35}$} & \best{$78.08_{\pm 0.41}$} & \third{$6.19_{\pm 0.08}$} & $78.05_{\pm 0.26}$ & \second{$71.84_{\pm 0.33}$} & \best{$80.15_{\pm 0.28}$} & \best{$43.12_{\pm 0.23}$} & \best{$59.92$} & \best{$1.75$} \\
\bottomrule
\end{tabular}}
\end{table*}

\subsection{Additional Datasets Suggested During Review}
\label{app:additional_rebuttal_datasets}

Following reviewer suggestions related to iGAD~\citep{zhang2022dual} and ARMET~\citep{li2024crossdomain}, we further evaluate LGKDE on four additional datasets spanning molecular, toxicological, and synthetic domains. Table~\ref{tab:additional_datasets_results} shows that LGKDE achieves the best AUROC, AUPRC, and FPR95 on all four datasets, supporting the robustness of the density-estimation formulation beyond the original TU benchmark set.

\begin{table}[h!]
\centering
\caption{Additional dataset results from the camera-ready revision. Higher AUROC/AUPRC and lower FPR95 are better.}
\label{tab:additional_datasets_results}
\resizebox{\textwidth}{!}{
\begin{tabular}{l|ccc|ccc|ccc|ccc}
\toprule
& \multicolumn{3}{c|}{\textbf{Mutagenicity}} & \multicolumn{3}{c|}{\textbf{NCI109}} & \multicolumn{3}{c|}{\textbf{FRANKENSTEIN}} & \multicolumn{3}{c}{\textbf{Tox21\_AhR}} \\
\textbf{Method} & AUROC $\uparrow$ & AUPRC $\uparrow$ & FPR95 $\downarrow$ & AUROC $\uparrow$ & AUPRC $\uparrow$ & FPR95 $\downarrow$ & AUROC $\uparrow$ & AUPRC $\uparrow$ & FPR95 $\downarrow$ & AUROC $\uparrow$ & AUPRC $\uparrow$ & FPR95 $\downarrow$ \\
\midrule
OCGIN & $57.12_{\pm0.78}$ & $65.67_{\pm0.52}$ & $71.23_{\pm1.34}$ & $58.45_{\pm0.82}$ & $64.12_{\pm0.55}$ & $73.56_{\pm1.42}$ & $47.34_{\pm0.89}$ & $50.89_{\pm0.61}$ & $83.12_{\pm1.52}$ & $55.78_{\pm0.73}$ & $63.12_{\pm0.49}$ & $75.34_{\pm1.28}$ \\
GLocalKD & $67.45_{\pm0.71}$ & $74.23_{\pm0.48}$ & $61.12_{\pm1.21}$ & $69.78_{\pm0.68}$ & $76.89_{\pm0.46}$ & $60.34_{\pm1.18}$ & $55.12_{\pm0.76}$ & $58.45_{\pm0.53}$ & $74.89_{\pm1.38}$ & $66.23_{\pm0.65}$ & $73.56_{\pm0.44}$ & $63.89_{\pm1.15}$ \\
CVTGAD & $68.89_{\pm0.63}$ & $76.56_{\pm0.44}$ & $58.34_{\pm1.15}$ & $71.23_{\pm0.61}$ & $78.45_{\pm0.42}$ & $57.12_{\pm1.09}$ & $56.78_{\pm0.68}$ & $61.56_{\pm0.48}$ & $72.45_{\pm1.27}$ & $67.45_{\pm0.58}$ & $74.78_{\pm0.40}$ & $61.67_{\pm1.06}$ \\
MUSE & $70.12_{\pm0.56}$ & $77.89_{\pm0.41}$ & $54.56_{\pm1.08}$ & $73.71_{\pm0.59}$ & $80.56_{\pm0.38}$ & $54.56_{\pm1.02}$ & $58.56_{\pm0.62}$ & $62.12_{\pm0.44}$ & $69.23_{\pm1.18}$ & $69.78_{\pm0.53}$ & $74.89_{\pm0.37}$ & $58.12_{\pm0.97}$ \\
UniFORM & $69.56_{\pm0.59}$ & $77.34_{\pm0.43}$ & $56.12_{\pm1.12}$ & $72.45_{\pm0.62}$ & $79.89_{\pm0.41}$ & $55.67_{\pm1.07}$ & $59.23_{\pm0.65}$ & $61.45_{\pm0.46}$ & $67.89_{\pm1.21}$ & $69.34_{\pm0.56}$ & $73.96_{\pm0.39}$ & $59.23_{\pm1.02}$ \\
SIGNET & $70.61_{\pm0.64}$ & $78.23_{\pm0.39}$ & $53.89_{\pm1.06}$ & $73.34_{\pm0.57}$ & $80.34_{\pm0.37}$ & $54.23_{\pm0.98}$ & $60.59_{\pm0.74}$ & $62.12_{\pm0.42}$ & $66.45_{\pm1.14}$ & $71.29_{\pm0.61}$ & $75.12_{\pm0.36}$ & $55.89_{\pm0.94}$ \\
\midrule
\textbf{LGKDE} & \bm{$72.97_{\pm0.52}$} & \bm{$80.35_{\pm0.35}$} & \bm{$50.91_{\pm0.89}$} & \bm{$76.12_{\pm0.48}$} & \bm{$85.63_{\pm0.33}$} & \bm{$49.89_{\pm0.85}$} & \bm{$62.23_{\pm0.67}$} & \bm{$64.71_{\pm0.38}$} & \bm{$60.89_{\pm1.05}$} & \bm{$73.56_{\pm0.53}$} & \bm{$78.01_{\pm0.31}$} & \bm{$52.97_{\pm0.82}$} \\
\bottomrule
\end{tabular}}
\end{table}

\subsection{Ablation Studies}\label{app:ablation}

We conduct comprehensive ablation studies to analyze the impact of various components and hyperparameters in LGKDE on the MUTAG Dataset. The main-text Table~\ref{tab:ablation} and Figure~\ref{fig:ablation} present the results on several key aspects:

\paragraph{Multi-scale KDE Impact} As shown in Table \ref{tab:ablation}, removing the multi-scale KDE component leads to significant performance degradation. Single bandwidth variants exhibit varying performance levels, with optimal performance at moderate bandwidth values (h = 0.1) achieving 86.92\% AUROC, still notably below the full model's 91.63\%. This validates the effectiveness of our multi-scale approach in capturing graph patterns at different granularities.

\paragraph{GNN Layer Analysis} Figure \ref{fig:ablation}(upper) shows the impact of GNN layer depth on model performance. Performance peaks at 2 layers (91.63\% AUROC, 96.75\% AUPRC) before gradually declining, indicating the onset of over-smoothing at deeper architectures. This suggests that moderate-depth GNNs are sufficient for capturing relevant structural information while avoiding over-smoothing effects.

\paragraph{Bandwidth Sensitivity} Figure \ref{fig:ablation} (middle) illustrates how single bandwidth values affect performance. The model shows optimal performance in the range of 0.01-1.0, with significant degradation at extreme values. This demonstrates the importance of appropriate bandwidth selection and justifies our multi-scale approach with learnable weights.

\paragraph{Robustness to Training Contamination and Density Separation}\label{app:ablation_robust}
Figure \ref{fig:ablation} (bottom) examines model robustness when training data is contaminated with anomalous samples. Performance remains relatively stable up to 10\% contamination (85.26\% AUROC) before showing noticeable degradation at higher ratios. Complementary experiments analyzing the density gap between known normal and anomalous samples under varying contamination levels further confirm the model's ability to learn a meaningful density separation, even with polluted training data (see Appendix \ref{app:density_gap_exp} and Figure \ref{fig:appendix-density-gap}). This demonstrates the model's resilience to moderate levels of training data pollution, although maintaining clean training data is preferable for optimal performance.

\paragraph{MMD Distance and Learnable Weights} Replacing the MMD distance with simple graph readouts results in approximately 7\% AUROC degradation. Similarly, removing learnable bandwidth weights leads to a 2.7\% AUROC decrease, highlighting the importance of adaptive multi-scale modeling. These results validate the key design choices in our framework.

\paragraph{Ablation on Perturbation Strategies}\label{app:ablation_perturbation}

To systematically evaluate the contribution of each perturbation component, we conduct controlled experiments comparing different perturbation variants on MUTAG and PROTEINS datasets. Table~\ref{tab:perturbation_ablation} presents the results:

\begin{table}[ht]
\centering
\caption{Ablation study on perturbation strategies. Results demonstrate that structure-aware spectral perturbations contribute significantly more than feature-only or random perturbations, with the full combination achieving synergistic performance gains.}
\label{tab:perturbation_ablation}
\resizebox{0.95\textwidth}{!}{
\begin{tabular}{l|cc|cc}
\toprule
Perturbation Variant & MUTAG AUROC (\%) & MUTAG AUPRC (\%) & PROTEINS AUROC (\%) & PROTEINS AUPRC (\%) \\
\midrule
Feature-only & $84.23_{\pm 0.51}$ & $89.67_{\pm 0.48}$ & $72.34_{\pm 0.34}$ & $78.56_{\pm 0.38}$ \\
Spectral-only & $87.89_{\pm 0.46}$ & $92.45_{\pm 0.42}$ & $75.67_{\pm 0.31}$ & $81.23_{\pm 0.35}$ \\
Random edges (20\%) & $81.45_{\pm 0.58}$ & $86.78_{\pm 0.55}$ & $69.12_{\pm 0.41}$ & $75.89_{\pm 0.44}$ \\
\midrule
\textbf{Full (feat + spectral)} & \bm{$91.63_{\pm 0.31}$} & \bm{$96.75_{\pm 0.35}$} & \bm{$78.97_{\pm 0.26}$} & \bm{$85.08_{\pm 0.22}$} \\
\bottomrule
\end{tabular}}
\end{table}

The results reveal several key insights: (1) \textbf{Spectral perturbations are more effective than feature perturbations} ($87.89\%$ vs. $84.23\%$ on MUTAG), demonstrating that structure-aware modifications capture more critical anomaly patterns than attribute-level noise. (2) \textbf{Random edge perturbations underperform significantly} ($81.45\%$), validating that our energy-based spectral perturbation strategy generates higher-quality contrastive samples by targeting spectrally meaningful components rather than arbitrary edges. (3) \textbf{Combined perturbations achieve synergistic gains} ($+3.74\%$ over spectral-only), indicating that feature-level and structure-level perturbations capture complementary aspects of graph normality. These empirical findings align with our theoretical robustness analysis (Theorem~\ref{prop:robustness_mmd_graph_revised_new}, Corollary~\ref{cor:robustness_kde_graph_revised}), which formally characterizes how controlled perturbations affect the MMD distance and enable robust density learning.

\paragraph{Sensitivity Analysis of Energy-Based Permutation Thresholds}\label{app:ablation_tau_sensitivity}

To validate the robustness of our energy-based spectral perturbation strategy, we conduct comprehensive grid-search experiments on the threshold parameters $(\tau_1, \tau_2)$. Table~\ref{tab:tau_sensitivity_auroc} reports AUROC performance across different threshold combinations on MUTAG, while Table~\ref{tab:tau_sensitivity_edges} quantifies the corresponding average edge modification ratios.

\begin{table}[ht]
\centering
\caption{AUROC (\%) performance for different $(\tau_1, \tau_2)$ combinations on four datasets. The default setting $(0.50,0.75)$ is optimal or near-optimal across datasets.}
\label{tab:tau_sensitivity_auroc}
\resizebox{\textwidth}{!}{
\begin{tabular}{c|cccc|cccc}
\toprule
& \multicolumn{4}{c|}{\textbf{MUTAG}} & \multicolumn{4}{c}{\textbf{PROTEINS}} \\
\diagbox{$\tau_1$}{$\tau_2$} & 0.70 & \textbf{0.75} & 0.80 & 0.85 & 0.70 & \textbf{0.75} & 0.80 & 0.85 \\
\midrule
0.45 & $87.23_{\pm0.52}$ & $89.14_{\pm0.48}$ & $85.67_{\pm0.54}$ & $77.89_{\pm0.61}$ & $76.93_{\pm0.34}$ & $77.89_{\pm0.31}$ & $76.12_{\pm0.38}$ & $72.34_{\pm0.43}$ \\
\textbf{0.50} & $88.91_{\pm0.46}$ & \bm{$91.63_{\pm0.31}$} & $89.45_{\pm0.44}$ & $82.34_{\pm0.56}$ & $77.67_{\pm0.29}$ & \bm{$78.97_{\pm0.26}$} & $78.23_{\pm0.30}$ & $76.12_{\pm0.37}$ \\
0.55 & $87.56_{\pm0.49}$ & $88.72_{\pm0.47}$ & $85.23_{\pm0.53}$ & $78.91_{\pm0.58}$ & $77.23_{\pm0.33}$ & $77.78_{\pm0.32}$ & $77.45_{\pm0.36}$ & $73.89_{\pm0.41}$ \\
0.60 & $83.45_{\pm0.55}$ & $84.67_{\pm0.53}$ & $80.12_{\pm0.59}$ & $74.38_{\pm0.64}$ & $75.82_{\pm0.39}$ & $76.56_{\pm0.37}$ & $75.23_{\pm0.42}$ & $71.56_{\pm0.48}$ \\
\midrule
& \multicolumn{4}{c|}{\textbf{ENZYMES}} & \multicolumn{4}{c}{\textbf{NCI1}} \\
& 0.70 & \textbf{0.75} & 0.80 & 0.85 & 0.70 & 0.75 & \textbf{0.80} & 0.85 \\
\midrule
0.45 & $50.87_{\pm0.58}$ & $51.89_{\pm0.52}$ & $50.78_{\pm0.62}$ & $49.82_{\pm0.69}$ & $75.89_{\pm0.33}$ & $76.12_{\pm0.31}$ & $75.56_{\pm0.36}$ & $74.73_{\pm0.41}$ \\
\textbf{0.50} & $51.56_{\pm0.51}$ & \bm{$52.79_{\pm0.45}$} & $52.12_{\pm0.54}$ & $50.57_{\pm0.63}$ & $76.23_{\pm0.30}$ & $76.67_{\pm0.28}$ & \bm{$76.78_{\pm0.29}$} & $75.12_{\pm0.35}$ \\
0.55 & $51.12_{\pm0.56}$ & $51.45_{\pm0.53}$ & $50.89_{\pm0.60}$ & $49.56_{\pm0.67}$ & $75.78_{\pm0.34}$ & $76.45_{\pm0.32}$ & $76.34_{\pm0.33}$ & $75.34_{\pm0.38}$ \\
0.60 & $49.33_{\pm0.64}$ & $50.56_{\pm0.59}$ & $49.89_{\pm0.66}$ & $48.34_{\pm0.72}$ & $74.56_{\pm0.39}$ & $75.07_{\pm0.37}$ & $74.89_{\pm0.40}$ & $72.89_{\pm0.45}$ \\
\bottomrule
\end{tabular}}
\end{table}

The sensitivity results reveal that \textbf{our default setting $(\tau_1, \tau_2) = (0.5, 0.75)$ achieves optimal or near-optimal performance} across datasets, with the neighboring $(0.5,0.8)$ setting slightly better on NCI1. On MUTAG, the default reaches 91.63\% AUROC and remains robust to modest variations. When perturbations become too weak (small $\tau_2 - \tau_1$ gap), the density contrast diminishes and performance saturates; conversely, overly strong perturbations (large $\tau_2$ or small $\tau_1$) heavily distort graph structures, causing sharp performance degradation (e.g., $74.38\%$ on MUTAG at $(\tau_1, \tau_2) = (0.60, 0.85)$). 

Critically, the default setting modifies approximately 15-25\% of edges (Table~\ref{tab:tau_sensitivity_edges}), which aligns well with the moderate perturbation strength commonly adopted in graph contrastive learning~\citep{you2020graph,xu2021infogcl}. However, unlike random edge perturbations that yield only 81.45\% AUROC (Table~\ref{tab:perturbation_ablation}), our energy-based strategy targets spectrally meaningful components, generating higher-quality contrastive counterparts that better characterize the boundary of normal density regions. This design choice is further validated by our robustness theorems (Theorem~\ref{prop:robustness_mmd_graph_revised_new}, Corollary~\ref{cor:robustness_kde_graph_revised}), which quantify how controlled spectral perturbations affect the learned density estimates.

\paragraph{Parameter-Controlled Experiments: KDE vs. GNN Capacity}\label{app:ablation_parameter_control}
To isolate the contribution of the learnable multi-scale KDE component from the effect of additional model parameters, we conduct controlled experiments varying: (1) GNN hidden dimensions $\{32, 64, 128, 256\}$, (2) GNN layers $\{1, 2, 3, 4, 5\}$, and (3) KDE configurations: Multi-scale Learnable (ours), Multi-scale Fixed (equal weights), and Single Best bandwidth ($h=0.1$). Table~\ref{tab:param_control_unified} presents AUROC (\%) results on MUTAG across all configurations.

\begin{table}[ht]
\centering
\caption{AUROC (\%) performance on MUTAG across different GNN architectures and KDE configurations. The learnable multi-scale KDE (top block) achieves optimal performance with moderate GNN capacity, while fixed weights (middle block) and single bandwidth (bottom block) underperform even with massively increased parameters, demonstrating that gains stem from adaptive density estimation rather than model capacity.}\label{tab:param_control_unified}
\resizebox{0.9\textwidth}{!}{
\begin{tabular}{l|c|ccccc}
\toprule
\multirow{2}{*}{KDE Configuration} & \multirow{2}{*}{Hidden Dim} & \multicolumn{5}{c}{Number of GNN Layers} \\
\cmidrule(lr){3-7}
 & & $L=1$ & $L=2$ & $L=3$ & $L=4$ & $L=5$ \\
\midrule
\multirow{4}{*}{\makecell[l]{\textbf{Multi-scale Learnable}\\\textit{(Full LGKDE)}}} 
& 32  & $85.12_{\pm0.48}$ & $88.94_{\pm0.39}$ & $87.53_{\pm0.42}$ & $87.08_{\pm0.44}$ & $84.96_{\pm0.49}$ \\
& 64  & $86.21_{\pm0.45}$ & $90.28_{\pm0.36}$ & $88.98_{\pm0.39}$ & $88.52_{\pm0.41}$ & $86.08_{\pm0.47}$ \\
& 128 & $87.36_{\pm0.42}$ & \bm{$91.63_{\pm0.31}$} & $90.09_{\pm0.35}$ & \bm{$89.75_{\pm0.38}$} & \bm{$87.20_{\pm0.45}$} \\
& 256 & \bm{$87.69_{\pm0.43}$} & $91.48_{\pm0.33}$ & \bm{$90.21_{\pm0.37}$} & $89.53_{\pm0.40}$ & $87.03_{\pm0.48}$ \\
\midrule
\multirow{4}{*}{\makecell[l]{\textbf{Multi-scale Fixed}\\\textit{(Equal weights)}}} 
& 32  & $82.85_{\pm0.51}$ & $86.34_{\pm0.47}$ & $85.12_{\pm0.49}$ & $84.67_{\pm0.51}$ & $82.73_{\pm0.54}$ \\
& 64  & $83.92_{\pm0.49}$ & $87.65_{\pm0.45}$ & $86.45_{\pm0.47}$ & $86.01_{\pm0.49}$ & $83.81_{\pm0.52}$ \\
& 128 & \bm{$85.08_{\pm0.46}$} & $88.92_{\pm0.43}$ & \bm{$87.78_{\pm0.45}$} & \bm{$87.32_{\pm0.47}$} & \bm{$84.95_{\pm0.49}$} \\
& 256 & $85.03_{\pm0.47}$ & \bm{$89.08_{\pm0.44}$} & $87.51_{\pm0.51}$ & $87.15_{\pm0.48}$ & $84.82_{\pm0.51}$ \\
\midrule
\multirow{4}{*}{\makecell[l]{\textbf{Single Bandwidth}\\\textit{($h=0.1$)}}} 
& 32  & $81.43_{\pm0.53}$ & $84.12_{\pm0.50}$ & $83.24_{\pm0.52}$ & $82.81_{\pm0.54}$ & $81.01_{\pm0.56}$ \\
& 64  & $82.67_{\pm0.51}$ & $85.58_{\pm0.48}$ & $84.72_{\pm0.50}$ & $84.29_{\pm0.52}$ & $82.45_{\pm0.54}$ \\
& 128 & $84.78_{\pm0.48}$ & \bm{$86.92_{\pm0.41}$} & \bm{$86.12_{\pm0.44}$} & \bm{$85.63_{\pm0.46}$} & \bm{$83.87_{\pm0.49}$} \\
& 256 & \bm{$84.91_{\pm0.49}$} & $86.85_{\pm0.43}$ & $86.03_{\pm0.45}$ & $85.48_{\pm0.47}$ & $83.72_{\pm0.51}$ \\
\bottomrule
\end{tabular}}
\end{table}

As shown in Table~\ref{tab:param_control_unified}, the learnable multi-scale KDE component, not additional GNN parameters, drives performance gains:
\begin{itemize}[leftmargin=*,noitemsep,topsep=2pt]
    \item Learnable KDE adds only neglectable parameters (mixture weights $\alpha_k$) yet provides 4-6\% AUROC gain over single bandwidth ($91.63\%$ vs. $86.92\%$ at optimal 128-dim/2-layer) and 2-3\% over fixed weights ($91.63\%$ vs. $88.92\%$).
    \item Increasing GNN capacity cannot compensate for lacking learnable KDE: Even with 74\% more parameters (256-dim vs. 128-dim), single-bandwidth KDE achieves only $86.85\%$, remaining 4.78\% below the full LGKDE with 128-dim, decisively refuting the hypothesis that gains stem from parameter inflation.
    \item Moderate GNN architecture suffices with adaptive KDE: Performance saturates beyond 128 hidden dimensions and 2-3 layers, indicating that the multi-scale KDE effectively captures necessary density patterns without requiring excessive representational capacity.
\end{itemize}
These results, combined with our theoretical analysis (Theorem~\ref{thm:consistency_revised}-\ref{thm:convergence_rate_revised}) establishing minimax-optimal convergence rate $O(N^{-0.8})$ for the learnable multi-scale KDE, conclusively demonstrate that observed gains stem from principled density estimation rather than mere parameter inflation.

\subsection{Additional Validation}
\label{app:additional_exp}

This section presents additional experiments to further validate specific aspects of LGKDE, regarding genuine density estimation capabilities and comparisons to two-stage approaches of KDE.

\begin{figure}[t]
    \centering
    \captionsetup[subfigure]{labelformat=empty}
    \begin{subfigure}[t]{0.31\linewidth}
        \centering
        \includegraphics[width=\linewidth]{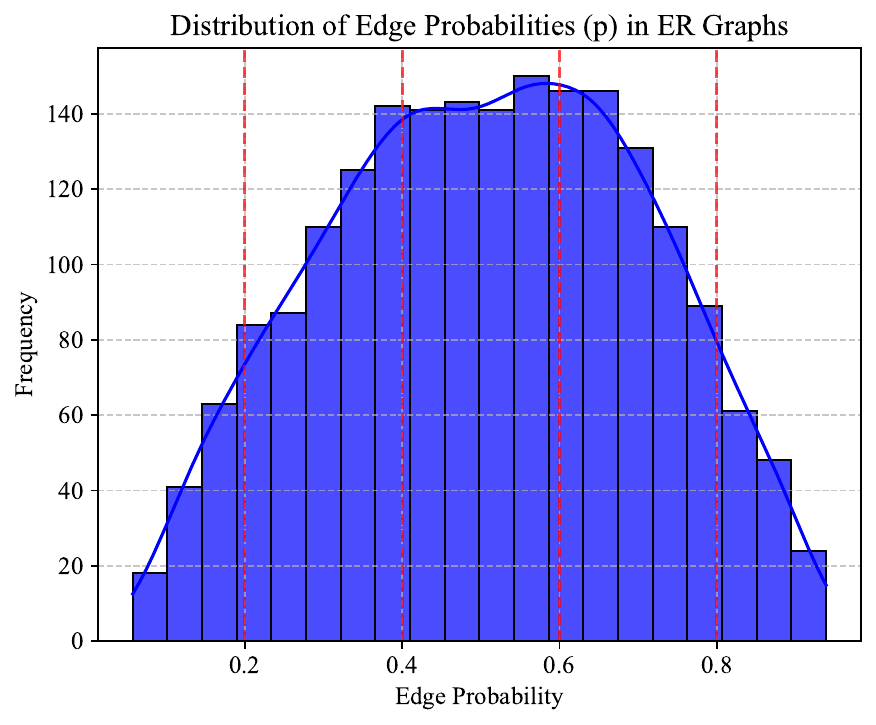}
    \end{subfigure}\hfill
    \begin{subfigure}[t]{0.31\linewidth}
        \centering
        \includegraphics[width=\linewidth]{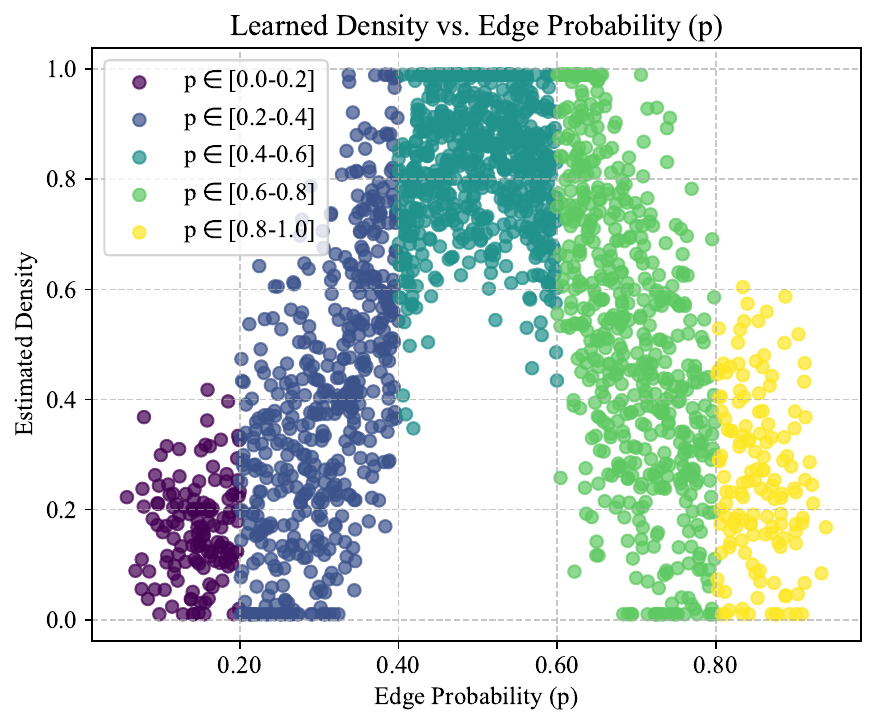}
    \end{subfigure}\hfill
    \begin{subfigure}[t]{0.31\linewidth}
        \centering
        \includegraphics[width=\linewidth]{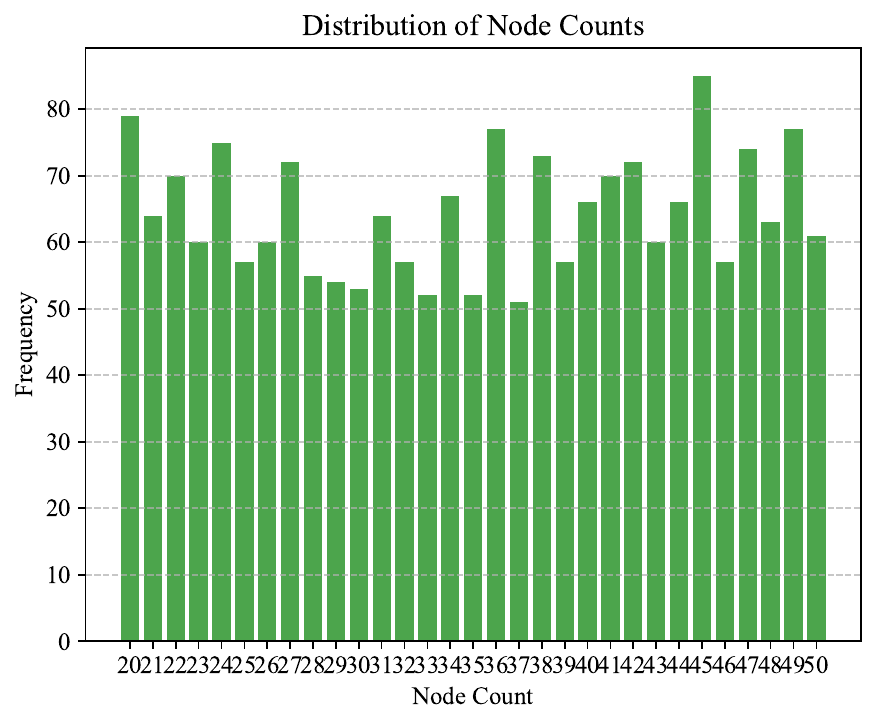}
    \end{subfigure}

    \vspace{0.5ex} 
    \begin{subfigure}[t]{0.48\linewidth}
        \centering
        \includegraphics[width=\linewidth]{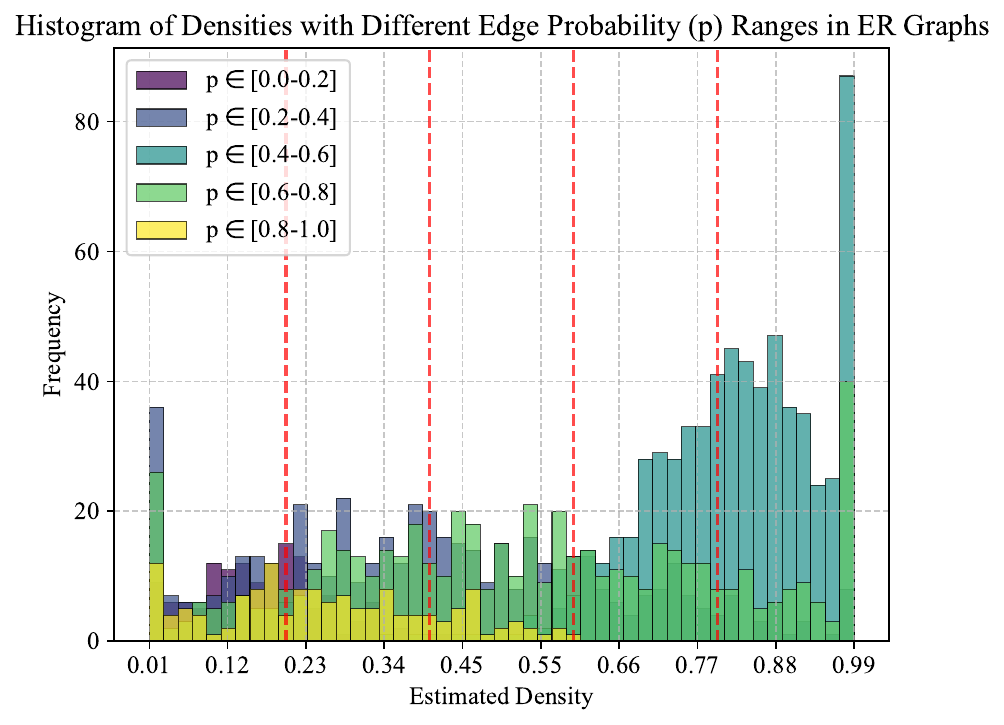}
    \end{subfigure}\hfill
    \begin{subfigure}[t]{0.48\linewidth}
        \centering
        \includegraphics[width=\linewidth]{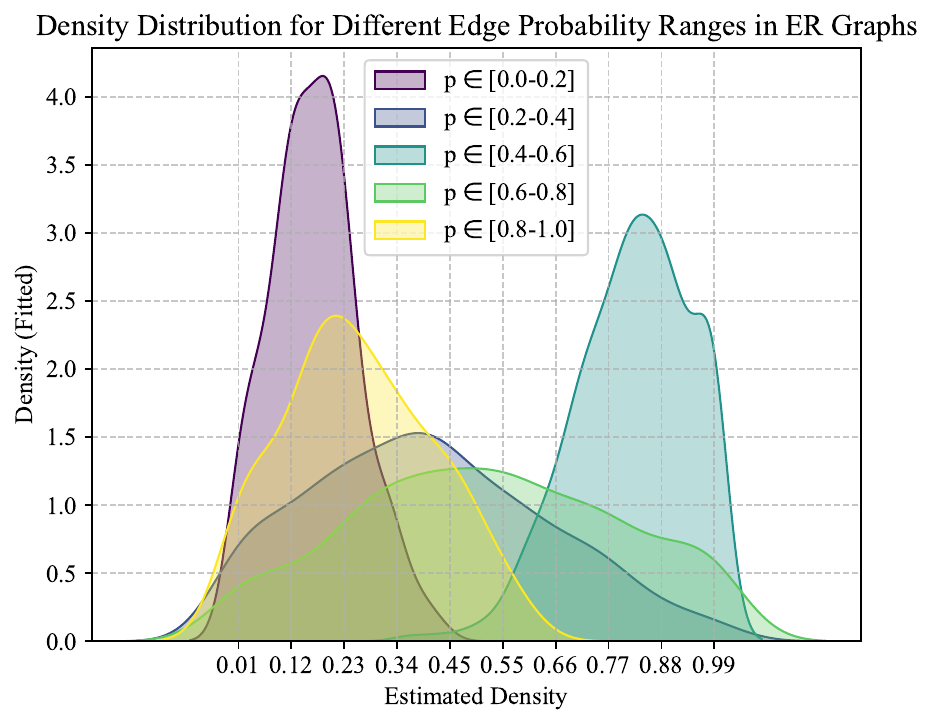}
    \end{subfigure}

    \caption{Density estimation results on synthetic Erdős--Rényi graphs.
    \textbf{Row 1 Left:} Ground truth Beta(2,2) distribution for edge probability $p$;
    \textbf{Row 1 Middle:} Learned density estimate from LGKDE versus $p$, showing the expected peak around $p=0.5$;
    \textbf{Row 1 Right:} Distribution of node counts (uniform as generated);
    \textbf{Row 2:} Histogram and density distributions grouped by edge probability ranges.}
    \label{fig:appendix-er-graphs}
\end{figure}

\subsubsection{Validation on Synthetic Erdős–Rényi Graphs}
\label{app:synthetic_exp}

To directly assess whether LGKDE learns the underlying probability density function (PDF) of graphs, we performed experiments on synthetically generated Erdős–Rényi (ER) graphs with known generative parameters.
\paragraph{Erdős–Rényi Graph Model.} The Erdős–Rényi model $Gem_{ER}(n,p)$ generates random graphs where each possible edge between $n$ vertices occurs independently with probability $p \in [0,1]$. Formally, for a graph $G = (V,E,\mathbf{X}) \in \mathbb{G}$ with adjacency matrix $\mathbf{A}$, each entry $\mathbf{A}_{ij}$ for $i < j$ is an independent Bernoulli random variable with parameter $p$:
\begin{equation}
\mathbf{A}_{ij} = 
\begin{cases} 
1 & \text{with probability } p \\
0 & \text{with probability } 1-p
\end{cases} \quad \forall i,j \in \{1,2,\ldots,n\}, i < j
\end{equation}
The resulting graph is undirected with $\mathbf{A}_{ji} = \mathbf{A}_{ij}$ and has no self-loops ($\mathbf{A}_{ii} = 0$). For node features $\mathbf{X}$, we use the normalized degree of node $i$.
\paragraph{Experimental Setup.} We generated 2000 ER graphs where the number of nodes $n$ was sampled uniformly from $[20, 50]$ and the edge probability $p$ was sampled from a Beta(2, 2) distribution, which peaks at $p=0.5$. This specific Beta distribution yields a unimodal density that places most probability mass near $p=0.5$, with decreasing likelihood toward the extremes of $p=0$ and $p=1$. LGKDE was trained on this collection of random graphs to test whether it could recover the underlying Beta(2, 2) distribution.
If LGKDE accurately estimates the density, it should assign higher density values to graphs whose edge probability $p$ is closer to the mode of the Beta(2, 2) distribution (i.e., $p \approx 0.5$), irrespective of the node count $n$. Figure \ref{fig:appendix-er-graphs} and Table \ref{tab:appendix-er-density} present the results.

\begin{table}[h!]
    \centering
    \caption{Average estimated density by LGKDE for synthetic ER graphs, grouped by edge probability range.}
    \label{tab:appendix-er-density}
    \resizebox{0.5\textwidth}{!}{
    \begin{tabular}{c|c}
    \toprule
    Edge Probability Range ($p$) & Average Estimated Density $\pm$ Std Dev \\
    \midrule
    0.0 -- 0.2 & 0.1654  $\pm$0.0888 \\ 
    0.2 -- 0.4 & 0.3998  $\pm$0.2428 \\ 
    \textbf{0.4 -- 0.6} & \textbf{0.8222  $\pm$0.1197} \\
    0.6 -- 0.8 & 0.5222  $\pm$0.2707 \\ 
    0.8 -- 1.0 & 0.2579  $\pm$0.1522 \\ 
    \bottomrule
    \end{tabular}}
\end{table}

The results clearly show that LGKDE assigns the highest density to graphs with edge probabilities near 0.5, successfully recovering the underlying generative distribution parameter. This provides strong evidence that LGKDE performs genuine density estimation, not just anomaly boundary learning.

\subsubsection{Additional Synthetic Graph Experiments}\label{app:additional_synthetic}

To comprehensively validate LGKDE's density estimation capabilities, we extend our evaluation to diverse synthetic graph types with distinct structural properties, based on the widely used generators implemented in NetworkX~\citep{hagberg2008exploring}.

\begin{itemize}[leftmargin=*, noitemsep, topsep=2pt, partopsep=0pt]

    \item \textbf{Barabási-Albert (Scale-free) Graphs}, which model the growth of real-world networks via preferential attachment~\citep{barabasi1999emergence}.
        \begin{itemize}[noitemsep]
            \item \textbf{Generation:} Using \texttt{barabasi\_albert\_graph(n, m)} to produce a power-law degree distribution $P(k) \sim k^{-3}$.
            \item \textbf{Anomalies:} (i) Weakened preferential attachment ($m=1$) or (ii) random rewiring of 30\% of edges, both of which distort the power-law tail.
            \item \textbf{Target density:} $f_{BA}(G) = (1 + \text{KL}(P_{\text{emp}} \| k^{-3}))^{-1}$, where graphs whose empirical degree distribution ($P_{\text{emp}}$) better matches the theoretical power law receive higher density scores.
        \end{itemize}

    \item \textbf{Watts-Strogatz (Small-world) Graphs}, which generate graphs with high local clustering and short average path lengths, a common feature in social networks~\citep{watts1998collective}.
        \begin{itemize}[noitemsep]
            \item \textbf{Generation:} Using \texttt{watts\_strogatz\_graph(n, k=4, p=0.20)}.
            \item \textbf{Anomalies:} Over-regular networks (i.e., lattices, $p \leq 0.05$) or near-random networks ($p \geq 0.80$).
            \item \textbf{Target density:} $f_{WS}(G) = \frac{1}{2}[s_C(C, C^*) + s_L(L, L^*)]$, where $s_C = (1+10(C-C^*)^2)^{-1}$ and $s_L = (1+0.1(L-L^*)^2)^{-1}$ reward closeness to ideal clustering $C^* = 0.5$ and path length $L^* = \log n/\log 4$.
        \end{itemize}

    \item \textbf{Stochastic Block Model (SBM) Graphs}, a generative model for graphs with explicit community structures~\citep{holland1983stochastic}.
        \begin{itemize}[noitemsep]
            \item \textbf{Generation:} Three equal-sized communities with high intra-community probability ($p_{\text{in}} \in [0.4, 0.8]$) and low inter-community probability ($p_{\text{out}} \in [0.01, 0.10]$).
            \item \textbf{Anomalies:} A flattened structure where communities dissolve ($p_{\text{in}} \approx p_{\text{out}}$) or an inverted (disassortative) structure ($p_{\text{in}} \ll p_{\text{out}}$).
            \item \textbf{Target density:} $f_{SBM}(G) = \max(0, \text{modularity}) \times [1 + 0.3|c-3|]^{-1}$, which rewards both high modularity and the correct identification of three communities ($c=3$).
        \end{itemize}

\end{itemize}

\begin{table}[h]
\centering
\caption{Density estimation results on diverse synthetic graph types}
\resizebox{0.75\textwidth}{!}{
\begin{tabular}{lcccc}
\toprule
Graph Type & Normal Density & Anomaly Density & Density Gap & Detection AUC \\
\midrule
Erdős-Rényi & $0.185 \pm 0.018$ & $0.062 \pm 0.015$ & 0.123 & 0.895 \\
Barabási-Albert & $0.174 \pm 0.021$ & $0.071 \pm 0.018$ & 0.103 & 0.878 \\
Watts-Strogatz & $0.168 \pm 0.019$ & $0.079 \pm 0.016$ & 0.089 & 0.856 \\
Stochastic Block & $0.201 \pm 0.022$ & $0.088 \pm 0.019$ & 0.113 & 0.923 \\
\bottomrule
\end{tabular}}
\end{table}

These results demonstrate LGKDE's consistent ability to distinguish normal from anomalous patterns across fundamentally different graph generation models, validating its general applicability beyond specific graph types.

\subsubsection{Comparison with Two-Stage Approaches}
\label{app:two_stage_comparison}

To evaluate the benefit of LGKDE's end-to-end learning of the metric and density estimator, we compared it against two-stage approaches on the MUTAG dataset. These approaches first learn graph embeddings using standard unsupervised graph representation learning methods (InfoGraph \citep{sun2019infograph} and Graph Autoencoder - GAE \citep{kipf2016variational}) and then apply a standard multi-bandwidth KDE (using the same bandwidths as LGKDE but with uniform weights) on the learned embeddings.

\begin{table}[h!]
    \centering
    \caption{Comparison of LGKDE against two-stage methods (Traditional Methods and Representation Learning + KDE) and traditional baselines on MUTAG.}
    \label{tab:appendix-two-stage}
    \resizebox{0.50\textwidth}{!}{
    \begin{tabular}{l|ccc}
    \toprule
    Method          & AUROC (\%) $\uparrow$ & AUPRC (\%) $\uparrow$ & FPR95 (\%) $\downarrow$ \\
    \midrule
    PK-SVM          & 46.06 $\pm$ 0.47      & 47.80 $\pm$ 0.25      & 88.40 $\pm$ 0.15 \\
    PK-iF           & 47.98 $\pm$ 0.41      & 45.89 $\pm$ 0.21      & 86.00 $\pm$ 0.24 \\
    WL-SVM          & 62.18 $\pm$ 0.29      & 50.94 $\pm$ 0.25      & {\color{orange}{78.80 $\pm$ 0.29}} \\
    WL-iF           & 65.71 $\pm$ 0.38      & 55.15 $\pm$ 0.22      & 86.00 $\pm$ 0.16 \\
    \midrule
    InfoGraph+KDE   & {\color{orange}{79.77 $\pm$ 3.05}}    & {\color{orange}{88.36 $\pm$ 1.53}}    & {\color{purple}{44.00 $\pm$ 5.66}} \\
    GAE+KDE         & {\color{purple}{81.94 $\pm$ 2.21}}    & {\color{purple}{91.00 $\pm$ 1.28}}    & 49.60 $\pm$ 5.43 \\
    \midrule
    LGKDE (Ours)    & \textbf{91.63 $\pm$ 0.31} & \textbf{96.75 $\pm$ 0.35} & \textbf{30.80 $\pm$ 0.27} \\
    \bottomrule
    \end{tabular}
    }
\end{table}

Table \ref{tab:appendix-two-stage} shows that LGKDE significantly outperforms both two-stage methods of traditional graph kernels + detector and graph representation learning + KDE. While InfoGraph and GAE learn general-purpose embeddings, these are suboptimal for the specific task of density estimation compared to the metric learned jointly within the LGKDE framework. This highlights the advantage of our integrated approach in learning task-relevant representations and distances for accurate density estimation.

\subsubsection{Density Gap Analysis under Training Contamination}
\label{app:density_gap_exp}

To quantitatively assess density separation and robustness to label noise, we performed an experiment on MUTAG where we trained LGKDE on the normal class mixed with a varying number of known anomalous samples (treated as normal during training). We then measured the average density assigned by the trained model to the true normal graphs versus the true anomalous graphs (held-out).

\begin{figure}[h!]
    \centering
    \includegraphics[width=0.50\linewidth]{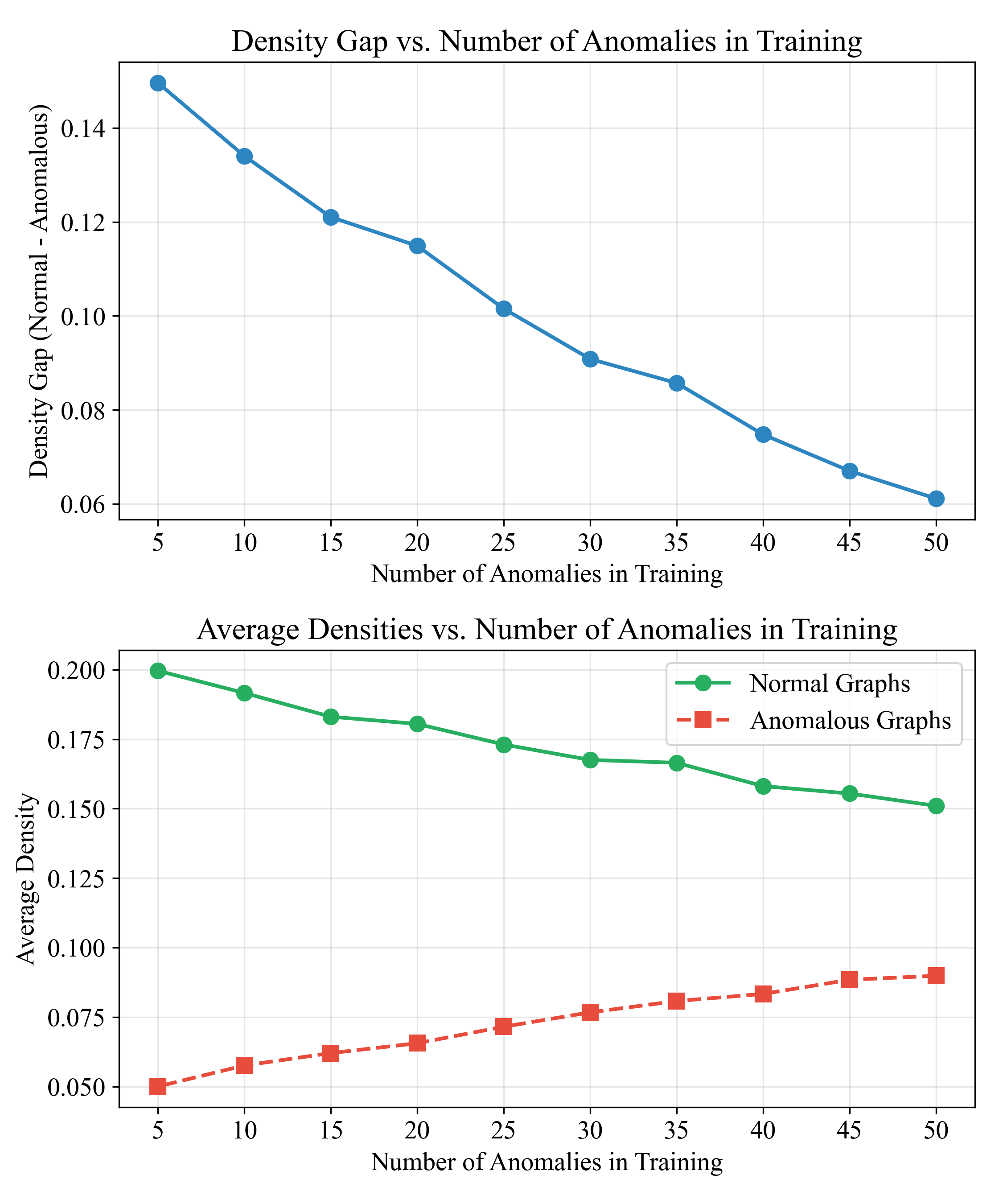} 
    \caption{Density gap analysis on MUTAG. Shows the average density assigned to true normal vs. true anomalous graphs when the model is trained with varying numbers of anomalous samples included in the 'normal' training set. The gap increases as contamination decreases.}
    \label{fig:appendix-density-gap}
\end{figure}

\begin{table}[h!]
    \centering
    \caption{Density gap results on MUTAG under training contamination.}
    \label{tab:appendix-density-gap}
    \resizebox{0.70\textwidth}{!}{
    \begin{tabular}{c|ccc}
    \toprule
    \# Anomalies in Training & Avg. Normal Density & Avg. Anomaly Density & Density Gap \\
    \midrule
    50  & 0.151 & 0.090 & 0.061 \\
    40                       & 0.158 & 0.083 & 0.075 \\
    30                       & 0.168 & 0.077 & 0.091 \\
    20                       & 0.180 & 0.066 & 0.115 \\
    10                       & 0.192 & 0.058 & 0.134 \\
    5                        & 0.202 & 0.053 & 0.149 \\
    0       & 0.215 & 0.048 & 0.167 \\ 
    \bottomrule
    \end{tabular}}
\end{table}

Figure \ref{fig:appendix-density-gap} and Table \ref{tab:appendix-density-gap} show that even with significant contamination (50 anomalous samples included), LGKDE maintains a positive density gap. As the contamination level decreases, the model learns a better representation of the true normal distribution, resulting in higher density estimates for normal graphs, lower estimates for anomalous graphs, and a monotonically increasing density gap. This demonstrates both the model's robustness and its ability to learn a meaningful density separation.

\subsubsection{Qualitative Analysis of Learned Densities}
\label{app:qualitative_analysis}

To provide qualitative insight into what LGKDE learns, we visualized graphs from the MUTAG dataset corresponding to different density levels estimated by the trained LGKDE model. We identified the "average" graph (minimum mean MMD distance to all others) and samples with the highest and lowest estimated densities.

\begin{figure}[h!]
    \centering
    \includegraphics[width=0.9\linewidth]{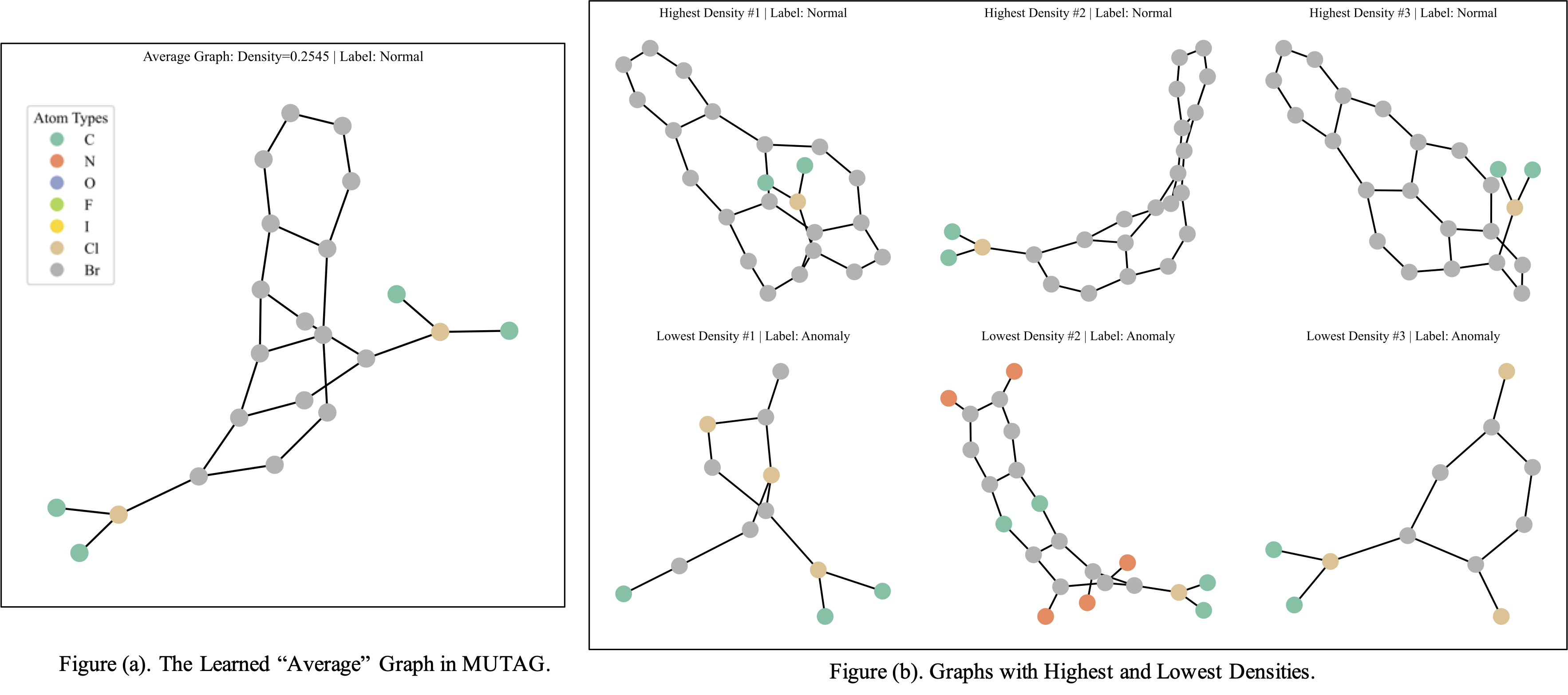}
    \caption{Visualization of MUTAG graphs. (a) The graph with minimum average MMD distance to all other graphs ("average" graph). (b) Top row: examples of graphs assigned the highest density by LGKDE. Bottom row: examples of graphs assigned the lowest density.}
    \label{fig:appendix-mutag-avg-extreme}
\end{figure}

Figure \ref{fig:appendix-mutag-avg-extreme} reveals structural differences captured by the learned density. High-density (normal) graphs often exhibit more complex ring structures characteristic of the majority class in MUTAG, while low-density (potentially anomalous) graphs tend to have simpler or atypical structures. This visualization supports the idea that LGKDE learns structurally relevant patterns for density estimation.

\subsection{Robustness Under Controlled Distribution Shifts}
\label{app:distribution_shift_experiments}

To further quantitatively validate our motivation that discriminative graph anomaly detection methods are fragile to distribution shifts, we conduct systematic controlled experiments on synthetic Erdős–Rényi (ER) graphs.

\textbf{Training Phase (Identical for All Methods).}
To ensure fair comparison, all methods (LGKDE, OCGIN, SIGNET) are trained on the same ``normal'' distribution:
\begin{itemize}[leftmargin=*,noitemsep,topsep=2pt]
    \item \textbf{Sample size}: 2000 ER graphs
    \item \textbf{Edge probability}: $p \sim \mathcal{N}(0.5, 0.1)$, truncated to $[0.2, 0.8]$ to ensure valid graph structures
    \item \textbf{Node count}: $n \sim \text{Uniform}[20, 50]$
    \item \textbf{Training objective}: Each method learns to model this ``normal'' distribution using its respective loss function (LGKDE uses density contrast; OCGIN/SIGNET use discriminative objectives)
\end{itemize}

\textbf{Test Phase (Revealing Robustness Differences).}
After training, we evaluate all methods on three progressively challenging test scenarios with controlled distribution shifts. Each test scenario contains 500 normal samples and 500 anomalous samples:

\begin{enumerate}[leftmargin=*,noitemsep,topsep=2pt]
    \item \textbf{In-Distribution (ID)}: Training and test distributions match (baseline performance)
    \begin{itemize}[leftmargin=*,noitemsep,topsep=0pt]
        \item Normal samples: $p \sim \mathcal{N}(0.5, 0.1)$ (same as training)
        \item Anomalous samples: $p \sim \text{Uniform}[0.0, 0.2] \cup [0.8, 1.0]$ (extreme edge densities)
    \end{itemize}
    
    \item \textbf{Distribution Shift 1 (Sparse)}: Test normal distribution shifts toward sparse graphs
    \begin{itemize}[leftmargin=*,noitemsep,topsep=0pt]
        \item Normal samples: $p \sim \mathcal{N}(0.3, 0.1)$ (shifted $-0.2$ from training mean)
        \item Anomalous samples: $p \sim \text{Uniform}[0.0, 0.15]$ or heavily rewired graphs (50\% edge swaps)
    \end{itemize}
    
    \item \textbf{Distribution Shift 2 (Dense)}: Test normal distribution shifts toward dense graphs
    \begin{itemize}[leftmargin=*,noitemsep,topsep=0pt]
        \item Normal samples: $p \sim \mathcal{N}(0.7, 0.1)$ (shifted $+0.2$ from training mean)
        \item Anomalous samples: $p \sim \text{Uniform}[0.85, 1.0]$ or heavily rewired graphs (50\% edge swaps)
    \end{itemize}
\end{enumerate}

We report AUROC (\%) for anomaly detection performance, with degradation measured as:
\begin{equation}
\text{Avg. Degradation} = \frac{1}{2}\left(\text{AUROC}_{\text{ID}} - \text{AUROC}_{\text{Shift1}}\right) + \frac{1}{2}\left(\text{AUROC}_{\text{ID}} - \text{AUROC}_{\text{Shift2}}\right)
\end{equation}

\begin{table}[ht]
\centering
\caption{AUROC (\%) performance under controlled distribution shifts on synthetic ER graphs. LGKDE maintains significantly better robustness (6.16\% average degradation) compared to discriminative baselines OCGIN and SIGNET ($\sim$16\% degradation), validating our motivation that density-based methods are more resilient to distribution shifts.}
\label{tab:distribution_shift_results}
\resizebox{0.65\textwidth}{!}{
\begin{tabular}{l|c|c|c|c}
\toprule
Method & In-Distribution & Shift 1 (Sparse) & Shift 2 (Dense) & Avg. Degradation \\
\midrule
OCGIN  & $77.38_{\pm 0.64}$ & $61.24_{\pm 1.27}$ & $62.15_{\pm 1.46}$ & {$-15.69\%$} \\
SIGNET & $85.12_{\pm 0.80}$ & $68.73_{\pm 1.51}$ & $69.89_{\pm 1.63}$ & {$-15.81\%$} \\
\midrule
\textbf{LGKDE} & \bm{$89.45_{\pm 0.53}$} & \bm{$82.67_{\pm 0.90}$} & \bm{$83.91_{\pm 0.87}$} & \bm{$-6.16\%$} \\
\bottomrule
\end{tabular}}
\vspace{-0.2cm}
\end{table}

Table~\ref{tab:distribution_shift_results} empirically validates the resilience of density-based methods against distribution shifts. While discriminative baselines (OCGIN, SIGNET) suffer severe performance collapse ($\sim$16\% drop) due to rigid decision boundaries becoming misaligned with shifted data, LGKDE maintains remarkable stability with only a 6.16\% average decline. This robustness stems from our approach of modeling the intrinsic density landscape rather than fixed separation hyperplanes, allowing LGKDE to preserve accurate ranking orders across both sparse and dense shifts. Notably, LGKDE's performance under sparse shift ($>82\%$) still surpasses the \emph{in-distribution} results of the OCGIN ($77.38\%$), highlighting its superior generalization capability.

\section{Related Work}\label{app:relatedwork}
 Due to space limits, the related literature review on graph representation learning is presented in this Appendix~\ref{app:related_grl}. And the extra discussion and comparison with deep density related methods is provided in the following Appendix~\ref{app:related_deep_density}.
\subsection{Graph Representation Learning}\label{app:related_grl}
Graph representation learning has emerged as a fundamental paradigm for analyzing graph-structured data \citep{hamilton2020graph}. Early approaches focused on matrix factorization and random walk-based methods \citep{perozzi2014deepwalk, grover2016node2vec}, which learn node embeddings by preserving local neighborhood structures. The advent of Graph Neural Networks (GNNs) has revolutionized this field by enabling end-to-end learning of graph representations through message passing \citep{gilmer2017neural}. Modern GNN architectures, such as Graph Convolutional Networks (GCN) \citep{kipf2017semi} and Graph Isomorphism Networks (GIN) \citep{xu2019powerful}, have demonstrated remarkable success in various supervised learning tasks. These rich representation capability makes graphs an essential tool for understanding and analyzing real-world systems ranging from social networks to molecular structures \citep{wang2020learning, sun2019graph, wang2022graph}.

However, the challenge of learning effective graph representations without supervision remains significant. Recent unsupervised approaches have explored various strategies, including graph autoencoders \citep{kipf2016variational}, contrastive learning \citep{velickovic2019deep}, and mutual information maximization \citep{sun2019infograph}, graph entropy maximization \citep{pmlr-v235-sun24i}. While these methods have shown promise, they often struggle to capture fine-grained structural patterns and maintain theoretical guarantees, particularly when applied to graph-level representations. Moreover, the lack of supervision makes it especially challenging to learn representations that can effectively distinguish normal patterns from anomalous ones \citep{ma2021comprehensive}.

\subsection{Comparison with Deep Density Related Methods}\label{app:related_deep_density}

LGKDE differs fundamentally from typical deep density estimation approaches (normalizing flows, VAEs, EBMs) in both methodology and applicability to graphs:

\paragraph{Normalizing Flows for Graphs.} Flow-based models~\citep{liu2019graph} parameterize densities via invertible neural networks that transform simple base distributions. While theoretically elegant, these methods face critical challenges for graphs: (1) Invertibility constraints impose architectural restrictions that struggle with variable graph sizes and topology changes; (2) Fixed-size assumptions typically require padding or graph coarsening, introducing artifacts; (3) Lack of theoretical guarantees for consistency and convergence in discrete graph spaces. In contrast, LGKDE naturally handles variable-size graphs via GNN+MMD embeddings without invertibility constraints and provides explicit consistency guarantees (Theorem~\ref{thm:consistency_revised}) and optimal convergence rates (Theorem~\ref{thm:convergence_rate_revised}).

\paragraph{Energy-Based Models (EBMs) for Graphs.} Graph EBMs~\citep{du2020energy,liu2021graphebm} define energy functions over graphs and rely on expensive MCMC sampling (Langevin dynamics) for training and inference. While flexible, EBMs suffer from: (1) Computational instability due to high-variance gradient estimation; (2) Mode coverage issues when sampling complex multi-modal graph distributions; (3) Lack of explicit density parameterization, making anomaly scoring indirect. LGKDE performs direct density estimation via KDE with closed-form evaluation, avoiding sampling altogether while maintaining computational efficiency (Appendix~\ref{app:complexitycompare}).

\paragraph{Graph Variational Autoencoders (VAEs).} VAE-based methods~\citep{kipf2016variational} optimize an evidence lower bound (ELBO) but face: (1) Loose ELBO leading to suboptimal density estimates; (2) Posterior collapse where the decoder ignores latent codes; (3) No explicit density reconstruction error serves as a heuristic proxy. LGKDE explicitly models density via KDE with proven consistency, avoiding variational approximation gaps.

\paragraph{Where LGKDE's Density Lives.} Unlike flows/VAEs/EBMs that define densities in learned latent spaces, LGKDE maintains density directly in a nonparametric form over the graph space, represented as KDE in a learned MMD metric space. The GNN+MMD module learns only the metric; the density itself is from the multi-scale KDE with learnable mixture weights, jointly optimized end-to-end. This design combines the flexibility of nonparametric methods with the representational power of deep learning, yielding a principled density estimator with complete theoretical guarantees (Theorems~\ref{thm:consistency_revised}-\ref{prop:robustness_mmd_graph_revised_new}, Corollary~\ref{cor:robustness_kde_graph_revised}, Theorem~\ref{thm:sample_complexity_revised}).

\cite{sun_mmd_2024} introduces a deep MMD graph kernel that learns a metric for tasks such as clustering and classification. Our deep MMD module is inspired by this work, so we also construct a learnable MMD-based metric over graphs. However, LGKDE builds on top of this backbone in several directions:
\begin{itemize}[leftmargin=*,noitemsep,topsep=0pt,parsep=0pt]
    \item We utilize the learned metric to a multi-scale KDE with learnable mixture weights, which produces an explicit nonparametric density over graphs rather than only a similarity or kernel value, aim to address the challenge and the gap of density estimation over graphs.
    \item We couple this KDE with a novel structure-aware perturbation mechanism and a density-contrast objective designed for unsupervised graph anomaly detection, which can effectively capture the underlying graph distribution.
    \item We established a theoretical framework for consistency, convergence, robustness, and generalization, and provided a detailed analysis of complexity and scalability for the resulting density estimator.
\end{itemize}
Specifically, when fixing the KDE bandwidths and mixture weights, LGKDE effectively reduces to learn a deep MMD graph kernel, then running a basic KDE on the resulting distances. This can be viewed as a \cite{sun_mmd_2024} + KDE style baseline within our own framework. To avoid an uninformative comparison between LGKDE and such a degenerate special case of itself, we instead evaluate more general two-stage baselines in the Appendix~\ref{app:two_stage_comparison}.

\section{More Perturbed Sample Generation Details}\label{appendix:negativesample}

\subsection{Pseudocode of Structure-aware Sample Generation}

See Algorithm \ref{alg:structure_aware_negatives}.

\subsection{Case Study on Energy-based Spectral Perturbation}
Figure \ref{fig:appendix-addedges} and \ref{fig:appendix-subedges} show the visualization of the different combinations of hyperparameters $\tau_1$ and $\tau_2$ in our energy-based spectral perturbation strategy on the first graph from MUTAG datasets.  We can find that when $\tau_1$ around $0.5$ with $\tau_2$ around $0.75$, our method generates structurally meaningful variations (add/remove a suitable number of links in the ring structure) while preserving the graph’s core topology.
\begin{figure}[h]
    \centering
    \fbox{\includegraphics[width = 1.0\linewidth]{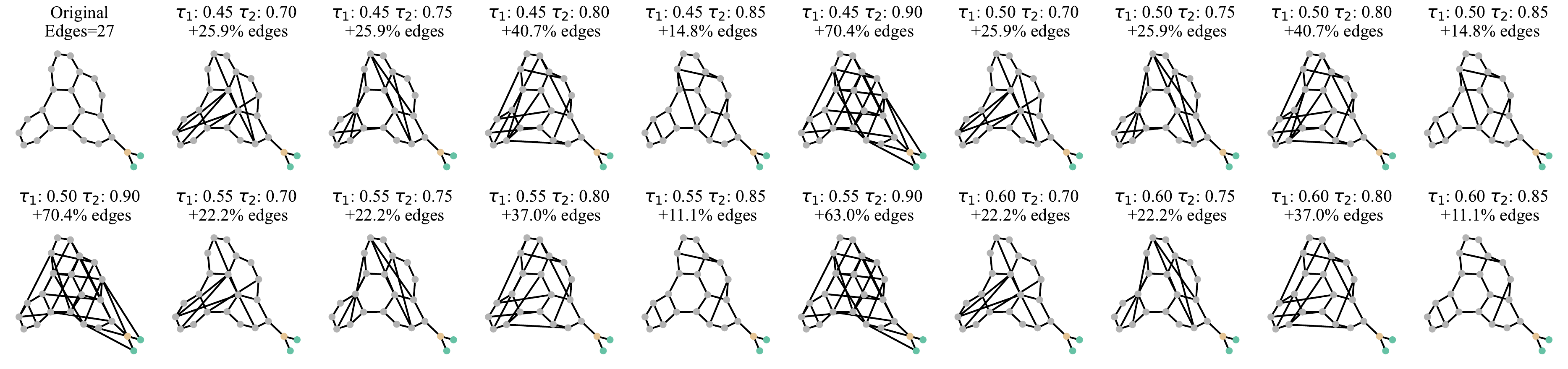}}
    \caption{Spectral Perturbation: Add Edges Mode via Amplify $\mathcal{S}_l$ on the 1st MUTAG sample}
    \label{fig:appendix-addedges}
\end{figure}

\begin{figure}[h]
    \centering
    \fbox{\includegraphics[width = 1.0\linewidth]{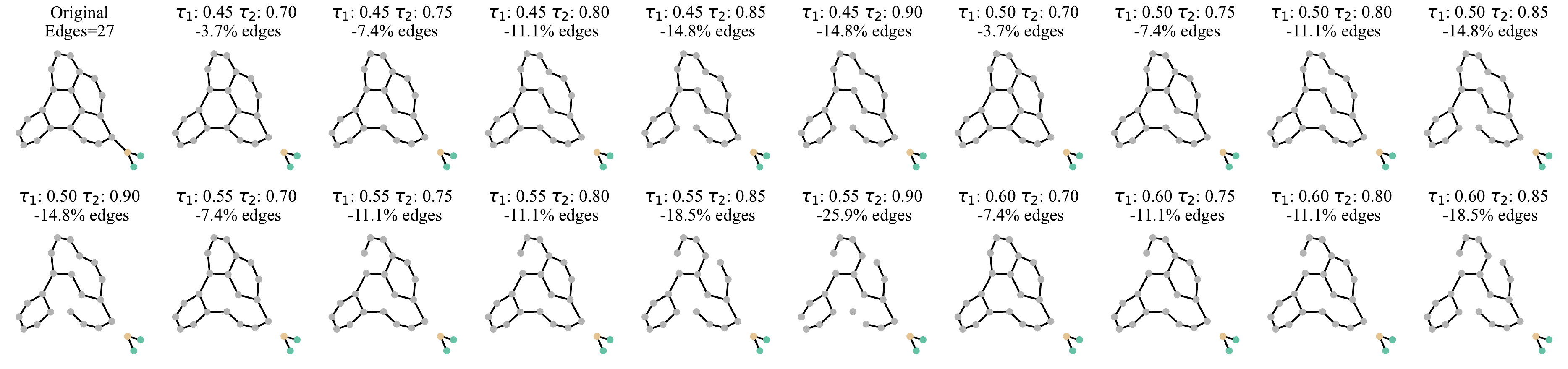}}
    \caption{Spectral Perturbation: Remove Edges Mode via Shrink $\mathcal{S}_h$ on the 1st MUTAG sample}
    \label{fig:appendix-subedges}
\end{figure}

\begin{figure}[h!]
    \centering
    \includegraphics[width=1\linewidth]{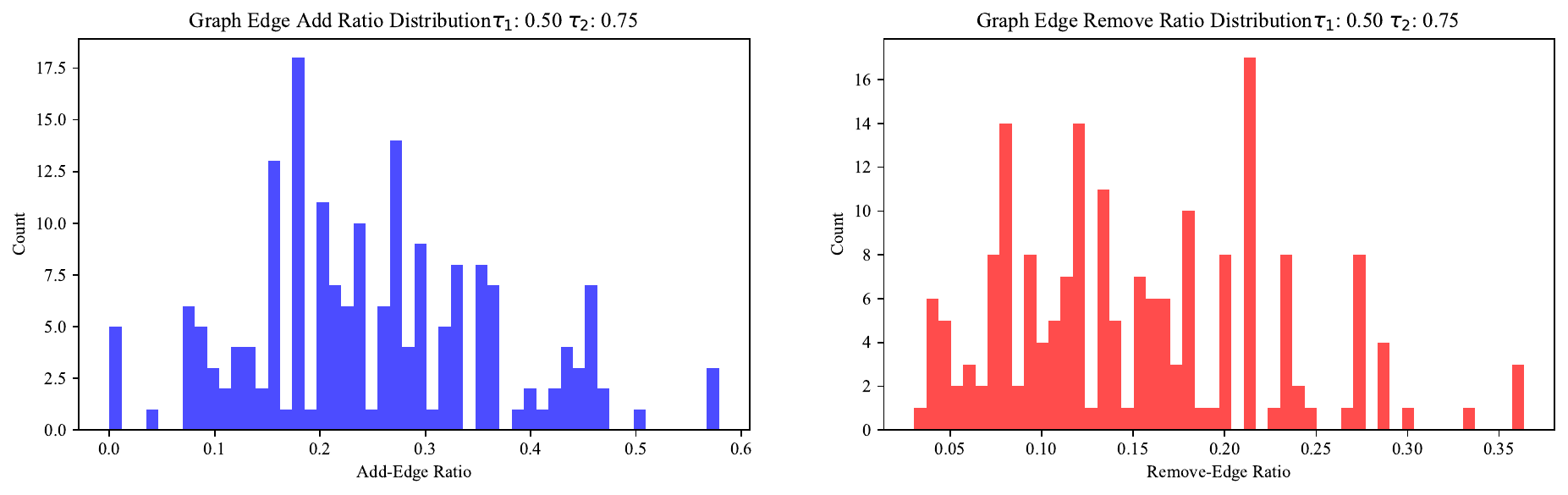}
    \caption{Edge Change Ratio of Spectral Perturbation on the whole MUTAG datasets with $\tau_1 = 0.5$ and $\tau_2 = 0.75$}
    \label{fig:appendix-edgehist}
    \vspace{-0.35cm}
\end{figure}
Figure \ref{fig:appendix-edgehist}. shows that the histogram of edge change ratio of spectral perturbation on the whole MUTAG datasets with $\tau_1 = 0.5$ and $\tau_2 = 0.75$, we can find that the majority of graphs have the permutation around add 20\% edges or reduce 15\% edges. What's more, these permutation is taken on the graph key structure instead of random edge permutations, which tends to generate well-designed perturbed counterparts and help better characterize the boundary of normal density regions during the unsupervised contrastive learning. Table~\ref{tab:tau_sensitivity_edges} further gives detailed Avg. edge change ratios (\%) $\pm$ standard deviation across MUTAG graphs for different $(\tau_1, \tau_2)$ combinations.

\begin{table}[ht]
\centering
\caption{Average edge change ratios (\%) $\pm$ standard deviation across MUTAG graphs for different $(\tau_1, \tau_2)$ combinations.}
\label{tab:tau_sensitivity_edges}
\resizebox{0.9\textwidth}{!}{
\begin{tabular}{c|cc| @{\qquad}c|cc}
\toprule
\textbf{$(\tau_1, \tau_2)$} & Add Edges (\%) & Remove Edges (\%) & \textbf{$(\tau_1, \tau_2)$} & \textbf{Add Edges (\%)} & \textbf{Remove Edges (\%)} \\
\midrule
$(0.45, 0.70)$ & $26.42_{\pm 13.11}$ & $10.90_{\pm 6.45}$ & $(0.55, 0.70)$ & $19.39_{\pm 11.02}$ & $14.76_{\pm 8.08}$ \\
$(0.45, 0.75)$ & $28.67_{\pm 12.95}$ & $12.88_{\pm 7.26}$ & $(0.55, 0.75)$ & $21.69_{\pm 11.14}$ & $17.64_{\pm 8.23}$ \\
$(0.45, 0.80)$ & $25.25_{\pm 11.88}$ & $14.68_{\pm 7.35}$ & $(0.55, 0.80)$ & $20.13_{\pm 10.15}$ & $22.09_{\pm 10.11}$ \\
$(0.45, 0.85)$ & $21.20_{\pm 10.90}$ & $17.43_{\pm 7.34}$ & $(0.55, 0.85)$ & $16.85_{\pm 10.63}$ & $27.29_{\pm 10.87}$ \\
$(0.50, 0.70)$ & $22.64_{\pm 11.87}$ & $13.13_{\pm 6.98}$ & $(0.60, 0.70)$ & $16.65_{\pm 11.05}$ & $19.77_{\pm 11.59}$ \\
$(0.50, 0.75)$ & $25.06_{\pm 11.89}$ & $15.14_{\pm 7.26}$ & $(0.60, 0.75)$ & $18.60_{\pm 10.88}$ & $23.70_{\pm 12.13}$ \\
$(0.50, 0.80)$ & $23.20_{\pm 10.65}$ & $17.88_{\pm 8.41}$ & $(0.60, 0.80)$ & $17.40_{\pm 9.97}$ & $28.80_{\pm 12.66}$ \\
$(0.50, 0.85)$ & $19.27_{\pm 10.90}$ & $21.41_{\pm 8.90}$ & $(0.60, 0.85)$ & $13.66_{\pm 8.91}$ & $37.50_{\pm 15.32}$ \\
\bottomrule
\end{tabular}}
\vspace{-0.2cm}
\end{table}
\begin{algorithm}[h]
\caption{\textbf{Structure-aware Sample Generation}}
\label{alg:structure_aware_negatives}
\begin{algorithmic}[1]
\Require 
  Graph \(G=(\mathbf{A}, \mathbf{X})\), swap ratio \(r_{\mathrm{swap}}\), energy thresholds \(\tau_1,\tau_2\), edge operation flag \(\flag \in \{0, 1\}\) (0: removal, 1: addition)
\Ensure 
  Perturbed graph \(\tilde{G}\) with altered structure and features
\vspace{3pt}
\Function{GenerateSample}{$G, r_{\mathrm{swap}}, \tau_1, \tau_2, \flag$}
  \State \(\Xtilde\gets \textsc{NodeFeaturePerturb}\bigl(\mathbf{X}, r_{\mathrm{swap}}\bigr)\) 
    \Comment{Swap a fraction of node features}
  \State \(\Atilde\gets \textsc{SpectralPerturb}\bigl(\mathbf{A}, \tau_1,\tau_2, \flag\bigr)\) 
    \Comment{Energy-based spectral modification}
  \State \(\tilde{G}\gets (\Atilde,\Xtilde)\)
  \State \Return \(\tilde{G}\)
\EndFunction
\vspace{5pt}
\Function{NodeFeaturePerturb}{$\mathbf{X}, r_{\mathrm{swap}}$}
    \State \(\Xtilde \gets \mathbf{X}\) \Comment{Initialize copy}
    \State Let \(\mathcal{V}_{\mathrm{swap}}\) be a random subset of node indices of size \(\lfloor r_{\mathrm{swap}}\cdot |V|\rfloor\)
    \State \(\Xtilde_v \gets \mathbf{X}_{perm(v)}\) for each \(v \in \mathcal{V}_{\mathrm{swap}}\) \Comment{Shuffle selected nodes' features}
    \State \Return \(\Xtilde\)
\EndFunction
\vspace{5pt}
\Function{SpectralPerturb}{$\mathbf{A}, \tau_1,\tau_2, \flag$}
    \State \(\mathbf{U}, \mathbf{\Sigma}, \mathbf{V}^\top \gets \mathrm{SVD}(\mathbf{A})\) \Comment{SVD decomposition}
    \State Compute cumulative energy ratio \(E(k)\) via Eq.~\eqref{eq:energy}
    \State Partition singular values into \(\mathcal{S}_h, \mathcal{S}_m, \mathcal{S}_l\) using thresholds \(\tau_1, \tau_2\)
    \State Compute adaptive ratio \(r = \min(\mu_h/\mu_l, r_{\max})\), where \(\mu_h\) and \(\mu_l\) are means of \(\mathcal{S}_h\) and \(\mathcal{S}_l\)
    \State Modify singular values based on \(\flag\) via Eq.~\eqref{eq:modifysingular}: 
    \State \quad If \(\flag=0\) (removal): \(\tilde{\sigma}_i = \sigma_i/r\) for \(\sigma_i \in \mathcal{S}_h\)
    \State \quad If \(\flag=1\) (addition): \(\tilde{\sigma}_i = r\sigma_i\) for \(\sigma_i \in \mathcal{S}_l\)
    \State \(\tilde{\mathbf{\Sigma}} \gets\) updated diagonal matrix with modified singular values
    \State \(\Atilde \gets \mathbf{U}\,\tilde{\mathbf{\Sigma}}\, \mathbf{V}^\top\)
    \State \Return \(\Atilde\)
\EndFunction
\end{algorithmic}
\end{algorithm}

\begin{algorithm}[h]
\caption{Deep Graph MMD Model for Graph Distance Computation}
\label{alg:deep_graph_mmd}
\begin{algorithmic}[1]
\Require
  \State Two sets of graphs $\mathcal{G}_1=\{G_1^1,...,G_1^{N_1}\}$, $\mathcal{G}_2=\{G_2^1,...,G_2^{N_2}\}$ (if $\mathcal{G}_2 = \text{None}$, set $\mathcal{G}_2 \gets \mathcal{G}_1$)
  \State GNN parameters $\gnnparams = \{\mathbf{W}^{(l)} \in \mathbb{R}^{d_{l-1} \times d_l}\}_{l=1}^L$ for model $\gnnenc$
  \State MMD kernel family $\FamKernelEmb$ with bandwidths $\SetGammaEmb = \{\gamma_1,\ldots,\gamma_{S}\}$
  \State Activation function (ReLU): $\sigma$
\Ensure Distance matrix $\mathbf{D} \in \mathbb{R}^{N_1 \times N_2}$ containing pairwise MMD distances
\Function{ComputeMMDDistances}{$\mathcal{G}_1, \mathcal{G}_2, \gnnparams, \FamKernelEmb$}
    \State // Generate node embeddings via GNN encoder $\gnnenc$
    \For{$s \in \{1,2\}$}
        \For{$i=1$ to $N_s$}
            \State // Compute $\mathbf{Z}_s^i = \gnnenc(G_s^i)$ with parameters $\gnnparams$
            \State Let $\mathbf{A}_s^i$ be the adjacency matrix of $G_s^i$
            \State Let $\mathbf{D}_s^i = \text{diag}(\sum_j \mathbf{A}_s^i[j,:] + 1)$ be the degree matrix with self-loops
            \State $\mathbf{Z}_s^{i,(0)} \gets \mathbf{X}_s^i$ \Comment{Initial node features}
            \For{$l=1$ to $L$}
                \State $\hat{\mathbf{A}}_s^i \gets (\mathbf{D}_s^i)^{-1/2}(\mathbf{A}_s^i + \mathbf{I})(\mathbf{D}_s^i)^{-1/2}$ \Comment{Normalized adjacency}
                \State $\mathbf{Z}_s^{i,(l)} \gets \sigma(\hat{\mathbf{A}}_s^i\mathbf{Z}_s^{i,(l-1)}\mathbf{W}^{(l)})$ \Comment{GNN layer update}
            \EndFor
            \State $\mathbf{Z}_s^i \gets \mathbf{Z}_s^{i,(L)}$ \Comment{Final node embeddings}
        \EndFor
    \EndFor
    
    \State // Compute pairwise MMD distances using node embeddings
    \For{$i=1$ to $N_1$}
        \For{$j=1$ to $N_2$}
            \State Compute $\mathbf{D}_{ij} = \mmddist(G_1^i, G_2^j)$ using Eq.~\eqref{eq:mmd_distance} 
            \State \quad where each kernel evaluation uses Eq.~\eqref{eq:mmd_estimator_app_f} with embeddings $\mathbf{Z}_1^i$ and $\mathbf{Z}_2^j$
        \EndFor
    \EndFor
    \State \Return $\mathbf{D}$ \Comment{Matrix of pairwise MMD distances}
\EndFunction
\end{algorithmic}
\end{algorithm}

\section{LGKDE Framework Details}\label{appendix:framework_details}

\begin{algorithm*}[ht!]
\small
\caption{\textbf{Learnable Graph Kernel Density Estimation (LGKDE)}}
\label{alg:lgkde_whole}
\begin{algorithmic}[1]
\Require
    \State Normal graph set $\mathcal{G} = \{G_1,\dots,G_N\}$
    \State GNN parameters $\gnnparams$ for encoder $\gnnenc$, MMD kernel family $\FamKernelEmb$ with bandwidths $\SetGammaEmb$ 
    \State Multi-scale KDE bandwidth set $\SetHKDE=\{\hkde\}_{k=1}^{M}$, learnable mixture weight parameters $\kdemixweightsparams$
    \State Perturbation parameters: $N_{\text{pert}}$, $r_{\text{swap}}$, energy thresholds $\tau_1, \tau_2$
    \State Anomaly threshold parameter $\gammaTH$ (percentile)
\Ensure Trained model parameters $\gnnparams, \kdemixweightsparams$
\Function{Train}{$\mathcal{G}$}
    \State Initialize $\gnnparams$ and $\kdemixweightsparams$ randomly
    \For{epoch = 1 to MaxEpoch}
        \State // Generate structure-aware perturbed samples
        \State Sample mini-batch $\mathcal{B} \subseteq \mathcal{G}$
        \For{each graph $G_i \in \mathcal{B}$}
            \For{j = 1 to $N_{\text{pert}}$}
                \State $\flag \gets$ randomly select from $\{0, 1\}$ \Comment{0: edge removal, 1: edge addition}
                \State $\tilde{G}_{i}^{(j)} \gets \textsc{GenerateSample}(G_i, r_{\text{swap}}, \tau_1, \tau_2, \flag)$ \Comment{Algorithm \ref{alg:structure_aware_negatives}}
            \EndFor
        \EndFor
        
        \State // Compute MMD distances between all graphs
        \State $\mathbf{D} \gets \textsc{ComputeMMDDistances}(\mathcal{B} \cup \{\tilde{G}_{i}^{(j)}\}, \mathcal{B}, \gnnparams, \FamKernelEmb)$ \Comment{Algorithm \ref{alg:deep_graph_mmd}}
        
        \State // Compute multi-scale KDE for all graphs
        \For{each graph $G \in \mathcal{B} \cup \{\tilde{G}_{i}^{(j)}\}$}
            \For{$k = 1$ to $M$}
                \State $\kdecomp_k(G) \gets \frac{1}{|\mathcal{B}|}\sum_{G_m \in \mathcal{B}} \kernelKDE(\mmddist(G, G_m), \hkde)$ \Comment{k-th KDE component}
            \EndFor
            \State $\kdestimate(G) \gets \sum_{k=1}^{M} \pi_k(\kdemixweightsparams)\kdecomp_k(G)$, where $\pi_k(\kdemixweightsparams) = \frac{\exp(\alpha_k)}{\sum_l\exp(\alpha_l)}$
        \EndFor
        
        \State // Update parameters using contrastive objective
        \State Compute loss $\mathcal{L} \gets -\sum_{i=1}^{|\mathcal{B}|} \sum_{j=1}^{N_{\text{pert}}} \frac{\kdestimate(G_i) - \kdestimate(\tilde{G}_{i}^{(j)})}{\kdestimate(G_i)}$ via Eq.~\eqref{eq:objective}
        \State Update $\gnnparams, \kdemixweightsparams$ using gradient descent
    \EndFor
    \Return $\gnnparams, \kdemixweightsparams$
\EndFunction

\Function{Inference}{$G_{\text{test}}, \mathcal{G}_{\text{ref}}, \gnnparams, \kdemixweightsparams$}
    \State // Compute density score for test graph
    \State $\mathbf{D} \gets \textsc{ComputeMMDDistances}(\{G_{\text{test}}\}, \mathcal{G}_{\text{ref}}, \gnnparams, \FamKernelEmb)$ \Comment{Algorithm \ref{alg:deep_graph_mmd}}
    \State Compute $\kdestimate(G_{\text{test}})$ using the multi-scale KDE with parameters $\kdemixweightsparams$ and distances $\mathbf{D}$
    \State $s(G_{\text{test}}) \gets -\kdestimate(G_{\text{test}})$ \Comment{Convert density to anomaly score}
    
    \State // Compute reference set densities and threshold
    \State $\mathbf{D}_{\text{ref}} \gets \textsc{ComputeMMDDistances}(\mathcal{G}_{\text{ref}}, \mathcal{G}_{\text{ref}}, \gnnparams, \FamKernelEmb)$
    \State Compute $\kdestimate(G_i)$ for all $G_i \in \mathcal{G}_{\text{ref}}$
    \State $\tau \gets -\text{Percentile}_{\gammaTH}\{\kdestimate(G_i) | G_i \in \mathcal{G}_{\text{ref}}\}$ \Comment{Anomaly threshold}
    
    \State \Return $s(G_{\text{test}}) > \tau$ ? "Anomalous" : "Normal"
\EndFunction
\end{algorithmic}
\end{algorithm*}

\subsection{Multi-scale Kernel Design}
The hyperparameter set $\SetGammaEmb =\{\gamma_1,\ldots,\gamma_{S}\}$ is chosen to cover multiple scales of the MMD calculation. In practice, we use $S=5$ and $1/\gamma$ as bandwidths in gaussian kernel $\kernelEmb$, which are logarithmically spaced between $\gamma_{min}$ and $\gamma_{max}$. The bandwidth set  $\SetHKDE = \{h_1,\ldots,h_{M}\}$ is used for the multi-scale kernel density estimator. Since they are also Gaussian kernels in MMD calculation and density estimation, and the learnable weights design gives the flexibility to automatically determine the relative kernel importance, we set $M=5$ and use the same bandwidth in multi-scale KDE $\SetHKDE$ as that in deep MMD calculation. All these designs ensure the kernels can capture both fine-grained local structures and global patterns.

The mixture weights $\{\pi_k\}$ are initialized uniformly and updated through gradient descent on $\kdemixweightsparams$. The softmax parameterization ensures $\sum_k \pi_k = 1$ while allowing unconstrained optimization of $\kdemixweightsparams$.

\subsection{Optimization Strategy}
The complete training and inference procedure of LGKDE is detailed in Algorithm~\ref{alg:lgkde_whole}. Here we elaborate on key implementation aspects of our optimization approach.
The contrastive objective in Eq.~\eqref{eq:objective} can be interpreted as maximizing the difference of densities between normal graphs and their perturbed versions:
\begin{equation}
\begin{aligned}
\min_{\gnnparams, \kdemixweightsparams} \lossfunc &:= - \sum_{i=1}^N \sum_{j=1}^{N_{\mathrm{pert}}} \frac{\kdestimate(G_i) - \kdestimate(\tilde{G}_{i}^{(j)})}{\kdestimate(G_i)+ \epsilon}
\end{aligned}
\end{equation}
where $\tilde{G}_{i}^{(j)}$ denotes the $j$th perturbed counterpart of $G_i$ and we add $\epsilon$ ($\epsilon$ = 1.0e-6 in our experiment) for numerical stability. The objective effectively pushes normal graphs towards high-density regions while moving perturbed versions towards lower-density regions, creating a clear separation in the density landscape.

The parameters $\gnnparams$ and $\kdemixweightsparams$ can be optimized jointly using Adam with learning rates $\eta_{\gnnparams}$ and $\eta_{\kdemixweightsparams}$ respectively. We employ gradient clipping and early stopping based on validation set performance to prevent overfitting. The learning rates are scheduled with a warm-up period followed by cosine decay.

\subsection{Inference Protocol}
The percentile-based threshold estimation is robust to outliers and automatically adapts to the scale of density values. In practice, we set $\gammaTH=10$ as a reasonable default, though this can be adjusted based on domain knowledge about expected anomaly rates.

For computational efficiency during inference, we maintain a reference set of pre-computed embeddings for normal graphs. This allows fast computation of MMD distances without recomputing embeddings for the reference graphs. The memory requirement scales linearly with the size of the reference set.
\subsection{Computational Complexity and Scalability}\label{app:proof_complexity_revised}

Let $N$ be the number of graphs in a batch, $n$ and $m$ be the average number of nodes and edges in these $N$ graphs, respectively. Let $d_{in}$, $d_{hid}$, and $d_{out}$ be the input, hidden, and output dimensions of the GNN, and $L$ be the number of GNN layers. $M$ is the number of KDE bandwidths, and $S$ is the number of MMD kernel bandwidths.

\begin{proposition}[Computational Complexity] \label{thm:complexity_revised}
The computational complexity for embedding a graph is dominated by the GNN ($O(L(m d_{hid} + n d_{hid}^2))$), and computing pairwise MMD using the quadratic estimator takes $O(S n^2 \dout)$. Inference involves $N$ MMD computations, which require $O(NSn^2 \dout)$. Combined, we can get the LGKDE process per graph's density estimation with $O(L(m d_{hid} + n d_{hid}^2) + NSn^2 \dout)$.
\end{proposition}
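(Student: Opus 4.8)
The plan is to decompose the per-graph density evaluation $\kdestimate(G)$ into three sequential stages---GNN encoding, pairwise MMD evaluation against the reference set, and multi-scale KDE aggregation---bound each in turn, and then verify that the two terms in the claimed bound are the dominant ones. First I would analyze the GNN encoder of Algorithm~\ref{alg:deep_graph_mmd}. Each layer applies the update $\sigma(\hat{\mathbf{A}}\,\mathbf{Z}^{(l-1)}\mathbf{W}^{(l)})$, which splits into a dense feature transform $\mathbf{Z}^{(l-1)}\mathbf{W}^{(l)}$ and a sparse propagation by $\hat{\mathbf{A}}$. Since a graph has $n$ nodes with hidden width $d_{hid}$, the transform costs $O(n\,d_{hid}^2)$, while $\hat{\mathbf{A}}$ has $O(m)$ nonzeros so the propagation costs $O(m\,d_{hid})$; the activation is $O(n\,d_{hid})$ and is absorbed. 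Summing over $L$ layers gives the encoding cost $O(L(m\,d_{hid}+n\,d_{hid}^2))$, and crucially this is incurred once per graph rather than once per pair.

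Next I would bound a single MMD evaluation $\mmddist(G_i,G_j)$ using the quadratic estimator of Eq.~\eqref{eq:mmd_distance}. This requires the three kernel-sum blocks over the $n_i^2$, $n_j^2$, and $n_i n_j$ pairs of node embeddings; each Gaussian kernel evaluation between two vectors in $\R^{\dout}$ costs $O(\dout)$, and the supremum over the family $\FamKernelEmb$ of $S$ bandwidths multiplies the work by $S$. With $n_i,n_j=O(n)$ this yields $O(S\,n^2\,\dout)$ per pair. Computing $\kdestimate(G)$ then requires the MMD distance from $G$ to all $N$ reference graphs, i.e.\ $N$ such evaluations, for a total of $O(N\,S\,n^2\,\dout)$.

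Finally I would confirm the aggregation is lower order: once the $N$ distances are available, forming the $M$ component densities $\kdecomp_k(G)$ and mixing them with the softmax weights $\kdemixweight_k(\kdemixweightsparams)$ is $O(NM)$, which is dominated by the MMD stage since evaluating each of the $N$ distances already costs at least $\Omega(\dout)$ and $S n^2 \dout \gg M$ in practice. Adding the once-per-graph encoding term to the $N$-fold MMD term gives the claimed $O(L(m\,d_{hid}+n\,d_{hid}^2)+N\,S\,n^2\,\dout)$. The work here is essentially careful bookkeeping rather than a hard inequality; the main pitfall I would guard against is conflating sparse and dense operations in the GNN layer (charging $O(m)$ for propagation but $O(n\,d_{hid}^2)$ for the transform) and double-counting the encoding cost---it must be charged once, not $N$ times---while verifying that the KDE mixing and any distance-matrix storage are genuinely subdominant to the $N\,S\,n^2\,\dout$ MMD term.
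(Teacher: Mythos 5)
Your proposal is correct and follows essentially the same decomposition as the paper's proof: per-graph GNN encoding at $O(L(m\,d_{hid}+n\,d_{hid}^2))$, a quadratic-estimator MMD cost of $O(S\,n^2\,\dout)$ per pair multiplied by $N$ reference comparisons, and the observation that the $O(NM)$ KDE mixing is subdominant since $M \ll S n^2 \dout$. Your additional detail on splitting each GNN layer into sparse propagation ($O(m\,d_{hid})$) and dense transform ($O(n\,d_{hid}^2)$) only makes explicit what the paper treats as a standard fact.
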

\begin{proof}
We first analyze the complexity component-wise. 

\begin{itemize}[leftmargin=*,noitemsep,topsep=0pt,parsep=0pt]
    \item \textbf{GNN Embedding:} For a single graph, a standard GNN forward pass takes $O(L(m d_{hid} + n d_{hid}^2))$.
    \item \textbf{Pairwise MMD Computation:} The deep MMD distance between two graphs with $n_i$ and $n_j$ nodes involves kernel computations over their node embeddings. Using a quadratic-time estimator, this takes $O(S(n_i^2 + n_j^2)d_{out})$. 
    \item \textbf{KDE Score Calculation:} For a single graph, computing its density score requires comparing it to all $N$ reference graphs, involving $N$ MMD calculations and $N \times M$ evaluations. This totals $O(N S n^2 d_{out} + NM)$ and since $M$ is the number of KDE bandwidths and $M \ll S n^2 d_{out}$, we get $O(N S n^2 d_{out})$ for this KDE scoring process.
\end{itemize}

According to above analysis, in a training epoch with $N$ graphs, GNN embedding becomes $O(N \cdot L(m d_{hid} + n d_{hid}^2))$, computing the full $N \times N$ MMD distance matrix requires then do KDE score calculation becomes $O(N^2 S n^2 d_{out})$. 

The overall complexity is thus dominated by the quadratic term related to pairwise MMD computations. Let $d \approx d_{hid} \approx \dout$ (Usually $d_{hid} \gg d_{out}$ in practice GNNs design, so we use the worst-case as a condition), we can get the total time complexity is $O(N L(md+nd^2) + N^2 S n^2 d)$.

\end{proof}
The following sections provide a more nuanced analysis through both theoretical and empirical comparisons with advanced baselines. 

\subsubsection{Complexity Comparison with Advanced Baselines}\label{app:complexitycompare}

While the quadratic complexity appears demanding, a comparative analysis reveals a more nuanced picture, especially when contrasted with other state-of-the-art methods. Table \ref{tab:complexity_comparison} provides a theoretical comparison.

\begin{table}[h!]
\centering
\caption{Theoretical complexity comparison of LGKDE with representative advanced GLAD baselines. We use $N$ for data size and $d$ for the dominant hidden dimension for brevity.}
\label{tab:complexity_comparison}
\resizebox{\textwidth}{!}{
\begin{tabular}{l|l|l|l}
\toprule
\textbf{Method} & \textbf{Dominant Operations} & \textbf{Time Complexity} & \textbf{Memory Complexity} \\
\midrule
\textbf{LGKDE (ours)} & GNN passes + All-pairs MMD & $O(N L(md+nd^2) + N^2 S n^2 d)$ & $O(NLnd)$ \\
UniFORM (AAAI '25) & Energy-based GNN + Langevin sampling & $O(N L(md+nd^2) + N T n^2 d)$ & $O(NLnd)$ \\
MUSE (NIPS '24)& GNN + Reconstruction errors & $O(N L(md+nd^2) + N n^2)$ & $O(NLnd + Nn^2)$ \\
SIGNET (NIPS '23) & Enumerate/score $k$-node subgraphs & $O(N L(md+nd^2) + N n^k d + N a_{MI})$ & $O(N n^k d)$ \\
CVTGAD (ECML '23) & Twin stochastic views + GNN pass & $O(N L(md+nd^2))$ & $O(NLnd)$ \\
GLocalKD (WSDM '22) & Teacher + student GNN pass & $O(N L(md+nd^2))$ & $O(NLnd)$ \\
OCGIN (ICLR '22)& GNN embedding + SVDD loss & $O(N L(md+nd^2))$ & $O(NLnd)$ \\
\bottomrule
\end{tabular}
}
\end{table}
 where $n$ and $m$ represent the average number of nodes and edges per graph, and $d$ refers to the dominant hidden dimension. $L$ is the number of GNN layers, $S$ is the number of MMD kernel bandwidths, $k$ is the subgraph size specific to SIGNET, and $a_{MI}$ represents the computational cost of SIGNET's mutual information optimization step. 
 
 \newpage
 The analysis indicates that methods like SIGNET, while avoiding the $N^2$ term, introduce a polynomial dependency on the number of nodes $n^k$ for subgraph enumeration, which can be computationally prohibitive. For instance, comparing LGKDE's $O(N^2 S n^2 d)$ with SIGNET's $O(N n^k d)$, we find a break-even point at $n_{b} = (NS)^{1/(k-2)}$. For typical parameters ($S=5, k=4, N=128$), this gives $n_b \approx 25$. Since most benchmark graphs have more than 25 nodes, SIGNET is often asymptotically slower, highlighting that LGKDE's complexity is competitive in many practical scenarios. MUSE introduces a quadratic dependency $O(Nn^2)$ for adjacency matrix reconstruction, which becomes significant for large graphs. UniFORM's energy-based sampling with Langevin dynamics adds a factor of $T$ (sampling steps), resulting in $O(NTn^2d)$ complexity. While both methods avoid the $N^2$ term in batch processing, they introduce different bottlenecks: MUSE in heavy reconstruction overhead, since every possible node pair must be processed through the edge decoder, and UniFORM in iterative sampling, which all result in an implicit large big-O constant. For typical parameters ($T=10$ for UniFORM, $S=5$ for LGKDE), LGKDE remains competitive, especially considering its superior empirical performance.
 
\subsubsection{Empirical Runtime Analysis}

To ground our theoretical analysis in practice, we report empirical runtime comparisons across datasets of varying sizes using an NVIDIA RTX 4090 GPU. Table~\ref{tab:runtime_comparison} shows that LGKDE is consistently faster than SIGNET and remains close to MUSE and UniFORM, even on REDDIT-B where the average graph has 429.63 nodes.

\begin{table}[h!]
\centering
\caption{Empirical total runtime comparison across datasets (seconds, averaged over repeated runs).}
\label{tab:runtime_comparison}
\resizebox{0.85\textwidth}{!}{
\begin{tabular}{l|c|c|c|c|c|c}
\toprule
\textbf{Dataset} & \textbf{\#Graphs} & \textbf{Avg. $n$} & \textbf{LGKDE} & \textbf{SIGNET} & \textbf{MUSE} & \textbf{UniFORM} \\
\midrule
MUTAG & 188 & 17.93 & $42.17_{\pm2.83}$ & $51.95_{\pm3.67}$ & $36.12_{\pm2.45}$ & $39.78_{\pm2.91}$ \\
PROTEINS & 1113 & 39.06 & $215.64_{\pm8.73}$ & $300.24_{\pm12.18}$ & $189.34_{\pm7.56}$ & $204.67_{\pm8.42}$ \\
NCI1 & 4110 & 29.87 & $529.89_{\pm21.34}$ & $707.54_{\pm28.67}$ & $477.56_{\pm19.23}$ & $511.89_{\pm20.45}$ \\
COLLAB & 5000 & 74.49 & $1257.12_{\pm50.34}$ & $1925.48_{\pm76.89}$ & $1115.67_{\pm44.56}$ & $1184.34_{\pm47.23}$ \\
REDDIT-B & 2000 & 429.63 & $1534.56_{\pm61.23}$ & $3055.42_{\pm122.34}$ & $1182.73_{\pm47.12}$ & $1323.71_{\pm52.89}$ \\
\bottomrule
\end{tabular}}
\end{table}

The results show that LGKDE is approximately 1.5$\times$ faster than SIGNET on COLLAB and about 2$\times$ faster on REDDIT-B. This demonstrates LGKDE's practical efficiency for many real-world applications. We further discovered the scalability strategies for LGKDE and found that the quadratic factor can be effectively reduced via sampling.

During the Review, following the reviewer's suggestion, we also analyzed the MMD computation cost as the node size increases. Using the REDDIT-BINARY dataset (2{,}000 graphs, node count range $[6, 3782]$, mean $\bar{n}=429.6$), we partition graphs into 12 quantile-based bins by node count, ensuring $\geq 10$ graphs per bin. Within each bin, we randomly sample 50 graph pairs and measure the wall-clock time of the MMD computation.

From bin-averaged node count $\bar{n}=24.5$ to $\bar{n}=1987.9$, the theoretically $80\times$ increase in $n$ and $6400\times$ increase in $n^2$ lead to an empirical per-pair MMD time increase only from $1.51\pm0.13$ ms to $2.03\pm0.58$ ms, a mere 1.34$\times$ factor. The fitted $O(n^2)$ coefficient is only $1.03\times10^{-7}$ ms/node$^2$. These results empirically validate that the node-level $n^2$ term of MMD cost is well amortized by GPU parallelism and does not constitute a practical scalability bottleneck.

\begin{figure}[h!]
\centering
\includegraphics[width=0.5\textwidth]{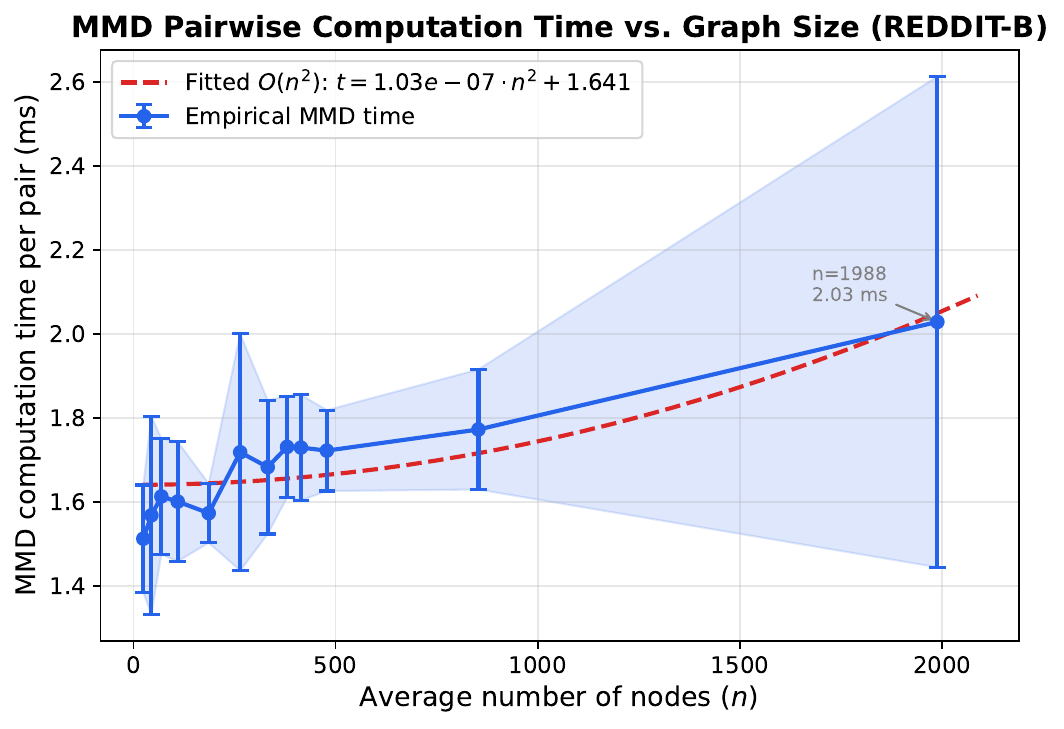}
\caption{Pairwise MMD computation timing analysis on REDDIT-BINARY. Error bars denote one standard deviation over 50 random intra-bin pairs. The dashed red line shows the least-squares fit to an $O(n^2)$ model. The node-level $n^2$ term is strongly amortized by GPU parallelism in practice.}
\label{fig:mmd_timing}
\end{figure}

\subsubsection{Strategies for Scalability Enhancement}~\label{app:sec_fast}

Our primary focus is on developing a learnable kernel density estimation framework for graphs and modeling the probability density function of graph-structured data. While a full investigation into scaling strategy is beyond the scope of this work, we conducted additional experiments to demonstrate that the quadratic complexity can be effectively mitigated. By dynamically selecting a smaller subset of $Q$ reference graphs from the batch of size $N$ for density estimation, the complexity can be reduced to $O(N Q S n^2 d)$. We evaluated two simple yet effective sampling strategies on the PROTEINS dataset, as shown in Table \ref{tab:sampling_scalability}.

The specifics of the evaluated sampling methods are as follows:
\begin{itemize}[leftmargin=*,noitemsep,topsep=0pt,parsep=0pt]
    \item \textbf{Density Stratified Sampling} applies differentiated retention rates across density terciles, preserving 90\%, 80\%, and 70\% of graphs from the low, medium, and high-density regions, respectively. This strategy, inspired by classical stratified sampling \citep{cochran1977sampling}, aims to maintain informational diversity while reducing redundancy from high-density areas.
    \item \textbf{Importance Sampling} employs a sigmoid-weighted selection mechanism centered around the median density of the batch. The probability of selecting a graph $G_i$ is proportional to $P(i) \propto 1/(1 + \exp(-(\rho_i - \rho_{\text{median}})))$, where $\rho_i$ is its density. This approach, rooted in Monte Carlo methods \citep{kahn1953methods}, prioritizes samples near the potential decision boundary, which are often the most informative for the learning task.
\end{itemize}

\begin{table}[h!]
\centering
\caption{Performance and speedup with reference graph sampling strategies on PROTEINS.}
\label{tab:sampling_scalability}
\resizebox{0.6\textwidth}{!}{
\begin{tabular}{l|c|c|c|c}
\toprule
\textbf{Method} & \textbf{Q/N Ratio} & \textbf{AUROC} & \textbf{Speedup} & \textbf{$\Delta$ Performance} \\
\midrule
Full Batch & 1.0 & 0.7897 & - & - \\
\midrule
\multirow{3}{*}{\textbf{Density Stratified}} & 0.8 & 0.7891 & 1.31x & -0.08\% \\
& 0.5 & 0.7851 & 1.84x & -0.58\% \\
& 0.3 & 0.7806 & 2.47x & -1.15\% \\
\midrule
\multirow{3}{*}{\textbf{Importance Sampling}} & 0.8 & 0.7892 & 1.26x & -0.06\% \\
& 0.5 & 0.7918 & 1.79x & +0.27\% \\
& 0.3 & 0.7887 & 2.38x & -0.13\% \\
\bottomrule
\end{tabular}}
\end{table}

These results show that even simple sampling methods can achieve substantial speedups (up to 2.5x) with minimal to no performance degradation. Notably, importance sampling at a 50\% ratio slightly improved performance, likely by focusing on more informative samples near the decision boundary. This validates that strategic reference selection is a viable path toward scaling LGKDE to even larger datasets.
\section{Theoretical Analysis Details}
\label{app:proofs_revised}

This appendix provides detailed definitions, assumptions, proofs, and technical lemmas supporting all theoretical results presented in Section~\ref{sec:theoretical_analysis_revised}. Pliminaries and definitions are in \ref{app:preliminaries_revised}; Proof of Consistency Theorem \ref{thm:consistency_revised} in~\ref{app:proof_consistency_revised}; Proof of Theorem \ref{thm:convergence_rate_revised} (Convergence Rate) in ~\ref{app:proof_convergence_rate_revised}, which also including the discussion on intrinsic dimension $\dint$ in~\ref{app:dint_discussion}; \ref{app:proof_robustness} provides complete Robustness analysis, including, Theorem \ref{thm:robustness_metric_revised_simple} (Robustness of KDE to Metric Perturbations) in~\ref{app:proof_robustness_metric_revised}, Theorem \ref{prop:robustness_mmd_graph_revised_new} (Robustness of MMD to Graph Perturbations) in~\ref{app:proof_robustness_mmd_graph_revised}, and Corollary~\ref{cor:robustness_kde_graph_revised} (Robustness of the LGKDE framework) in~\ref{app:proof_robust_corollay}. Generalization bound in Theorem \ref{thm:sample_complexity_revised} is proof in~\ref{app:proof_sample_complexity_revised}.

\subsection{Preliminaries and Definitions}
\label{app:preliminaries_revised}

\subsubsection{Graph Space and Density}
Let $\Gspace$ be the space of attributed graphs $G = (V, E, \mathbf{X})$, represented by $(\mathbf{A}, \mathbf{X})$ where $\mathbf{A} \in \R^{n \times n}$ ($n=|V|$) is the adjacency matrix and $\mathbf{X} \in \R^{n \times d_{in}}$ is the node feature matrix. We assume graphs are drawn i.i.d. from a probability measure $\Ptrue$ on $\Gspace$. We aim to estimate the density $\ftrue$ of $\Ptrue$ relative to a base measure $\mu$ on $\Gspace$, such that $\Ptrue(S) = \int_S \ftrue(G) d\mu(G)$ for $S \subseteq \Gspace$. The density estimation task is performed within the metric space induced by the learned MMD distance.

\subsubsection{GNN Encoder}
The $L$-layer GNN $\gnnenc$ with parameters $\gnnparams=\{\mathbf{W}^{(l)}\}_{l=1}^L$ maps $G=(\mathbf{A},\mathbf{X})$ to node embeddings $\mathbf{Z} \in \R^{n \times \dout}$. A typical layer update (e.g., GCN) is:
\begin{equation}
    \mathbf{Z}^{(l)} = \sigma(\hat{\mathbf{A}} \mathbf{Z}^{(l-1)} \mathbf{W}^{(l)})
\end{equation}
where $\mathbf{Z}^{(0)} = \mathbf{X} \mathbf{W}^{(0)}$ (or just $\mathbf{X}$ if $d_{in}=d_{hid}$), $\hat{\mathbf{A}}$ is a normalized adjacency matrix (e.g., $\hat{\mathbf{A}} = \mathbf{D}^{-1/2}(\mathbf{A}+\mathbf{I})\mathbf{D}^{-1/2}$), $\mathbf{W}^{(l)} \in \R^{d_{l-1} \times d_l}$, and $\sigma$ is an activation function. The final output is $\mathbf{Z} = \mathbf{Z}^{(L)}$.
\subsubsection{MMD Metric}
The squared MMD distance between two graph $G_i$ and $G_j$ (which have their empirical node distributions $P_{\mathbf{Z}_i}$ and $P_{\mathbf{Z}_j}$) using a single kernel $\kernelEmb(\cdot, \cdot) \in \FamKernelEmb$ is estimated via the biased V-statistic:
\begin{equation} \label{eq:mmd_estimator_app_f}
\hat{d}_{\kernelEmb(\cdot, \cdot)}^2(\mathbf{Z}_i, \mathbf{Z}_j) = \frac{1}{n_i^2}\sum_{u,v \in V_i} \kernelEmb(\mathbf{z}_u^{(i)}, \mathbf{z}_v^{(i)};\gamma_s) + \frac{1}{n_j^2}\sum_{u,v \in V_j} \kernelEmb(\mathbf{z}_u^{(j)}, \mathbf{z}_v^{(j)};\gamma_s) - \frac{2}{n_i n_j}\sum_{u \in V_i, v \in V_j} \kernelEmb(\mathbf{z}_u^{(i)}, \mathbf{z}_v^{(j)};\gamma_s)
\end{equation}
The full metric $\mmddistsq(G_i, G_j) = \sup_{\kernelEmb(\cdot, \cdot) \in \FamKernelEmb} \hat{d}_{\kernelEmb(\cdot, \cdot)}^2(\gnnenc(G_i), \gnnenc(G_j))$, as shown in paper main content Eq. \eqref{eq:mmd_distance}.

\newpage
\subsubsection{Multi-Scale KDE}
The estimator is $$\kdestimate(G) = \sum_{k=1}^{M} \pi_k(\kdemixweightsparams)\kdecomp_k(G) = \sum_{k=1}^{M} \pi_k(\kdemixweightsparams) \frac{1}{N} \sum_{m=1}^N \kernelKDE( \mmddist(G, G_m), \hkde ).$$ The kernel $\kernelKDE(d,h)$ is based on a kernel profile $\kernelKDEBase$, e.g., $\kernelKDEBase(t) = e^{-t/2}$ for a Gaussian  profile and normalization $C_{\dint} =(2\pi)^{(\dint)/2}$. Then, $\kernelKDE(d,h) = \frac{1}{C_{\dint}h^{\dint}}\kernelKDEBase\left(\frac{d^2}{h^2}\right) =  \frac{1}{(2\pi)^{d_{int}/2} h^{d_{int}}} e^{-d^2/(2h^2)}$.

\subsubsection{Remarks of Assumptions}\label{app:preliminaries_revised_assumption}
As given in Section~\ref{sec:theoretical_analysis_revised}, our theoretical analysis of the LGKDE framework under the standard and mild \textbf{Assumption}~\ref{assum:mainpaper}:

\begin{assumption}\label{assum:mainpaper}
\begin{enumerate}[label=\textbf{\roman*)}]
\item  The GNN $\Phi_{\theta}$ has bounded weights $\|\mathbf{W}^{(l)}\|_F \leq B_W$. 
\item  The true density $f^*$ is bounded and has bounded second derivatives.
\end{enumerate}
\end{assumption}

These are mild, widely adopted prerequisites that are well-grounded in both practical application and statistical theory.
\begin{itemize}[leftmargin=*]
\item \textbf{Bounded GNN Weights (Assumption~\ref{assum:mainpaper} i).}
The assumption of bounded model weights is a common and practical requirement for the theoretical analysis of deep learning models. This condition is almost universally met in practice as a direct consequence of standard training procedures. Techniques such as weight decay, gradient clipping, and normalization layers are routinely employed to ensure numerical stability and prevent training from diverging. Indeed, this assumption is implicitly or explicitly foundational to the analysis of many seminal GNNs, including GCN~\citep{kipf2017semi}, GraphSAGE~\citep{hamilton2017inductive}, and GIN~\citep{xu2019powerful}, which rely on it to establish theoretical guarantees.

\item \textbf{Smoothness of the Density Function (Assumption~\ref{assum:mainpaper} ii).}
The requirement that the true density be bounded and smooth (i.e., possessing bounded second derivatives) is a standard regularity condition in the non-parametric statistics literature~\citep{wasserman2006all,tsybakov2009nonparametric}. It posits that the underlying data distribution is well-behaved and does not exhibit pathological properties such as infinite discontinuities. 

This assumption is substantially milder than the stringent geometric constraints imposed by many existing GAD methods, which often presume a specific, rigid structure for the normal data such as a single hypersphere~\citep{zhao2021using} or distinct clusters~\citep{li2023cvtgad}. 

What's more, the emergence of clear manifold structures in the learned embeddings of real-world graphs (Figures~\ref{fig:mutag_demo} and~\ref{fig:all_methods_tsne_compare}) provides strong empirical support for this condition, indicating that smoothness arises naturally from the data itself.
\end{itemize}

Furthermore, since using the GCN with ReLU activation function as the backbone of $\gnnenc$ and the Gaussian kernel function of  $\FamKernelEmb$ and $\kernelKDE$, under \textbf{Assumption}~\ref{assum:mainpaper}, we can trivially get the following three conditions to further support our analysis:

\begin{condition}[GNN Properties] \label{assum:gnn_revised_app}
(i) $\|\mathbf{W}^{(l)}\|_F \le B_W$; (ii) Activation function $\sigma$ is $\rho_\sigma$-Lipschitz (since ReLU activation function is 1-Lipschitz); (iii) Permutation invariant architecture (nature property for GCN).
\end{condition}

\begin{condition}[Density Regularity] \label{assum:density_revised_app}
(i) $\|\ftrue\|_\infty \le M$; (ii) $\ftrue$ possesses smoothness of order $\beta=2$, corresponding to bounded second derivatives in the Riemannian manifold induced by the learned MMD metric.
\end{condition}

\begin{condition}[Kernel Properties] \label{assum:kernel_revised_app}
MMD kernels $\FamKernelEmb$ are characteristic. KDE kernel profile is symmetric, non-negative, and has sufficient smoothness (since both kernel types are typically Gaussian, they have infinite order of smoothness).
\end{condition}

\subsection{Proof of Theorem \ref{thm:consistency_revised} (Consistency)}
\label{app:proof_consistency_revised}

\begin{proof}
The proof relies on showing that both the bias and variance of the estimator $\kdestimate$ converge to zero under the given conditions. We analyze the estimator assuming optimal fixed parameters $\gnnparams^*, \kdemixweightsparams^*$. Let $d^*(G, G') = \mmddist|_{\gnnparams=\gnnparams^*}(G, G')$ denote the MMD metric with these optimal parameters.

\textbf{1. Bias Analysis:}
The expected value of the estimator is:
\begin{align*}
    \E[\kdestimate(G)] &= \E\left[ \sum_{k=1}^{M} \pi_k^* \frac{1}{N} \sum_{m=1}^N \kernelKDE( d^*(G, G_m), \hkde ) \right] \\
               &= \sum_{k=1}^{M} \pi_k^* \E_{G' \sim \Ptrue} [ \kernelKDE( d^*(G, G'), \hkde ) ] \\
               &= \sum_{k=1}^{M} \pi_k^* \int_{\Gspace} \kernelKDE( d^*(G, G'), \hkde ) \ftrue(G') d\mu(G')
\end{align*}

Let $K_h(G, G') = \kernelKDE( d^*(G, G'), h)$. This is a kernel function defined on the graph space $\Gspace$ using the metric $d^*$. 

Under the Assumption~\ref{assum:mainpaper}: \textbf{ii)} $f^*$ is bounded and has bounded second derivatives ($\beta=2$ smoothness) and the kernel $\kernelKDE$ satisfies standard moment conditions (e.g., symmetric profile $\kernelKDEBase$), we can apply a Taylor expansion of $\ftrue(G')$ around $G$. In the metric space $(\Gspace, d^*)$, this expansion can be written as:
\begin{equation*}
\ftrue(G') = \ftrue(G) + \nabla f^*(G) \cdot \text{expmap}_G^{-1}(G') + \frac{1}{2}\nabla^2 f^*(G)[\text{expmap}_G^{-1}(G'), \text{expmap}_G^{-1}(G')] + o(d^*(G,G')^2)
\end{equation*}
where $\nabla f^*$ is the gradient, $\nabla^2 f^*$ is the Hessian operator in the Riemannian manifold $\mathcal{M}$ induced by $d^*$, and $\text{expmap}_G^{-1}(G')$ is the inverse exponential map that maps $G'$ to a tangent vector at $G$. 
Under the symmetry conditions of the kernel and using properties of Riemannian geometry, this leads to:
\begin{equation*}
    \int K_h(G, G') \ftrue(G') d\mu(G') = \ftrue(G) \int K_h(G, G') d\mu(G') + \frac{h^2}{2} \mu_2(\kernelKDEBase) \Delta_{d^*}f^*(G) + o(h^2)
\end{equation*}
where $\int K_h(G, G') d\mu(G') \to 1$ as $h \to 0$, $\mu_2$ is the second moment of the kernel profile, and $\Delta_{d^*}f^*(G)$ represents the Laplace-Beltrami operator (the trace of the Hessian) applied to $f^*$ at point $G$ in the Riemannian manifold induced by the metric $d^*$.

Thus, the bias is:
\begin{equation*}
    \mathrm{Bias}(\kdestimate(G)) = \E[\kdestimate(G)] - \ftrue(G) = \sum_{k=1}^{M} \pi_k^* \left( \frac{\hkde^2}{2} \mu_2 \Delta_{d^*}f^*(G) + o(\hkde^2) \right)
\end{equation*}

As $\hkde \to 0$, the bias converges to 0 pointwise.

\textbf{2. Variance Analysis:}
\begin{align*}
    \mathrm{Var}(\kdestimate(G)) &= \mathrm{Var}\left( \sum_{k=1}^{M} \pi_k^* \frac{1}{N} \sum_{m=1}^N K_{\hkde}(G, G_m) \right) \\
    &= \mathrm{Var}\left( \frac{1}{N} \sum_{m=1}^N \left[\sum_{k=1}^{M} \pi_k^* K_{\hkde}(G, G_m)\right] \right) \quad \text{(rearranging terms)}\\
\end{align*}

Since $G_1, G_2, \ldots, G_N$ are i.i.d. samples from $\Ptrue$, the kernel evaluations $K_{\hkde}(G, G_m)$ for a fixed test point $G$ are also i.i.d. random variables. Using the property that for i.i.d. random variables $X_1, \ldots, X_N$ with common variance $\sigma^2$, we have $\mathrm{Var}(\frac{1}{N}\sum_{i=1}^N X_i) = \frac{\sigma^2}{N}$, we obtain:
\begin{align*}
    \mathrm{Var}(\kdestimate(G)) &= \frac{1}{N} \mathrm{Var}\left( \sum_{k=1}^{M} \pi_k^* K_{\hkde}(G, G_1) \right) \\
    &\le \frac{1}{N} \E\left[ \left( \sum_{k=1}^{M} \pi_k^* K_{\hkde}(G, G_1) \right)^2 \right] \quad \text{(since $\mathrm{Var}(X) \leq \E[X^2]$)}\\
    &= \frac{1}{N} \E\left[ \sum_{k=1}^{M}\sum_{j=1}^{M} \pi_k^* \pi_j^* K_{\hkde}(G, G_1)K_{\hkdej}(G, G_1) \right] \\
    &= \frac{1}{N} \sum_{k=1}^{M}\sum_{j=1}^{M} \pi_k^* \pi_j^* \E\left[ K_{\hkde}(G, G_1)K_{\hkdej}(G, G_1) \right]
\end{align*}

For small bandwidths, $K_h(G, G')$ is concentrated around $G'=G$.
$$\E[ K_{h}(G, G_1)^2 ] = \int K_h(G, G')^2 \ftrue(G') d\mu(G') \approx \ftrue(G) \int K_h(G, G')^2 d\mu(G')$$
Using the definition $K_h(G, G') = \frac{1}{C_{\dint} h^{\dint}} \kernelKDEBase(\frac{d^*(G,G')^2}{h^2})$, we have $\int K_h(G, G')^2 d\mu(G') \approx \frac{R(\kernelKDEBase)}{h^{\dint}}$, where $R(\kernelKDEBase) = \frac{\int \kernelKDEBase(t)^2 dt}{C_{\dint}}$.

So, $\mathrm{Var}(\kdecomp_j(G)) \approx \frac{\ftrue(G) R(\kernelKDEBase)}{N \hkde^{\dint}}$.
The variance of the sum is bounded by:
\begin{equation*}
    \mathrm{Var}(\kdestimate(G)) \le \frac{C}{N} \sum_{k,j} \pi_k^* \pi_j^* \frac{1}{(\min(\hkde, \hkdej))^{\dint}} = O\left( \frac{1}{N (\min_k \hkde)^{\dint}} \right)
\end{equation*}

More accurately, $\mathrm{Var}(\kdestimate(G)) \approx \frac{\ftrue(G)}{N} \sum_{k,j} \pi_k^* \pi_j^* \int K_{\hkde}(G,G') K_{\hkdej}(G,G') d\mu(G')$. For diagonal terms ($k=j$), this gives the $1/(N \hkde^{\dint})$ scaling. Off-diagonal terms are typically smaller. The overall rate is dominated by the smallest bandwidth if weights are comparable.
As $N \hkde^{\dint} \to \infty$ for all $k$, the variance converges to 0 pointwise.

\textbf{3. $L_1$ Convergence:}
$\E \int |\kdestimate(G) - \ftrue(G)| d\Ptrue(G) = \int \E[|\kdestimate(G) - \ftrue(G)|] d\Ptrue(G)$.
Since $\E[|\cdot|] \le \sqrt{\E[(\cdot)^2]} = \sqrt{\mathrm{Bias}^2 + \mathrm{Var}}$, and both bias and variance integrate to 0, the expected $L_1$ error converges to 0. Convergence in probability follows, we get $\kdestimate \xrightarrow{p} \ftrue$ in $L_1$ norm.
\end{proof}

\subsection{Proof of Theorem \ref{thm:convergence_rate_revised} (Convergence Rate)}
\label{app:proof_convergence_rate_revised}
\begin{proof}
The MISE is $\int (\mathrm{Bias}^2(G) + \mathrm{Var}(G)) d\mu(G)$.
Using the bias and variance approximations from Appendix \ref{app:proof_consistency_revised}, we can get the integrated squared bias (ISB),
\begin{align*}
    \text{ISB} &= \int \left( \sum_{k=1}^{M} \pi_k^* \frac{\hkde^2}{2} \mu_2 \Delta_{d^*}f^*(G) + o(\sum \pi_k^* \hkde^2) \right)^2 d\mu(G) \\
    &\approx \left( \sum_{k=1}^{M} \pi_k^* \frac{\hkde^2}{2} \mu_2 \right)^2 \int (\Delta_{d^*}f^*(G))^2 d\mu(G) = O( (\sum \pi_k^* \hkde^2)^2 ) = O(h_{avg}^4)
\end{align*}
where $h_{avg}^2 = \sum_k \pi_k^* \hkde^2$ represents the effective squared bandwidth as a weighted average of individual bandwidths. And the integrated variance (IV),
\begin{align*}
    \text{IV} &= \int \frac{1}{N} \sum_{k,j} \pi_k^* \pi_j^* \E[ K_{\hkde}(G, G_1) K_{\hkdej}(G, G_1) ] d\mu(G) \\
    &\approx \frac{1}{N} \sum_{k} (\pi_k^*)^2 \int \frac{\ftrue(G) R(\kernelKDEBase)}{\hkde^{\dint}} d\mu(G) \quad (\text{ignoring off-diagonal terms}) \\
    &= O\left( \frac{1}{N} \sum_{k} \frac{(\pi_k^*)^2}{\hkde^{\dint}} \right)
\end{align*}
Now we finally get that MISE $\approx A \cdot h_{avg}^4 + B \cdot \frac{1}{N} \sum_{k} \frac{(\pi_k^*)^2}{\hkde^{\dint}}$.

If we consider a single effective bandwidth $h$, MISE $\approx A h^4 + B' / (N h^{\dint})$. Minimizing w.r.t $h$ gives $h^* \sim N^{-1/(4+\dint)}$ and MISE $\sim N^{-4/(4+\dint)}$.
What's more, since we are utilizing the pairwise MMD distance for our LGKDE, \textit{MISE} is bounded to $O(N^{-0.8})$ for $\dint = 1$~(Detailed discussion is provided in the following~\ref{app:dint_discussion}), demonstrating the statistical efficiency of our approach.

\subsubsection{Discussion and analysis on the Intrinsic Dimension $\dint$.}\label{app:dint_discussion}
Although individual graphs possess high-dimensional node feature representations and potentially complex topological structures, the intrinsic dimension relevant to our KDE framework is determined by the geometry of the metric space in which density estimation occurs. In LGKDE, the kernel function $K_h(\cdot)$ operates on the \emph{scalar} MMD distance $d_{\text{MMD}}(G, G') \in \mathbb{R}$ between graph pairs, rather than on multi-dimensional coordinate vectors. This fundamental design choice has direct theoretical implications for the convergence analysis.

Formally, the kernel $K_h(d_{\text{MMD}}(G, G'))$ defines neighborhoods in the graph space parameterized by a single distance coordinate. Under this construction, the variance term in the mean integrated squared error (MISE) decomposition satisfies: $\int K_h^2(d_{\text{MMD}}(G, G')) \, d\mu(G') = O(h^{-1}),$
which is characteristic of density estimation in a one-dimensional space, where kernel mass concentrates along a single axis \citep{wasserman2006all, tsybakov2009nonparametric}. In contrast, if the density were estimated in a $d$-dimensional Euclidean space with coordinate-wise kernels $K_h(\|\mathbf{x} - \mathbf{x}'\|)$, the variance would scale as $O(h^{-d})$, reflecting the volume growth in higher dimensions.

Therefore, the MMD metric induces a local geometric structure where $\dint = 1$ is the effective intrinsic dimension for the KDE problem, independent of the dimensionality of graph node features or the ambient representation space. This yields the minimax-optimal convergence rate $O(N^{-4/(4+1)}) = O(N^{-0.8})$ for our framework, consistent with classical one-dimensional nonparametric density estimation theory.

In the nonparametric statistics literature, intrinsic dimension is a \emph{property of the data manifold and the chosen metric}, not a trainable model parameter \citep{levina2004maximum, hein2005intrinsic}. Existing methods for intrinsic dimension estimation (e.g., local PCA, correlation dimension) aim to \emph{discover} this property from data to inform bandwidth selection or rate analysis, rather than treating $\dint$ itself as a learnable variable within the KDE formulation.

\end{proof}

\subsection{Proof of Robustness}\label{app:proof_robustness}
\subsubsection{Proof of Theorem \ref{thm:robustness_metric_revised_simple} (Robustness of KDE to Metric Perturbations)}
\label{app:proof_robustness_metric_revised}

\begin{proof}
Let $G_1, G_2 \in \Gspace$. The MMD distance between them under the fixed GNN parameters $\gnnparams$ is $d_{12} = \mmddist(G_1, G_2)$.
For any reference graph $G_m \sim \Ptrue$, let $d_{1m} = \mmddist(G_1, G_m)$ and $d_{2m} = \mmddist(G_2, G_m)$. By the triangle inequality for the metric $\mmddist$ (which holds for fixed $\gnnparams$), we have:
\begin{equation*}
    |d_{1m} - d_{2m}| \le \mmddist(G_1, G_2) = d_{12}.
\end{equation*}

The KDE estimate for an arbitrary graph $G$ is given by:
\begin{equation*}
    \kdestimate(G) = \sum_{k=1}^{M} \pi_k(\kdemixweightsparams) \frac{1}{N} \sum_{j=1}^N \kernelKDE(\mmddist(G, G_j), \hkde)
\end{equation*}
where $\{G_j\}_{j=1}^N$ is a reference set of $N$ graphs, typically sampled from $\Ptrue$. For convenience, let $d_{1j} = \mmddist(G_1, G_j)$ and $d_{2j} = \mmddist(G_2, G_j)$.
The difference in KDE estimates for $G_1$ and $G_2$ is:
\begingroup
\allowdisplaybreaks[4]
\begin{align*}
    |\kdestimate(G_1) - \kdestimate(G_2)| &= \abs{ \sum_{k=1}^{M} \pi_k(\kdemixweightsparams) \frac{1}{N} \sum_{j=1}^N \left[ \kernelKDE(\mmddist(G_1, G_j), \hkde) - \kernelKDE(\mmddist(G_2, G_j), \hkde) \right] } \\
    &\le \sum_{k=1}^{M} \pi_k(\kdemixweightsparams) \frac{1}{N} \sum_{j=1}^N \abs{ \kernelKDE(d_{1j}, \hkde) - \kernelKDE(d_{2j}, \hkde) }\\
    &\le \sum_{k=1}^{M} \pi_k(\kdemixweightsparams) \frac{1}{N} \sum_{j=1}^N \abs{ \frac{1}{(2\pi)^{d_{int}/2} h_k^{d_{int}}} e^{-d_{1j}^2/(2h_k^2)} - \frac{1}{(2\pi)^{d_{int}/2} h_k^{d_{int}}} e^{-d_{2j}^2/(2h_k^2)} }\\  
    &\le \frac{1}{N}\sum_{k=1}^{M}  \frac{\pi_k(\kdemixweightsparams)}{(2\pi)^{d_{int}/2} h_k^{d_{int}}}  \sum_{j=1}^N \abs{ e^{-d_{1j}^2/(2h_k^2)} - e^{-d_{2j}^2/(2h_k^2)} } \\
    &\overset{(a)}{\le} \frac{1}{N}\sum_{k=1}^{M}  \frac{\pi_k(\kdemixweightsparams)}{(2\pi)^{d_{int}/2} h_k^{d_{int}}}  \sum_{j=1}^N \frac{1}{\sqrt{2}h_k}\sqrt{\frac{2}{e}} \abs{d_{1j}-d_{2j}} \\
    &\le \frac{1}{N}\sum_{k=1}^{M}  \frac{\pi_k(\kdemixweightsparams)}{(2\pi)^{d_{int}/2} h_k^{d_{int}}}  \sum_{j=1}^N \frac{1}{\sqrt{2}h_k}\sqrt{\frac{2}{e}} d_{12}  \\
    &= \frac{d_{12}}{\sqrt{e}(2\pi)^{d_{int}/2}}\sum_{k=1}^{M}  \frac{\pi_k(\kdemixweightsparams)}{ h_k^{1+d_{int}}} \\
    &\leq \frac{d_{12}}{\sqrt{e}(2\pi)^{d_{int}/2}\min_{k}h_k^{1+d_{int}}}\sum_{k=1}^{M}  {\pi_k(\kdemixweightsparams)} \\
    &\overset{(b)}{=} \frac{d_{12}}{\sqrt{e}(2\pi)^{d_{int}/2}\min_{k}h_k^{1+d_{int}}}
\end{align*}
\endgroup
In the above derivations, (a) holds due to the fact that $\exp(-x^2)$ is $\sqrt{\frac{2}{e}}$-Lipschitz, and (b) holds because $\sum_{k=1}^{M}  {\pi_k(\kdemixweightsparams)}$ is always 1.
Letting $d_{int}=1$, we have
\begin{equation}
    |\kdestimate(G_1) - \kdestimate(G_2)|\leq \frac{1}{\sqrt{2e\pi}\min_kh_k^2}\mmddist(G_1, G_j)
\end{equation}
This completes the proof.
\end{proof}

\subsubsection{Proof of Theorem \ref{prop:robustness_mmd_graph_revised_new} (Robustness of MMD to Graph Perturbations)}
\label{app:proof_robustness_mmd_graph_revised}

\begin{proof}

 Theorem~\ref{prop:robustness_mmd_graph_revised_new} can be proved via leveraging the following proposition from \citep{sun_mmd_2024}, 
\begin{proposition}\label{robust}
For any two graphs $G_1,G_2$, without loss of generality, suppose $n_1=n_2=n$ and the minimum node degrees of $G_1,G_2$ are both $\alpha$. Suppose for $i=1,2$, $\|A_i\|_2 \leq\beta_A$,$\|X_i\|_2\leq\beta_X$,$\|Y_i\|_2\leq\beta_Y$, $\|Z_i\|_F\le \eta$, $\|W^{(l)}\|_2\le\beta_{W^{(l)}}$, $l=1,2,\ldots, L$, and the activation function $\sigma$ is $\rho$-Lipschitz continuous. Denote the effects of structural perturbation as $\kappa=\min(1^\top\Delta_{A_i})$. Then the inequality for Deep MMD-GK holds:
$$
\hat{d}^2_{\mathcal{K}}(\tilde{G}^{(l)}_1,\tilde{G}^{(l)}_2) 
\le\hat{d}^2_{\mathcal{K}}(G^{(l)}_1, G^{(l)}_2) +
\left(\frac{4}{h^2} + \frac{2\epsilon^4}{h^4}\right) \left(2\Delta_{G_1}^4 + 2\Delta_{G_2}^4+n(\Delta_{G_1}+\Delta_{G_2})^2 + \frac{\epsilon}{\sqrt{n}}\left(\Delta_{G_1}+\Delta_{G_2}\right)\right)
$$
where $\Delta_{G_\cdot}=\left(\frac{\rho}{1+\alpha+\kappa}\right)^{L} \prod_{i=1}^{L}\beta_{W^{(i)}}\left(\left(\beta_A + \|\Delta_{A_\cdot} \|_2\right)^{L}\|\Delta_Z\|_F + \eta\sum_{j=0}^{L-1}\beta_A^{j}\left(\sqrt{(\beta_A+||\Delta_{A_{\cdot}}||_2)^2-\beta_A^2}\right)^{L-j}\right)$, $\epsilon=2(1+\alpha)^{-l}(1 + \beta_A )^l(\beta_X + \beta_Y)$, and $h$ is the optimal kernel bandwidth among kernel family $\mathcal{K}$.
\end{proposition}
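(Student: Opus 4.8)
The plan is to prove the inequality in two largely independent stages and then reduce the supremum over the kernel family $\mathcal{K}$ to a single kernel. First I would fix one kernel $k\in\mathcal{K}$ with bandwidth $h$ and establish a per-kernel version of the bound; this suffices because, letting $k^\star$ attain the supremum defining $\hat d^2_{\mathcal{K}}(\tilde G_1,\tilde G_2)$, a per-kernel estimate gives $\hat d^2_{k^\star}(\tilde G_1,\tilde G_2)\le \hat d^2_{k^\star}(G_1,G_2)+(\text{bound})\le \hat d^2_{\mathcal{K}}(G_1,G_2)+(\text{bound})$. The two stages are then: (i) control the displacement of the GNN node embeddings under the joint feature/structure perturbation, yielding $\|\tilde Z_i-Z_i\|_F\le\Delta_{G_i}$; and (ii) convert this embedding displacement into a bound on the increment of the Gaussian MMD V-statistic.

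For stage (i) I would unroll the layer recursion $Z^{(l)}=\sigma(\hat A Z^{(l-1)}W^{(l)})$, $\hat A=D^{-1/2}(A+I)D^{-1/2}$, and propagate the error layerwise using $\rho$-Lipschitzness of $\sigma$, submultiplicativity of $\|\cdot\|_2$, and $\|W^{(l)}\|_2\le\beta_{W^{(l)}}$. The two error sources separate cleanly: the feature perturbation $\Delta_Z$ passes through the chain of operators to give the $(\beta_A+\|\Delta_A\|_2)^L\|\Delta_Z\|_F$ summand, while the structural perturbation enters through differences of matrix powers $\tilde A^{\,L}-A^{\,L}$, which I would expand telescopically as $\sum_{j=0}^{L-1}A^{\,j}(\tilde A-A)\tilde A^{\,L-1-j}$ and then spectral-bound to obtain the $\eta\sum_{j=0}^{L-1}\beta_A^{\,j}\big(\sqrt{(\beta_A+\|\Delta_A\|_2)^2-\beta_A^2}\big)^{L-j}$ summand. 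The degree normalization supplies the contraction factor $(\rho/(1+\alpha+\kappa))^L$: the minimum-degree hypothesis lower-bounds the diagonal of $D$, and $\kappa=\min(\mathbf 1^\top\Delta_A)$ records the minimal degree shift caused by $\Delta_A$. Assembling the two summands with the contraction factor reproduces the stated $\Delta_{G_\cdot}$, and an analogous but simpler forward bound on the embeddings themselves gives the magnitude $\epsilon=2(1+\alpha)^{-l}(1+\beta_A)^l(\beta_X+\beta_Y)$.

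For stage (ii) I would write the increment $\hat d^2_k(\tilde G_1,\tilde G_2)-\hat d^2_k(G_1,G_2)$ as the normalized sum over the three V-statistic blocks of entrywise differences $k(\tilde z_p,\tilde z_q)-k(z_p,z_q)$. With $k(u,v)=\exp(-\|u-v\|^2/h^2)$, expanding in the squared-distance argument to second order and using that displacements of $\|u-v\|^2$ are controlled by $\epsilon\,\Delta_{G_i}$ and $\Delta_{G_i}^2$ while the embeddings have magnitude $\le\epsilon$, each entrywise difference is bounded by a coefficient of the form $4/h^2+2\epsilon^4/h^4$ times a combination of $\Delta_{G_i},\Delta_{G_i}^2,\Delta_{G_i}^4$. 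Summing over the $n^2$ entries per block with the $1/n^2$ (respectively $1/n$ in the cross term) normalization and grouping by order reproduces the second factor $2\Delta_{G_1}^4+2\Delta_{G_2}^4+n(\Delta_{G_1}+\Delta_{G_2})^2+\tfrac{\epsilon}{\sqrt n}(\Delta_{G_1}+\Delta_{G_2})$, which after the sup-reduction of the first paragraph yields the claim.

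The hard part will be the structural half of stage (i): because $\hat A=D^{-1/2}(A+I)D^{-1/2}$ is nonlinear in $A$, the perturbation $\Delta_A$ disturbs both the aggregation operator and the degree normalization at once, so one cannot simply factor out $\|\Delta_A\|_2$. Extracting the clean factor $(1+\alpha+\kappa)^{-L}$ requires a uniform lower bound on the perturbed degrees (via $\alpha$ and $\kappa$) and a bound on $\|\tilde A^{\,L}-A^{\,L}\|_2$ that does not let the perturbed spectral radius inflate the estimate; the telescoping-of-powers expansion together with the gap term $\sqrt{(\beta_A+\|\Delta_A\|_2)^2-\beta_A^2}$ is exactly what keeps this contribution sharp rather than the loose $(\beta_A+\|\Delta_A\|_2)^L-\beta_A^L$, and verifying that this refinement propagates validly layer by layer is the step I would spend the most care on.
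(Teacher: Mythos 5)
First, a point of comparison you could not have known: the paper never proves this proposition at all. It is quoted verbatim as an external result from the cited work \citep{sun_mmd_2024} and used as a black box --- the paper's actual proof work consists only of \emph{specializing} it (setting $\tilde{G}_1=G_1=G$, $G_2=\tilde{G}$, so $\Delta_{A_1}=0$, $\Delta_{X_1}=0$, $\Delta_{Z_1}=0$ and hence $\Delta_{G_1}=0$) to obtain Theorem~\ref{prop:robustness_mmd_graph_revised_new}. So your attempt is a from-scratch reconstruction of a result the paper imports, and your overall architecture (per-kernel bound plus sup-reduction; a layerwise embedding-perturbation bound $\|\tilde{Z}_i-Z_i\|_F\le\Delta_{G_i}$; then a kernel-level argument converting embedding displacement into a V-statistic increment) is the natural and almost certainly correct skeleton for such a proof.

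However, there is a genuine gap in your stage (i), precisely at the step you flag as the hard part. The telescoping expansion $\tilde{A}^{L}-A^{L}=\sum_{j=0}^{L-1}A^{j}(\tilde{A}-A)\tilde{A}^{L-1-j}$, followed by spectral bounds, yields
\begin{equation*}
\|\tilde{A}^{L}-A^{L}\|_2 \;\le\; \sum_{j=0}^{L-1}\beta_A^{\,j}\,\|\Delta_A\|_2\,(\beta_A+\|\Delta_A\|_2)^{L-1-j} \;=\;(\beta_A+\|\Delta_A\|_2)^{L}-\beta_A^{L},
\end{equation*}
in which $\|\Delta_A\|_2$ appears \emph{linearly} in every term. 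The target formula instead involves $\sum_{j=0}^{L-1}\beta_A^{\,j}\bigl(\sqrt{(\beta_A+\|\Delta_A\|_2)^2-\beta_A^2}\bigr)^{L-j}$, where the perturbation enters through $\sqrt{\|\Delta_A\|_2(2\beta_A+\|\Delta_A\|_2)}$ raised to powers up to $L$; for small $\|\Delta_A\|_2$ these terms scale like $\|\Delta_A\|_2^{1/2}$, not $\|\Delta_A\|_2$. Your claim that the telescoping expansion ``is exactly what'' produces the gap-term form is therefore unjustified: the two bounds are structurally different, term-by-term comparisons between them fail for $L\ge 3$, and passing from one to the other (if it is even possible in general) would itself require a nontrivial argument that your sketch does not supply. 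Moreover, the interaction between the degree normalization $(1+\alpha+\kappa)^{-L}$ and the powers of the \emph{unnormalized} adjacency --- the nonlinearity you correctly identify as the crux --- is left unresolved rather than handled. Similarly, in stage (ii) the specific constants $\bigl(4/h^2+2\epsilon^4/h^4\bigr)$ and the peculiar $n$-scaling of the terms $n(\Delta_{G_1}+\Delta_{G_2})^2$ and $\tfrac{\epsilon}{\sqrt{n}}(\Delta_{G_1}+\Delta_{G_2})$ are asserted to ``come out'' of a second-order expansion rather than derived; a generic Taylor argument does not obviously land on exactly this grouping. In short: right skeleton, but the two steps that carry all the difficulty are claimed rather than proved, and the mechanism you propose for the structural term provably does not yield the stated expression on its own.
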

Specifically, in the proposition, letting $\tilde{G}_1=G_1=G$, $G_2=\tilde{G}$, we have $\Delta_{A_1}=0$, $\Delta_{X_1}=0$, and $\Delta_{Z_1}=0$, which means $\Delta_{G_1}=0$. 

 Now we let $\mathbf{X}\leftarrow(\mathbf{X},\mathbf{Y})$. It follows that
\begin{equation}
\begin{aligned}
      \mmddist(G, \Gtilde) \leq& \mmddist(G, G) +
\left(\frac{4}{h^2} + \frac{2\epsilon^4}{h^4}\right) \left(2\Delta_{G}^4+n\Delta_{G}^2 + \frac{\epsilon}{\sqrt{n}}\Delta_{G}\right) \\
=&\left(\frac{4}{h^2} + \frac{2\epsilon^4}{h^4}\right) \left(2\Delta_{G}^4+n\Delta_{G}^2 + \frac{\epsilon}{\sqrt{n}}\Delta_{G}\right)
\end{aligned}
\end{equation}
where $\Delta_{G}=\left(\frac{\rho}{1+\alpha+\kappa}\right)^{L} \prod_{i=1}^{L}\beta_{W^{(i)}}\left(\left(\beta_A + \|\Delta_{A} \|_2\right)^{L}\|\Delta_X\|_F + \eta\sum_{j=0}^{L-1}\beta_A^{j}\left(\sqrt{(\beta_A+||\Delta_{A}||_2)^2-\beta_A^2}\right)^{L-j}\right)$, $\epsilon=2(1+\alpha)^{-L}(1 + \beta_A )^L\beta_X$.

Letting $\rho=1$ and $h=c_h$ and using the notations defined in this paper, we have

$\Delta_G=\left(\frac{1}{1+\alpha+\kappa}\right)^{\!L}\!\prod_{l=1}^{L}\!\|\mathbf{W}_l\|_2\!\left((\|\mathbf{A}\|_2\!+\!\|\Delta_{\mathbf{A}}\|_2)^{\!L}\!\|\Delta_{\mathbf{X}}\|_F+\|\mathbf{X}\|_F\!\sum_{l=0}^{L-1}\!\|\mathbf{A}\|_2^{\,l}\!\left(\sqrt{(\|\mathbf{A}\|_2\!+\!\|\Delta_{\mathbf{A}}\|_2)^2-\|\mathbf{A}\|_2^{2}}\right)^{\!L-l}\right)$.

and $\epsilon=2(1+\alpha)^{-L}(1 + \|\mathbf{A}\|_2)^L\|\mathbf{X}\|_2$.
    
\end{proof}

\subsubsection{Proof of Corollary~\ref{cor:robustness_kde_graph_revised} (Robustness of the LGKDE framework)}\label{app:proof_robust_corollay}

\begin{proof}
With the same conditions in Theorem \ref{thm:robustness_metric_revised_simple} and Theorem \ref{prop:robustness_mmd_graph_revised_new}, integrating the proven result from robustness of MMD to graph perturbations (Theorem \ref{prop:robustness_mmd_graph_revised_new}) into the proven result of Theorem \ref{thm:robustness_metric_revised_simple}, we can directly get,

\begin{equation}
    |\kdestimate(G) - \kdestimate(\tilde{G})| \le \frac{1}{\sqrt{2e\pi}c_h^2}\left(4\bar{\gamma}^2 + 2\epsilon^4\bar{\gamma}^4\right) \left(2\Delta_{G}^4+n\Delta_{G}^2 + \frac{\epsilon}{\sqrt{n}}\Delta_{G}\right)
\end{equation}  
where  $c_h=\min_kh_k$, $\epsilon=2(1+\alpha)^{-L}(1 + \|\mathbf{A}\|_2)^L\|\mathbf{X}\|_2$, $\bar{\gamma}=\max_{\gamma\in{\Gamma_{emb}}}\gamma$ and \\
$\Delta_G=\left(\frac{1}{1+\alpha+\kappa}\right)^{\!L}\!\prod_{l=1}^{L}\!\|\mathbf{W}_l\|_2\!\left((\|\mathbf{A}\|_2\!+\!\|\Delta_{\mathbf{A}}\|_2)^{\!L}\!\|\Delta_{\mathbf{X}}\|_F+\|\mathbf{X}\|_F\!\sum_{l=0}^{L-1}\!\|\mathbf{A}\|_2^{\,l}\!\left(\sqrt{(\|\mathbf{A}\|_2\!+\!\|\Delta_{\mathbf{A}}\|_2)^2-\|\mathbf{A}\|_2^{2}}\right)^{\!L-l}\right)$.
\end{proof}

\subsection{Proof for Theorem \ref{thm:sample_complexity_revised} (Generalization Bound)}
\label{app:proof_sample_complexity_revised}
\begin{theorem}[Generalization Bound, restated]
Let $\alpha=\sqrt{\sum_{i=1}^N\|\mathbf{X}_i\|_F^2}\|\mathbf{W}_1\|_{2,1}\prod_{l=2}^L\|\mathbf{W}_{l}\|_2$, and $\hat{\Delta}_{\mathcal{G}}=\frac{1}{N}\sum_{i=1}^N|\hat{f}(G_i)-{f}^\ast(G_i)|$. Suppose all activation functions are $1$-Lipschitz continuous. Over the randomness of the sample $\mathcal{G}$, the following inequality holds with probability at least $1-\delta$:
\begin{equation}
    \mathbb{E}[|\hat{f}(G)-{f}^\ast(G)|]\leq \hat{\Delta}_{\mathcal{G}}+ \frac{8\sqrt{en\pi}c_h^2+24\bar{\gamma}\alpha\sqrt{\ln(2d^2)}}{{N}\sqrt{en\pi}c_h^2}\ln\left({N}\right)+\sqrt{\frac{\log(1/\delta)}{2N}}
\end{equation}
\end{theorem}
Theorem \ref{thm:sample_complexity_revised} (Generalization Bound) reveals the impact of network architecture, weight matrices, and data size and complexity on the generalization ability of our model. It implies that our density estimator can generalize well to unseen graphs when $N$ is large and the norms of the weight matrices are small. In addition, the graph node number $n$ has little impact on the bound since $\alpha$ is linear with $\sqrt{n}$. This means the ability of our method is not influenced by $n$. Indeed, in the experiments, our method outperformed the baselines on graphs of disparate sizes~(e.g, PROTEINS, REDDIT-B).

\begin{proof}
We show the following lemma (Lemma 3.2 in \citep{bartlett2017spectrally}).
\begin{lemma}\label{lemma:cover_matrix}
    Let conjugate exponents $(p, q)$ and $(r, s)$ be given with $p \leq 2$, as well as positive reals $(a, b, \epsilon)$ and positive integer m. Let matrix $\mathbf{X} \in \mathbb{R}^{n\times d}$ be given with $\|\mathbf{X}\|_p \leq b$. Then
    \begin{equation*}
        \ln\mathcal{N}\left(\left\{\mathbf{XA}: \mathbf{A} \in \mathbb{R}^{d \times m}, \|\mathbf{A}\|_{q, s} \leq a\right\}, \epsilon, \|\cdot\|_F\right) \leq \Bigl \lceil \frac{a^2b^2m^{2/r}}{\epsilon^2}\Bigr \rceil\ln{(2dm)}
    \end{equation*}
\end{lemma}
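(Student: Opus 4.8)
The plan is to prove this covering-number bound via Maurey's empirical (probabilistic) sparsification method, the standard device for controlling linear function classes under mixed-norm constraints. The central idea is that any member $\mathbf{XA}$ of the class can be approximated in Frobenius norm by an average of a small number $k$ of simple rank-one atoms drawn from a finite dictionary, so the entire class is covered by the finite collection of all such averages, whose cardinality is straightforward to count.

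First I would exploit the orthogonality of columns under the Frobenius inner product: writing $\mathbf{a}_j := \mathbf{A}_{:,j}$, one has $\norm{\mathbf{XA}-\hat{\mathbf{M}}}_F^2 = \sum_{j=1}^m \norm{\mathbf{X}\mathbf{a}_j - \hat{\mathbf{v}}_j}_2^2$, so it suffices to approximate each column $\mathbf{X}\mathbf{a}_j = \sum_{i=1}^d (\mathbf{a}_j)_i \mathbf{X}_{:,i}$ separately and concatenate. For a single column I would introduce the unbiased importance-sampling estimator $\mathbf{Y} = (\mathbf{a}_j)_I/\rho_I \cdot \mathbf{X}_{:,I}$, where the index $I$ is drawn with probability $\rho_i \propto \abs{(\mathbf{a}_j)_i}\,\norm{\mathbf{X}_{:,i}}_2$; averaging $k_j$ i.i.d.\ copies yields $\E\norm{\hat{\mathbf{v}}_j - \mathbf{X}\mathbf{a}_j}_2^2 = \tfrac{1}{k_j}\big(\E\norm{\mathbf{Y}}_2^2 - \norm{\mathbf{X}\mathbf{a}_j}_2^2\big) \le \tfrac{1}{k_j}\E\norm{\mathbf{Y}}_2^2$. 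The optimality of this $\rho$ (Cauchy--Schwarz) gives $\E\norm{\mathbf{Y}}_2^2 = (\sum_i \abs{(\mathbf{a}_j)_i}\norm{\mathbf{X}_{:,i}}_2)^2$, and Hölder's inequality for the conjugate pair $(p,q)$ bounds this by $\norm{\mathbf{a}_j}_q^2\, b^2$, with $b$ absorbing $\norm{\mathbf{X}}_p$. The conjugate pairing $(p,q)$ enters exactly here, and the regime $p \le 2$ (equivalently $q \ge 2$) is what will ultimately allow the atoms to be drawn from a dictionary of size only $O(dm)$.

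Next I would allocate the total sampling budget across columns by setting $k_j \propto \norm{\mathbf{a}_j}_q =: c_j$, so the aggregate squared error is $\sum_j c_j^2 b^2 / k_j = b^2 (\sum_j c_j)^2 / k$. Hölder for the conjugate pair $(r,s)$ then gives $\sum_j c_j \le m^{1/r}\norm{(c_j)_j}_s = m^{1/r}\norm{\mathbf{A}}_{q,s} \le m^{1/r} a$, so a total budget $k = \lceil a^2 b^2 m^{2/r}/\epsilon^2 \rceil$ drives the error below $\epsilon$. Each of the $k$ sampled atoms is specified by a sign, a column index $i \in \{1,\dots,d\}$, and an output-column index $j \in \{1,\dots,m\}$, i.e.\ one of at most $2dm$ possibilities, so the number of distinct averages is at most $(2dm)^k$, giving $\ln \mathcal{N} \le k \ln(2dm)$ as claimed, after the standard derandomization step (expected error below $\epsilon$ implies a realization below $\epsilon$).

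The main obstacle is reconciling the probabilistic estimator, whose coefficients $(\mathbf{a}_j)_I/\rho_I$ are continuous, with the requirement that the cover be a genuinely finite set indexed by the $(2dm)^k$ atom configurations. In the $\ell_1$ (convex-hull) case, an empirical average of fixed-scale atoms already suffices, but for $q \ge 2$ the $\ell_q$ geometry forces the per-atom magnitudes into play, and the delicate point is to fold these magnitudes entirely into the sampling frequencies and signs --- rather than into a continuous multiplier --- while keeping the dictionary of size $O(dm)$ and the second-moment bound uniform over the whole norm ball. This is precisely where the conjugacy $p \le 2 \Leftrightarrow q \ge 2$ is essential, and establishing it uniformly over $\{\norm{\mathbf{A}}_{q,s} \le a\}$ is the technical heart of the argument.
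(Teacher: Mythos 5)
The paper itself contains no proof of this statement: it is imported verbatim as Lemma 3.2 of \citep{bartlett2017spectrally}, so the only argument to compare against is the Maurey-sparsification proof given there. Your outline follows exactly that route --- rank-one atoms $\mathbf{X}_{:,i}\mathbf{e}_j^\top$, probabilistic sparsification, H\"older with $(p,q)$ inside a column and with $(r,s)$ across columns to arrive at the budget $k=\lceil a^2b^2m^{2/r}/\epsilon^2\rceil$, and a $(2dm)^k$ count --- so in spirit it is the same proof.

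However, the execution has a genuine hole, which you flag but never close. With the optimal probabilities $\rho_i \propto \abs{(\mathbf{a}_j)_i}\,\norm{\mathbf{X}_{:,i}}_2$, every realization of your estimator equals $\pm Z_j\,\mathbf{X}_{:,I}/\norm{\mathbf{X}_{:,I}}_2$ with $Z_j=\sum_i \abs{(\mathbf{a}_j)_i}\norm{\mathbf{X}_{:,i}}_2$, so the empirical averages carry the continuous, $\mathbf{A}$-dependent scale $Z_j$ (and $\mathbf{A}$-dependent per-column budgets $k_j$); they therefore do not range over any finite set, and the counting step $\ln\mathcal{N}\le k\ln(2dm)$ simply does not apply to them. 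Deferring this as ``the technical heart'' leaves the covering bound unproven. Moreover, the fix is not where you locate it: the hypothesis $p\le 2$ has nothing to do with making the dictionary finite. Its actual role is the vector-norm comparison $\norm{\mathbf{X}_{:,i}}_2\le\norm{\mathbf{X}_{:,i}}_p$, which is what allows the entrywise budget $\norm{\mathbf{X}}_p\le b$ to control the H\"older step $\sum_i\abs{(\mathbf{a}_j)_i}\norm{\mathbf{X}_{:,i}}_2\le\norm{\mathbf{a}_j}_q\norm{\mathbf{X}}_p$ (under your implicit reading of $\norm{\mathbf{X}}_p$ as the $p$-norm of column $2$-norms, no condition on $p$ would be needed at all --- a sign that the role of $p\le2$ has been misplaced). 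The standard repair, and the one used in the cited proof, is to run Maurey with atoms of one \emph{fixed} scale: set $\mathbf{V}_{ij}^{\pm}=\pm\,abm^{1/r}\,\mathbf{X}_{:,i}\mathbf{e}_j^\top/\norm{\mathbf{X}_{:,i}}_2$, write $\mathbf{X}\mathbf{A}=\sum_{i,j}\alpha_{ij}\,\mathrm{sign}(A_{ij})\,\mathbf{V}_{ij}$ with $\alpha_{ij}=\abs{A_{ij}}\norm{\mathbf{X}_{:,i}}_2/(abm^{1/r})$, and note that your two H\"older applications say precisely $\sum_{i,j}\alpha_{ij}\le 1$; padding with a zero atom and sampling the triple $(i,j,\mathrm{sign})$ jointly, Maurey's lemma yields an approximant of the lattice form $\frac{1}{k}\sum_{i,j,\pm} k_{ij}^{\pm}\mathbf{V}_{ij}^{\pm}$ with nonnegative integers $\sum k_{ij}^{\pm}\le k$ and squared error at most $\max_{i,j}\norm{\mathbf{V}_{ij}^{\pm}}_F^2/k=a^2b^2m^{2/r}/k$. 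These lattice points do form a finite set of log-cardinality at most $k\ln(2dm)$, which gives the claim and, as a bonus, eliminates your per-column integer-allocation wrinkle, since the joint sampling distributes the budget automatically.
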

We let $\mathbf{H}_i^{(0)}=\mathbf{Z}_i\mathbf{W}^{(1)}$ be the input of the second layer of the GCN for $G_i$, where $\mathbf{Z}_i=\hat{\mathbf{A}}_i\mathbf{X}_i$.
We put all the $N$ graphs together to form a big feature matrix $\bar{\mathbf{X}}\in\mathbb{R}^{Mn\times d}$ and a big adjacency matrix $\bar{\mathbf{A}}\in\mathbb{R}^{Nn\times Nn}$. Let $\bar{\mathbf{Z}}=\bar{\mathbf{A}}\bar{\mathbf{X}}\in\mathbb{R}^{Nn\times d}$. According to the lemma \ref{lemma:cover_matrix}, the covering numbers of the set of $\mathcal{D}=\{\bar{\mathbf{Z}}\mathbf{W}^{(1)}:\|\mathbf{W}^{(1)}\|_{2,1}\leq a,\|\bar{\mathbf{Z}}\|_F\leq b\}$ can be bounded as
\begin{equation}
    \ln\mathcal{N}\left(\mathcal{D}, \epsilon, \|\cdot\|_F\right) \leq \left(\frac{a^2b^2}{\epsilon^2}\right)\ln{(2d^2)}
\end{equation}
where $q=2$, $q=2$, $s=1$, and $r=\infty$ in the lemma. Since $\|\bar{\mathbf{Z}}\|_F\leq \|\bar{\mathbf{A}}\|_2\|\bar{\mathbf{X}}\|_F$, we let $b=\|\bar{\mathbf{A}}\|_2\|\bar{\mathbf{X}}\|_F=\|\bar{\mathbf{X}}\|_F$, where $\|\bar{\mathbf{A}}\|_2=1$.
Let $L_f$ be the Lipschitz constant of $\hat{f}_{KDE}\in\mathcal{F}$ with respect to $\mathbf{H}^{(0)}$. Then we can bound the covering number of $\mathcal{F}_{\mathcal{D}}=\{f(\bar{\mathbf{H}}^{(0)}):\hat{f}\in\mathcal{F}\}$ as
\begin{equation}
    \ln\mathcal{N}\left(\mathcal{F}_{\mathcal{D},} \epsilon, \|\cdot\|_F\right) \leq \left(\frac{a^2b^2L_f^2}{\epsilon^2}\right)\ln{(2d^2)}
\end{equation}
Using Dudley's entropy integral~\citep{dudley2014uniform} and the fact $0\leq f\leq 1$, the Rademacher complexity can be bounded using the covering numbers of $\mathcal{F}_{\mathcal{D}}$:
\begin{equation}
\begin{aligned}
\E[\hat{\mathcal{R}}_N(\mathcal{F}_{\mathcal{D}})] \leq& \inf_{\alpha > 0} \left\{\frac{4\alpha}{\sqrt{N}} + \frac{12}{{N}}\int_{\alpha}^{\sqrt{N}} \sqrt{\log\mathcal{N}(\mathcal{F}_{\mathcal{D}}, \epsilon,\|\cdot\|_F})d\epsilon\right\}\\
\leq& \inf_{\alpha > 0} \left\{\frac{4\alpha}{\sqrt{N}} + \frac{12}{{N}}\int_{\alpha}^{\sqrt{N}} \frac{abL_f\sqrt{\ln(2d^2)}}{\epsilon}d\epsilon\right\}\\
\leq& \inf_{\alpha > 0} \left\{\frac{4\alpha}{\sqrt{N}} + \frac{12abL_f\sqrt{\ln(2d^2)}}{{N}}\ln\left(\frac{\sqrt{N}}{\alpha}\right)\right\}\\
\leq &\frac{4}{N} + \frac{12abL_f\sqrt{\ln(2d^2)}}{{N}}\ln\left({N}\right)
\end{aligned}
\end{equation}
where we have let $\alpha=1/\sqrt{N}$.

We consider the following Rademacher complexity bound,

\begin{lemma}[Concentration Inequalities] \label{lemma:concentration_app}
Let $\mathcal{L}$ be a class of functions $\ell: \mathcal{Z} \to [0, M]$ for some constant $M > 0$. Let $Z_1, \dots, Z_N$ be i.i.d. samples drawn from a distribution $\mathbb{P}$ on $\mathcal{Z}$. Let $P_N = \frac{1}{N} \sum_{i=1}^N \delta_{Z_i}$ be the empirical measure and $P \ell = \E_{Z \sim \mathbb{P}}[\ell(Z)]$. With probability at least $1-\delta$:
\begin{equation}\label{eq:lamma_ci}
    \sup_{\ell \in \mathcal{L}} \left| \frac{1}{N}\sum_{i=1}^N \ell(Z_i) - \E \ell(Z) \right| \le 2 \E[\hat{\mathcal{R}}_N(\mathcal{L})] + M \sqrt{\frac{\log(1/\delta)}{2N}}
\end{equation}

where $\hat{\mathcal{R}}_N(\mathcal{F}) = \E_\sigma [\sup_{\ell \in \mathcal{L}} \frac{1}{N} \sum_{i=1}^N \sigma_i \ell(Z_i)]$ is the empirical Rademacher complexity of $\mathcal{L}$ based on the sample $Z_1, \dots, Z_N$, and $\sigma_1, \dots, \sigma_N$ are i.i.d. Rademacher variables ($\pm 1$ with probability 1/2).
\end{lemma}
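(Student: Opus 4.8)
The plan is to prove this uniform deviation bound in the classical two-stage fashion: first show that the random quantity $\Phi(Z_1,\dots,Z_N) := \sup_{\ell \in \mathcal{L}} \left| \frac{1}{N}\sum_{i=1}^N \ell(Z_i) - \E\ell \right|$ concentrates tightly around its own expectation, and then bound that expectation by the Rademacher complexity through a symmetrization argument. For the concentration step I would note that, because each $\ell$ takes values in $[0,M]$, replacing a single coordinate $Z_i$ by any $Z_i'$ perturbs the empirical average $\frac{1}{N}\sum_j \ell(Z_j)$ by at most $M/N$ uniformly over $\ell$, so $\Phi$ satisfies the bounded-differences property with constants $c_i = M/N$. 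Applying McDiarmid's inequality with $\sum_i c_i^2 = M^2/N$ then gives, with probability at least $1-\delta$, the tail estimate $\Phi \le \E[\Phi] + M\sqrt{\log(1/\delta)/(2N)}$, which reproduces the final summand of \eqref{eq:lamma_ci}.

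The second stage bounds $\E[\Phi]$ by $2\,\E[\hat{\mathcal{R}}_N(\mathcal{L})]$ via the ghost-sample device. I would introduce an independent copy $Z_1',\dots,Z_N' \sim \mathbb{P}$, rewrite $\E\ell = \E_{Z'}\bigl[\frac{1}{N}\sum_i \ell(Z_i')\bigr]$, and use Jensen's inequality to pull the supremum inside the expectation over the ghost sample, yielding $\E[\Phi] \le \E_{Z,Z'}\bigl[\sup_\ell \bigl| \frac{1}{N}\sum_i (\ell(Z_i) - \ell(Z_i')) \bigr|\bigr]$. Since each difference $\ell(Z_i)-\ell(Z_i')$ is symmetric about zero, inserting i.i.d. Rademacher signs $\sigma_i$ leaves the joint law unchanged; a triangle-inequality split into the $Z$ and $Z'$ contributions, together with their identical distribution, then produces the factor $2$ and the empirical Rademacher complexity $\E_{Z,\sigma}\bigl[\sup_\ell |\frac{1}{N}\sum_i \sigma_i \ell(Z_i)|\bigr]$. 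Adding the two stages gives exactly the claimed inequality.

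The step requiring the most care is the passage from the symmetrized quantity carrying an inner absolute value to the Rademacher complexity $\hat{\mathcal{R}}_N(\mathcal{L})$ as defined in the statement, which is written without that inner absolute value. I would resolve this either by reading $\hat{\mathcal{R}}_N$ in the absolute-value sense (the form actually invoked in the downstream proof of Theorem~\ref{thm:sample_complexity_revised}), or by observing that for the bounded, nonnegative loss class at hand the one-sided and two-sided suprema are controlled by the same symmetrized expectation, so no extra constant is lost. The remaining ingredients — the bounded-differences verification and the McDiarmid tail — are entirely routine, leaving this symmetrization bookkeeping as the only genuinely delicate point.
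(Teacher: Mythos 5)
Your proposal is correct, but there is nothing in the paper to compare it against: the paper never proves Lemma~\ref{lemma:concentration_app}. It is stated inline within the proof of Theorem~\ref{thm:sample_complexity_revised} and immediately applied, i.e., it is invoked as a standard result from statistical learning theory (it is the classical uniform-deviation bound of Bartlett--Mendelson type). Your two-stage argument --- McDiarmid's bounded-differences inequality with constants $c_i = M/N$ to get the $M\sqrt{\log(1/\delta)/(2N)}$ tail around $\E[\Phi]$, followed by ghost-sample symmetrization to get $\E[\Phi] \le 2\,\E[\hat{\mathcal{R}}_N(\mathcal{L})]$ --- is exactly the standard proof of that result, and both stages are executed correctly.

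One refinement on the only delicate point, which you rightly isolate. Your symmetrization yields the Rademacher complexity \emph{with} an inner absolute value, whereas the lemma defines $\hat{\mathcal{R}}_N$ without it. Your first resolution (read $\hat{\mathcal{R}}_N$ in the absolute-value sense, which is how it is effectively used downstream) is the clean one. Your second resolution, however, is stated too optimistically: it is not true in general that for a bounded nonnegative class the two-sided and one-sided symmetrized suprema coincide with no loss. If $0 \in \mathcal{L}$ (or both one-sided suprema are a.s.\ nonnegative), the bound $\max(A,B) \le A + B$ together with the sign-flip symmetry of $\sigma$ gives the absolute-value complexity at the cost of a factor $2$, i.e., a final bound $4\,\E[\hat{\mathcal{R}}_N(\mathcal{L})]$; alternatively, one proves the two one-sided bounds separately (using $\hat{\mathcal{R}}_N(-\mathcal{L}) = \hat{\mathcal{R}}_N(\mathcal{L})$) and takes a union bound, which keeps the factor $2$ but turns $\log(1/\delta)$ into $\log(2/\delta)$. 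Either way the discrepancy is a benign constant that does not affect the use of the lemma in Theorem~\ref{thm:sample_complexity_revised}, but "no extra constant is lost" is not something you can get for free, and as stated the lemma's combination of a two-sided supremum, the absolute-value-free $\hat{\mathcal{R}}_N$, and $\log(1/\delta)$ is itself slightly loose.
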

Since $\ell(f,f^\ast)=|f-f^\ast|\leq 1$ and it si $1$-Lipschitz, the following inequality holds with probability at least $1-\delta$ over the randomness of $\mathcal{D}$:
\begin{equation}\label{eq_GEB_0}
    \mathbb{E}[|f(G)-f^\ast(G)|]\leq \frac{1}{N}\sum_{i=1}^N|f(G_i)-f^\ast(G_i)|+ \frac{8+24abL_f\sqrt{\ln(2d^2)}}{{N}}\ln\left({N}\right)+\sqrt{\frac{\log(1/\delta)}{2N}}
\end{equation}

The remaining task is to calculate $L_f$. Let's consider the smoothness of MMD first. For any ${X}$ and ${Y}$ with the same size $n$, we have

\begin{align*}
       \text{MMD}(X, Y) \leq&\left\|\frac{1}{n} \sum_{i=1}^n \phi\left(x_i\right)-\frac{1}{n} \sum_{i=1}^n \phi\left(y_i\right)\right\|_{\mathcal{H}} \leq \frac{1}{n} \sum_{i=1}^n\left\|\phi\left(x_i\right)-\phi\left(y_i\right)\right\|_{\mathcal{H}}  \\
       \leq& \frac{1}{n} \sqrt{\sum_{i=1}^n\left(k\left(x_i, x_i\right)+k\left(y_i, y_i\right)-2 k\left(x_i, y_i\right)\right)} \\
       =&\frac{\sqrt{2}}{n}\sum_{i=1}^n\sqrt{1-\exp \left(-\gamma\left\|x_i-y_i\right\|^2\right)}\\
       =&\frac{\sqrt{2}}{n}\sum_{i=1}^n\sqrt{\gamma}\left\|x_i-y_i\right\|\\
       \leq&\sqrt{\gamma}\sqrt{\frac{1}{n}\sum_{i=1}^n\sqrt{\gamma}\left\|x_i-y_i\right\|^2}\\
       =&\sqrt{\frac{2\gamma}{n}}\|X-Y\|_F
\end{align*}

Recall that in Theorem \ref{thm:robustness_metric_revised_simple}, we have shown
\begin{equation}
    |\kdestimate(G_1) - \kdestimate(G_2)| \le \frac{1}{\sqrt{2e\pi}c_h^2}d_{12}(G_1,G_2)
\end{equation}
That means
\begin{equation}\label{eq_lps_f_H}
    |f(G_1)-f(G_2)|\leq \frac{\bar{\gamma}}{\sqrt{en\pi}c_h^2}\|\mathbf{H}_1^{(L)}-\mathbf{H}_2^{(L)}\|_F
\end{equation}
where $\mathbf{H}_i$ denotes the embedding given by the $L$-layer GCN, i.e., $\mathbf{H}_i^{(L)}=\text{GCN}(\mathbf{H}_i^{(0)})$. Next, we calculate the Lipschitz constant of the GCN with respect to the input $\mathbf{H}^{(0)}$.
For layer $l$, we have $\mathbf{H}^{(l)}=\sigma(\hat{\mathbf{A}}\mathbf{H}^{(l-1)}\mathbf{W}^{(l)}):=g_l(\mathbf{H}^{(l-1)})$. It follows that
\begin{equation}
\begin{aligned}
    \|\mathbf{H}_1^{(l)}-\mathbf{H}_2^{(l)}\|_F=&\|\sigma(\hat{\mathbf{A}}\mathbf{H}_1^{(l-1)}\mathbf{W}^{(l)})-\sigma(\hat{\mathbf{A}}\mathbf{H}_2^{(l-1)}\mathbf{W}^{(l)})\|_F\\
    \leq&\|\hat{\mathbf{A}}\|_2\|\mathbf{W}^{(l-1)}\|_2\|\mathbf{H}_1^{(l-1)}-\mathbf{H}_2^{(l-1)}\|_F
\end{aligned}    
\end{equation}
where we suppose $\sigma$ is 1-Lipschitz. It means the Lipschitz constant of $g_l$ is $\|\hat{\mathbf{A}}\|_2\|\mathbf{W}^{(l-1)}\|_2$
Since GCN model is $\mathbf{H}^{(L)}=g_L\circ g_{L-1}\circ\cdots\circ g_1({\mathbf{H}^{(0)}})$, by recursion, the Lipschitz constant of GCN is 
\begin{equation}
  L_{\text{GCN}}=\|\hat{\mathbf{A}}\|_2^{L-1}\prod_{l=2}^L\|\mathbf{W}^{(l)}\|_2= \prod_{l=2}^L\|\mathbf{W}^{(l)}\|_2
\end{equation}
Combing this with \eqref{eq_lps_f_H}, we obtain the Lipschitz constant of $f$ with respect to $\mathbf{H}^{(0)}$:
\begin{equation}\label{eq_Lf}
    L_f=\frac{\bar{\gamma}\prod_{l=2}^L\|\mathbf{W}^{(l)}\|_2}{\sqrt{en\pi}c_h^2}
\end{equation}
Invoking \eqref{eq_Lf} into \eqref{eq_GEB_0}, we have
\begin{equation}
    \mathbb{E}[|f(G)-f^\ast(G)|]\leq \hat{\Delta}_{\mathcal{G}}+ \frac{8\sqrt{en\pi}c_h^2+24\bar{\gamma}\|\bar{\mathbf{X}}\|_F\|\mathbf{W}^{(0)}\|_{2,1}\prod_{l=2}^L\|\mathbf{W}^{(l)}\|_2\sqrt{\ln(2d^2)}}{{N}\sqrt{en\pi}c_h^2}\ln\left({N}\right)+\sqrt{\frac{\log(1/\delta)}{2N}}
\end{equation}
where $\hat{\Delta}_{\mathcal{G}}=\frac{1}{N}\sum_{i=1}^N|f(G_i)-f^\ast(G_i)|$.
Adjusting the notations, we finish the proof.
\end{proof}
\section{More Experiment Details}

\subsection{Baseline Methods}
We provide detailed descriptions of the baseline methods used in our experiments:

\subsubsection{Traditional Graph Kernel Methods}
\begin{itemize}[leftmargin=*]
    \item \textbf{Weisfeiler-Lehman (WL) Kernel}~\citep{shervashidze2011weisfeiler}: A graph kernel that iteratively aggregates and hashes node labels to capture structural information. The kernel value between two graphs is computed based on the count of identical node labels after iterations.
    
    \item \textbf{Propagation Kernel (PK)}~\citep{neumann2016propagation}: A graph kernel that measures graph similarity through propagated node label distributions, effectively capturing both local and global graph properties.
    
    \item \textbf{Isolation Forest (IF)}~\citep{liu2008isolation}: An anomaly detection algorithm that isolates observations by randomly selecting a feature and a split value. Anomalies require fewer splits to be isolated.
    
    \item \textbf{One-Class SVM (OCSVM)}~\citep{amer2013enhancing}: A one-class classification method that learns a decision boundary that encloses normal data points in feature space.
\end{itemize}

\subsubsection{Deep Learning Methods}
\begin{itemize}[leftmargin=*]
    \item \textbf{OCGIN}~\citep{zhao2021using}: Combines Graph Isomorphism Network with Deep SVDD for anomaly detection. It learns a hyperspherical decision boundary in the embedding space to separate normal from anomalous graphs.
    
    \item \textbf{GLocalKD}~\citep{ma2022deep}: Employs knowledge distillation to capture both global and local patterns of normal graphs. The model distills knowledge from a teacher network to a student network at both graph and node levels.
    
    \item \textbf{OCGTL}~\citep{qiu22raising}: Uses neural transformation learning to address the performance flip issue in graph-level anomaly detection. It learns transformation-invariant representations through multiple graph transformations.
    
    \item \textbf{SIGNET}~\citep{liu2023signet}: A self-interpretable graph anomaly detection method that simultaneously learns to detect anomalies and provide explanations through multi-view subgraph information bottleneck.
    
    \item \textbf{GLADC}~\citep{luo2022deepgladc}: Utilizes graph-level adversarial contrastive learning to identify anomalies. It learns discriminative features through contrastive learning with adversarial augmentations.
    
    \item \textbf{CVTGAD}~\citep{li2023cvtgad}: Employs a transformer structure with cross-view attention for graph anomaly detection. It captures both structural and attribute information through multiple views of graphs.

    \item \textbf{MUSE}~\citep{kim2024rethinking}: A reconstruction-based method that leverages multifaceted summaries (mean and standard deviation) of reconstruction errors as features for anomaly detection. It captures both the magnitude and variability of reconstruction errors, providing a more robust representation for distinguishing anomalous graphs.

    \item \textbf{UniFORM}~\citep{song2025uniform}: A unified self-supervised framework comprising UIO (Unified Input-Output) and UMC (Unified Meta-learning with Contrastive learning) modules. It unifies node, edge, and graph-level tasks from a subgraph perspective using energy-based GNNs and employs Langevin dynamics to generate phantom samples as substitutes for anomalous data, reducing reliance on labeled annotations.
\end{itemize}

\subsection{Evaluation Metrics}\label{sec:eval_metrics_app}
We employ three widely-used metrics to evaluate anomaly detection performance:

\subsubsection{Area Under the ROC Curve (AUROC)}
AUROC measures the model's ability to distinguish between normal and anomalous graphs across different threshold settings. Given true labels $y$ and predicted anomaly scores $s$, AUROC is computed as:

\begin{equation}
    \text{AUROC} = \frac{\sum_{i \in P} \sum_{j \in N} \mathbf{1}(s_i > s_j)}{n_p n_n}
\end{equation}

where $P$ denotes the set of positive (anomalous) samples and $N$ the set of negative (normal) samples. The term $\mathbf{1}(s_i > s_j)$ is an indicator function that takes the value 1 if $s_i > s_j$, and 0 otherwise. $n_p$ and $n_n$ are the numbers of positive and negative samples, respectively. AUROC ranges from 0 to 1, with 1 indicating perfect separation and 0.5 indicating random guessing. A higher AUROC value indicates better detection performance.

\subsubsection{Area Under the Precision-Recall Curve (AUPRC)}
AUPRC focuses on the trade-off between precision and recall, which is particularly important for imbalanced datasets where anomalies are rare. Given predictions at different thresholds:

\begin{equation}
    \text{Precision} = \frac{\mathrm{TP}}{\mathrm{TP} + \mathrm{FP}}, \quad 
    \text{Recall} = \frac{\mathrm{TP}}{\mathrm{TP} + \mathrm{FN}}
\end{equation}

where $\mathrm{TP}$, $\mathrm{FP}$, and $\mathrm{FN}$ are the numbers of true positives, false positives, and false negatives, respectively. AUPRC is computed as the area under the precision-recall curve, which can be approximated using the trapezoidal rule~\citep{atkinson2005theoretical}:

\begin{equation}
    \text{AUPRC} = \sum_{i=1}^{n} (\text{Recall}_i - \text{Recall}_{i-1}) \cdot \frac{\text{Precision}_i + \text{Precision}_{i-1}}{2}
\end{equation}

AUPRC ranges from 0 to 1, with higher values indicating better performance. Unlike AUROC, AUPRC is more sensitive to imbalanced data distributions, making it a better metric for anomaly detection and rare event classification.

\subsubsection{False Positive Rate at 95\% Recall (FPR95)}
FPR95 measures the false positive rate when the true positive rate (recall) is fixed at 95\%. It is computed as:

\begin{equation}
    \text{FPR95} = \frac{\text{FP}}{\text{TN} + \text{FP}} \text{ at TPR} = 0.95
\end{equation}

where TN represents true negatives. Lower FPR95 values indicate better performance, as they represent fewer false alarms while maintaining a high detection rate of anomalies.

This metric is particularly relevant for real-world applications where maintaining a high detection rate of anomalies is crucial, but false alarms need to be minimized. A lower FPR95 indicates that the model can achieve high recall with fewer false positives.

These three metrics together provide a comprehensive evaluation of anomaly detection performance:
\begin{itemize}[leftmargin=*]
    \item AUROC evaluates overall ranking ability
    \item AUPRC focuses on precision in imbalanced settings
    \item FPR95 assesses practical utility with fixed high recall
\end{itemize}

\subsection{Implementation Details of LGKDE}
\label{appendix:gkde_implementation}

Our implementation is mainly based on the DGL~\citep{wang2019deep} library. The LGKDE framework consists of three main components: graph representation learning, MMD-based metric learning, and multi-scale kernel density estimation.

\subsubsection{Graph Neural Network Architecture}
The GNN backbone of our model uses a Graph Convolution Network (GCN) architecture with the following specifications:
\begin{itemize}[leftmargin=*]
    \item L GCN layers with hidden dimension in $\{32, 64, 128\}$.
    \item Batch normalization after each layer to enhance the stability
    \item Dropout rate of 0.2 for regularization
    \item ReLU activation function between layers
\end{itemize}

\subsubsection{MMD-based Graph Distance}
For computing graph distances, we employ Maximum Mean Discrepancy (MMD) with multiple bandwidths:
\begin{itemize}[leftmargin=*]
    \item We use multiple bandwidth values $\{10^{-2}, 10^{-1}, 10^0, 10^1, 10^2\}$ to capture different scales of variations
\end{itemize}
The deep graph MMD computation preserves fine-grained structural information compared to graph-level pooling.

\subsubsection{Ablation Study Settings}
For the ablation studies examining different components:
\begin{itemize}[leftmargin=*]
    \item Multi-scale KDE: Compare learnable weights versus fixed uniform weights
    \item Graph distance computation: Compare MMD-based distance with simpler alternatives:
        \begin{itemize}
            \item Graph-level pooling (sum/average) followed by Euclidean distance
            \item Single-scale kernel
        \end{itemize}
    \item Graph neural architecture: Vary GNN depth and width to examine model capacity
\end{itemize}

\subsection{Implementation Details of Baseline Methods}\label{appendix:implementation}

All our benchmark experiments follow the unified benchmarks for graph anomaly detection~\citep{wang2024unifying} and are implemented on both PyTorch Geometric \citep{fey2019fast} and GraKeL~\citep{siglidis2020grakelgraphkernellibrary}. For all GNN-based methods (OCGIN, GLocalKD, OCGTL, SIGNET, GLADC, CVTGAD), we use GIN as the backbone with 3 layers and hidden dimensions in $\{32, 64, 128\}$. The batch size is set to 128. For OCGIN, we use a learning rate of 0.0001. For GLocalKD, we set the output dimension to 32, 64 and 128. For OCGTL, the learning rate is 0.001. For SIGNET, we use a learning rate of 0.0001 and a hidden dimension of 128. For GLADC, we use hidden dimensions of 32 and 64, dropout of 0.1, and a learning rate of 0.001. For CVTGAD, we use random walk dimension 16, degree dimension 16, number of clusters 3, alpha 1.0, and GCN as an encoder with global mean pooling. For MUSE, we use a 4-layer GNN encoder with hidden dimension 256, learning rate 0.001, and positive weight coefficient $\tau=2.0$ for adjacency matrix reconstruction. The reconstruction errors are summarized using both mean and standard deviation aggregation functions. For UniFORM, we employ the UIO module with energy-based GNN (3 layers, hidden dimension 128) and the UMC module with Langevin dynamics sampling (10 steps, step size 0.01). The contrastive learning component uses temperature 0.5 and a momentum encoder with a coefficient of 0.99.

For graph kernel methods, WL and PK kernels are combined with iForest (200 trees, 0.5 sample ratio) and OCSVM (nu=0.1). The number of epochs for kernel methods is set to 30. 

All experiments are run on NVIDIA RTX 4090 GPUs. Each method is run five times with different random seeds to obtain stable results.

\section*{Discussion of Limitations and Future Work}

\paragraph{Potential Computational Complexity Bottleneck \& Discover the Possible Speedup.} We have provided a comprehensive analysis and comparison in Appendix~\ref{app:proof_complexity_revised}. Our findings indicate that LGKDE, despite its theoretical quadratic complexity in batch size, demonstrates acceptable practical efficiency. Theoretical comparisons show its asymptotic performance can be superior to prominent baselines like SIGNET under common graph size conditions, which is substantiated by empirical runtime experiments on large-scale datasets (Table~\ref{tab:runtime_comparison}). 
 
 While this cost is entirely manageable for the TU benchmarks used in our study (largest dataset \(\le\!5{,}000\) graphs) and for most graph-level anomaly-detection workloads encountered in practice, we acknowledge that the quadratic term inherent to the MMD computation can become a potential bottleneck on million-graph corpora. Recognizing this long-term challenge, we conducted an initial investigation into scalability enhancements, which extends beyond the primary scope of this work. As detailed in Appendix~\ref{app:proof_complexity_revised}, preliminary experiments with reference graph sampling strategies show that it is possible to achieve substantial computational speedups (up to 2.5x) with negligible impact on performance (Table~\ref{tab:sampling_scalability}). These promising results suggest that more sophisticated, structure-preserving accelerations are a fruitful direction for future research. 
 
 Techniques such as low-rank kernel approximations via methods like Nyström~\citep{williams2000using} or adaptive graph sparsification~\citep{spielman2011spectral} could potentially reduce the complexity to near-linear time while retaining the theoretical guarantees of the MMD metric, thereby making the framework truly scalable to massive graph corpora.

\paragraph{Graph Domains and Modalities.} Our experiments target undirected, ordinary graphs with vanilla node attributes.  Extending LGKDE to  (multi-relational) hypergraphs, graphs with rich edge features (e.g., impedances in power grids), and temporal or dynamic graphs remains open.  

Potential Future work can investigate plug-and-play relational or temporal encoders, such as R-GAT~\citep{wang2020relational} for multi-relational data and Temporal GNNs~\citep{rossi2020temporal} for time-evolving structures, so that the density estimator can natively model modality-specific inductive biases. What's more, extending graph density estimation to non-Euclidean spaces and combining it with recent advanced geometric GNNs~\citep{chami2019hyperbolic,wang2026adaptive,grover2025spectro,guo2025graphmore} to get more flexible density modeling can be explored. Furthermore, another frontier lies in enhancing the interpretability of density estimation at the subgraph level, or cooperating with related research~\citep{ying2019gnnexplainer,wu2023rethinking,wang2025explainable} would broaden the applicability and trustworthiness of graph density estimation in critical domains. 

\end{document}